\newtheorem{theorem}{Theorem}[section]
\newtheorem{lemma}[theorem]{Lemma}
\newtheorem{assumption}[theorem]{Assumption}
\let\AND\relax 
\def \teta {\tilde{\eta}} 
\def \bm {\mathbf}
\def \x {\mathbf{x}}
\def \w {\mathbf{w}}
\def \wb {\bar{\mathbf{w}}} 
\def \z {\mathbf{z}}
\def \I {\mathbb{I}}
\def \S{\mathcal{S}}
\def \D{\mathcal{D}}
\def \H{\mathcal{H}}
\def \B{\mathcal{B}}
\def \G {\mathcal{G}}
\def \tiL {\tilde{L}}
\def\eqref#1{equation~\ref{#1}}
\def\1{\bm{1}}
\def\vv{{\bm{v}}}
\DeclareMathAlphabet{\mathsfit}{\encodingdefault}{\sfdefault}{m}{sl}
\SetMathAlphabet{\mathsfit}{bold}{\encodingdefault}{\sfdefault}{bx}{n}
\newcommand{\E}{\mathbb{E}}
\newcommand{\R}{\mathbb{R}}
\def \vv {\mathbf{v}}
\title{Communication-Efficient Federated AUC Maximization with Cyclic Client Participation }
\author{\name Umesh Vangapally$^*$  \email Z2008841@students.niu.edu\\
      \AND
      \name Wenhan Wu$^\dagger$ \email wwu25@uncc.edu\\
      \AND
      \name Chen Chen$^\ddagger$ \email chen.chen@crcv.ucf.edu\\
      \AND
      \name Zhishuai Guo$^*$ \email zguo@niu.edu\\
      \addr $^*$Department of Computer Science, Northern Illinois University \\ 
      \addr $^\dagger$Department of Computer Science, University of North Carolina at Charlotte\\
      \addr $^\ddagger$Center for Research in Computer Vision, University of Central Florida 
      } 
\begin{document}

\maketitle

\begin{abstract} 
Federated AUC maximization is a powerful approach for learning from imbalanced data in federated learning (FL). However, existing methods typically assume full client availability, which is rarely practical. In real-world FL systems, clients often participate in a cyclic manner: joining training according to a fixed, repeating schedule. This setting poses unique optimization challenges for the non-decomposable AUC objective. 
This paper addresses these challenges by developing and analyzing communication-efficient algorithms for federated AUC maximization under cyclic client participation. We investigate two key settings:
First, we study AUC maximization with a squared surrogate loss, which reformulates the problem as a nonconvex-strongly-concave minimax optimization. By leveraging the Polyak-Łojasiewicz (PL) condition, we establish a state-of-the-art communication complexity of $\widetilde{O}(1/\epsilon^{1/2})$ and iteration complexity of $\widetilde{O}(1/\epsilon)$. 
Second, we consider general pairwise AUC losses. We establish a communication complexity of $O(1/\epsilon^3)$ and an iteration complexity of $O(1/\epsilon^4)$. Further, under the PL condition, these bounds improve to communication complexity of $\widetilde{O}(1/\epsilon^{1/2})$ and iteration complexity of $\widetilde{O}(1/\epsilon)$.
Extensive experiments on benchmark tasks in image classification, medical imaging, and fraud detection demonstrate the superior efficiency and effectiveness of our proposed methods.        
\end{abstract} 

\section{Introduction}  
FL enables collaborative model training without centralizing raw data, making it invaluable for privacy-sensitive domains like healthcare, finance and mobile computing \citep{pati2022federated,mcmahan2017communication,konevcny2016federated,hard2018federated,kairouz2021advances}. However, most FL research focuses on Empirical Risk Minimization (ERM) \citep{yu2019parallel, stich2018local,mohri2019agnostic}, which is ill-suited for imbalanced data, as it minimizes aggregate surrogate losses that do not directly address model bias toward majority classes \citep{elkan2001foundations}. In contrast, directly optimizing metrics such as the Area Under the ROC Curve (AUC) has been shown to better capture performance under class imbalance \citep{ying2016stochastic,cortes2003auc,rakotomamonjy2004optimizing,yuan2021robust}.

Building on prior work \citep{gao2013one, ying2016stochastic, zhao2011online, DBLP:conf/icml/KotlowskiDH11, DBLP:conf/ijcai/GaoZ15, DBLP:conf/pkdd/CaldersJ07, DBLP:conf/icml/CharoenphakdeeL19}, deep AUC maximization problem can be formulated as:
\begin{equation}
\min\limits_{\w\in\mathbb{R}^d} \E_{\z \in \S_+} \E_{\z' \in \S_-} \psi(h(\w;\z), h(\w;\z')).
\label{auc_general}
\end{equation}
where $\S_+$ and $\S_-$ denote the sets of positive and negative samples, respectively. The function $\psi(\cdot, \cdot)$ is a surrogate loss, and $h(\w; \z)$ represents the prediction score of input data $\z$ by a deep neural network parameterized by $\w$.  
Recent work has extended AUC maximization to the federated setting \citep{guo2020communication,yuan2021federated,guo2023fedxl}.
\citet{guo2020communication,yuan2021federated} studied AUC maximization with a squared surrogate loss, reformulating the problem as a minimax optimization without explicit pairwise coupling. \citet{guo2023fedxl} considered a more general pairwise AUC formulation by allowing clients to share prediction scores. 
However, a major limitation of these methods is the assumption of full client availability in every communication round. They either require all clients to participate or randomly sample a subset each round—conditions rarely met in practice. Real-world FL systems often exhibit cyclic client participation, where clients join training in a fixed, repeating schedule \citep{cho2023convergence}.
Cyclic Client Participation (CyCP) is a structured approach designed to transition FL from idealized models to practical, real-world deployments where client availability is inherently intermittent due to factors such as battery limitations and unstable network connectivity \citep{DBLP:conf/mlsys/HubaNMZRYWZUSWS22,DBLP:journals/corr/abs-2102-08503}.  By enforcing a guarantee that all clients or client groups participate within a defined meta epoch, CyCP offers significant benefits over purely random sampling: it ensures comprehensive data coverage across the entire population—an important property for mitigating non-IID data bias \citep{cho2023convergence,DBLP:journals/tjs/ZhuZW23}—and, critically, this controlled, predictable participation frequency enhances privacy preservation by strictly limiting how often each client contributes with the global model \citep{DBLP:conf/icml/KairouzM00TX21}. 
While convergence guarantees have been established for ERM under such settings \citep{cho2023convergence,crawshaw2024federated,wang2022unified}, the non-decomposable nature of the AUC objective introduces unique challenges that remain unexplored.

To make federated AUC maximization practical under realistic deployment conditions, we study this problem under cyclic client participation for two key classes of surrogate losses.
First, for the \textit{squared surrogate loss} $\psi(a,b) = (1 - a + b)^2$, Problem (\ref{auc_general}) can be reformulated as a minimax optimization problem \citep{ying2016stochastic}.
\textit{The minimax formulation does not require explicit construction of positive–negative pairs, making it naturally well-suited for online learning, where data arrive sequentially, and for federated settings, where data reside across many devices—since it simplifies implementation and avoids cross-client pair management.} 
Previous studies \citep{guo2020communication,yuan2021federated,sharma2022federated,deng2021local} developed communication-efficient algorithms for such minimax problems, but all assume full client participation. 

Second, we consider \textit{general pairwise surrogate losses}  (e.g., sigmoid) for Problem (\ref{auc_general}). \textit{The minimax reformulation above does not cover all AUC surrogate losses \citep{DBLP:conf/icml/ZhaoHJY11,DBLP:conf/icml/KotlowskiDH11,DBLP:conf/ijcai/GaoZ15,DBLP:conf/pkdd/CaldersJ07,DBLP:conf/icml/CharoenphakdeeL19}. In particular, symmetric pairwise losses have been shown to be more robust to label noise than the squared surrogate loss \citep{charoenphakdee2019symmetric,DBLP:journals/corr/abs-2203-14177}.} 
It generalizes to tasks such as bipartite ranking and metric learning \citep{cohen1997learning,clemenccon2008ranking,DBLP:conf/icml/KotlowskiDH11,dembczynski2012consistent,radenovic2016cnn,wu2017sampling}.
While \citet{guo2023fedxl} provided a communication-efficient federated algorithm for this objective, their analysis relies on random client sampling. In contrast, cyclic participation introduces deterministic delays, as only specific clients are active at each step. The specific formulations of these two types of problems are presented in Section \ref{sec:minimax} and \ref{sec:pairwise}, respectively. We discuss the compatibility of our proposed algorithms with multiple specific AUC-consistent losses in Appendix \ref{app:different_loss}.

Our contributions are as follows. 
\vspace{-0.15in} 
\paragraph{Federated Nonconvex-Strongly-Concave Minimax Problems.} 
Under cyclic participation, both primal ($\vv$) and dual ($\alpha$) updates are complicated by deterministic client scheduling, which breaks the common independent sampling assumption. Consequently, global gradient estimates become biased, invalidating existing convergence analyses in \citep{guo2020communication,yuan2021federated}.
To this end, we use a stagewise algorithm, where each stage comprises multiple full cycles of communication with all client groups in a predefined sequence. Our analysis constructs auxiliary sequences that track model states at cycle boundaries. These anchors enable us to bound the drift caused by biased updates within each cycle and disentangle the complex error propagation across client groups. By exploiting the strong concavity in $\alpha$, we further control the dual dynamics.
Under the Polyak–Łojasiewicz (PL) condition, our method achieves a communication complexity of $\widetilde{O}(1/\epsilon^{1/2})$, matching the best-known rate in the full-participation setting \citep{guo2020communication}—demonstrating that cyclic participation does not inherently degrade efficiency. 
\vspace{-0.1in}

\paragraph{Federated Pairwise Optimization.}
For general pairwise objectives, the key challenge is to handle the coupling between different clients to construct the loss function. 
We design an active–passive algorithm that decomposes gradients into components computed locally (active) and via shared information from others (passive). Our analysis traces model updates back to the start of the previous cycle, where all components are virtually evaluated at states independent of current updates, thus mitigating bias.
Unlike \citep{guo2023fedxl}, our approach must handle a delay of two full cycles, making the error dynamics more complex. Nevertheless, our algorithm achieves the same $\tilde{O}(1/\epsilon^3)$ communication complexity (and improved rates under PL), confirming that efficient pairwise optimization remains feasible under cyclic participation.
\vspace{-0.15in} 

\paragraph{Empirical Evaluation} We validate our methods on benchmark datasets in diverse domains, including CIFAR-10, CIFAR-100 (image classification), ChestMNIST (medical imaging), and a large-scale insurance fraud detection dataset. Results consistently demonstrate the effectiveness and efficiency of our proposed algorithms under cyclic client participation. 
\vspace{-0.1in}
\section{Related Work} 
\vspace{-0.1in} 
Our work lies at the intersection of federated optimization for non-ERM objectives and the study of client participation schemes. We briefly review both areas. 
\subsection{Federated Non-ERM Optimization} 
While the FL literature is dominated by Empirical Risk Minimization (ERM), there is growing interest in non-ERM objectives such as minimax and compositional optimization, motivated by applications like AUC maximization \citep{ying2016stochastic} and distributionally robust optimization \citep{namkoong2016stochastic}. 
\vspace{-0.03in} 
\paragraph{Federated Minimax Optimization.} Several works have studied federated minimax problems \citep{guo2020communication,yuan2021federated,deng2021local,sharma2022federated}. Closest to ours, \citet{guo2020communication} and \citet{yuan2021federated} address AUC maximization by reformulating it as a nonconvex–strongly-concave minimax problem. They leverage the PL condition and employ stagewise algorithms that solve a sequence of subproblems.
\citet{guo2020communication} achieved an iteration complexity of $O(1/(\mu^2 \epsilon))$ and a communication complexity of $O(1/(\mu^{3/2} \epsilon^{1/2}))$, where $\mu$ is the PL modulus. \citet{yuan2021federated} later incorporated variance reduction techniques to further reduce the communication complexity. A key limitation of these approaches is their assumption of full or independently random client participation, which our work relaxes.

\paragraph{Federated Compositional Optimization.} Compositional objectives introduce a nested structure that is especially challenging in FL. 
\citet{gao2022convergence} analyze a simpler setting where each client $k$ has entirely local inner ($g_k$) and outer ($f_k$) functions, avoiding inter-client coupling.
In contrast, \citet{guo2023fedxl} consider a more general pairwise objective, where data across clients are inherently coupled. Their algorithm employs an active–passive gradient decomposition strategy and achieves linear speedup under random client sampling.
Other heuristic approaches exist \citep{wu2022federated,li2022fedgrec}, though they generally lack rigorous theoretical guarantees. 

\subsection{Client Participation in Federated Learning}
Various client participation schemes have been explored in FL to balance efficiency, scalability, and robustness.
The simplest strategy is full participation, where all clients perform local updates in every round \citep{stich2018local,yu2019linear,yu2019parallel}. Unbiased client sampling selects a random subset of clients each round \citep{karimireddy2020scaffold,jhunjhunwala2022fedvarp,yang2021achieving}, while biased sampling prioritizes clients based on predefined rules \citep{ruan2021towards} or data characteristics such as local loss values \citep{cho2020client,goetz2019active}.
Asynchronous participation has also been proposed to accommodate heterogeneous client speeds and availability \citep{yu2019parallel,wang2022unified}.
More recently, cyclic client participation, where clients join in a fixed, repeating order, has gained attention for its practical and privacy advantages \citep{kairouz2021practical}. 
This structured scheduling addresses diverse availability challenges. In cross-device FL, clients may operate in different time zones or charge their devices at preferred times \citep{cho2023convergence,DBLP:journals/corr/abs-1812-02903}. In cross-silo FL, clients such as hospitals face planned constraints, including managing large local datasets, internal network security, or scheduled IT maintenance windows. 
Convergence guarantees for cyclic participation in ERM have been established by \citet{cho2023convergence} and later strengthened with variance reduction by \citet{crawshaw2024federated}.
However, all of these methods focus exclusively on ERM. Our work bridges this gap by developing communication-efficient algorithms and providing rigorous convergence analyses for both federated AUC maximization under the realistic setting of cyclic client participation.

\section{Preliminaries and Notations} 
\label{sec:prelim}
We begin by establishing standard definitions and notations used throughout the paper. A function $f$ is said to be $C$-Lipschitz continuous if for all $\x, \x'$ in its domain, $|f(\x) - f(\x')| \leq C|\x-\x'|$. A differentiable function $f$ is $L$-smooth if its gradient is Lipschitz continuous, meaning $|\nabla f(\x) - \nabla f(\x')| \leq L|\x - \x'|$.

We consider a federated learning environment with $N$ clients, which are further divided into $K$ groups. The global datasets $\D^1$ (positive set) and $\D^2$ (negative set) are partitioned into $N$ non-overlapping subsets distributed across these clients, denoted as $\D_1 = \D_1^1 \cup \D_1^2 \cup \ldots \cup \D_1^N$ and $\D_2 = \D_2^1 \cup \D_2^2 \cup \ldots \cup \D_2^N$. Here, $\E_{\z\sim\D}$ denotes the empirical expectation over dataset $\D$. 

We divide the $N$ clients into $K$ disjoint groups, each containing $N/K$ clients, denoted by $\mathcal{G}^k$ for $k \in [K]$. During training, the server cycles through these groups in a fixed, pre-determined order $(\mathcal{G}^1, \dots, \mathcal{G}^K)$, implementing a cyclic participation pattern. In each communication round, $M$ clients are selected uniformly at random without replacement from the active group.

This framework is especially relevant for real-world federated learning applications. For instance, in cross-device FL, mobile devices can be grouped by regions or time zones, while random sampling within each group provides flexibility especially when the number of clients in a group is large. For simplicity, the framework can also be interpreted as a fully deterministic schedule in the special case of $K=N$ or $M=N/K$, where all clients in the active group participate in every round; in this case, the theoretical guarantees still hold.  

The AUC for a scoring function $h: \mathcal{X}\rightarrow\mathbb{R}$ is defined as:
\begin{equation}
AUC(h) = \text{Pr}(h(\x) \geq h(\x') | y=1, y'=-1),
\end{equation}
where $\z = (\x, y)$ and $\z' = (\x', y')$ are drawn independently from the data distribution $\mathbb{P}$. 

\textbf{Notations.} 
Let $\w \in \mathbf{R}^d$ denote the model parameters. In the minimax formulation, we define $\vv = (\w, a, b)$ as the primal variable, where $a, b \in \mathbf{R}$. For the stagewise algorithms, $\vv_0^s$ and $\alpha_0^s$ denote the initialization at the beginning of stage~$s$.

Within each stage, $\vv_{m,t}^{e,k}$ denotes the variable at epoch $e$, client group $k$, client $m$, and iteration $t$. The notation $\vv_m^{e,k}$ represents the final output of client $m$ in group $k$ at epoch $e$. The group-level aggregation is given by
\[
    \vv^{e,k} = \frac{1}{|\G^{e,k}|} \sum_{m \in \G^{e,k}} \vv_m^{e,k},
\]
which is the average output of all participating clients in group $k$ during epoch $e$.

The same notation convention applies to $\w_{m,t}^{e,k}$ and $\alpha_{m,t}^{e,k}$. A complete list of symbols used in this paper is provided in Appendix~\ref{app:notations}. 

\section{Minimax Optimization} 
\label{sec:minimax}
To maximize AUC, we adopt the surrogate loss formulation in (\ref{auc_general}).
Following \citet{ying2016stochastic}, we use the squared surrogate loss $\psi(a,b) = (1 - a + b)^2$, which reformulates the AUC maximization problem as the following minimax optimization: 
\begin{equation}
\label{min-max}
\min_{\w, a, b} \max_{\alpha} f(\w, a, b, \alpha)=\E_\z[F(\mathbf{w}, a, b, \alpha; \z)],
\end{equation}
where
\begin{equation*}
\begin{split}
F(\textbf{w}, a, b, \alpha; \z) = &(1-p) (h(\w; \z)- a)^2 \mathbb{I}_{[y=1]} + p(h(\w; \z) - b)^2\mathbb{I}_{[y=-1]} \\
&+ 2(1+\alpha)(p h(\w; \z)\mathbb{I}_{[y=-1]} 
- (1-p) h(\w; \z)\mathbb{I}_{[y=1]}) - p(1-p)\alpha^2. 
\end{split}
\end{equation*}
Let $\vv := (\w, a, b)$ denote the set of primal variables. 
This reformulation enables a natural decomposition across clients, leading to the following federated optimization problem: 
\begin{equation}
\label{opt:problem}
\min\limits_{\w\in \mathbb{R}^d \atop (a, b)\in \mathbb{R}^2} \max\limits_{\alpha\in \mathbb{R}} f(\w, a, b, \alpha)=\frac{1}{K}\sum\limits_{k=1}^{K} \frac{1}{|\G^k|}\sum\limits_{m\in \G^k} f_{k,m}(\mathbf{w}, a, b, \alpha),
\end{equation}
where $f_{k,m}(\w, a, b, \alpha) = \E_{\z \sim \mathbb{P}_{k,m}}[F(\w, a, b, \alpha; \z^k)]$ and $\mathbb{P}_{k,m}$ denotes the local data distribution on client $m$ of group $k$. 

We now present our algorithm for federated AUC maximization under the practical constraint of cyclic client participation. 
At the $s$-th stage, we construct a strongly convex–strongly concave subproblem centered at the previous stage’s output $\vv_0^s$. 
The local objective for client $m$ in group $k$ is defined as:
\begin{equation}
\min_{\vv} \max_{\alpha} f_{k,m}(\vv, \alpha) + \frac{\gamma}{2} |\vv-\vv^s_0|^2.
\label{minmax_stage_subproblem}
\end{equation} 

While our approach builds on the stagewise framework of \citet{guo2020communication,yuan2021federated}, 
the key novelty lies in the intra-stage optimization: 
we introduce multiple cycle epochs and provide a theoretical analysis that handles deterministic client ordering—an aspect that poses significant technical challenges. The detailed procedure for a single stage is outlined in Algorithm~\ref{alg:minimax_one_stage}, 
and the overarching multi-stage framework is presented in Algorithm~\ref{alg:minimax_outer}.
Each stage initializes the primal and dual variables using the outputs from the previous stage. Within a stage, the algorithm runs for multiple epochs, during which all client groups are visited multiple times. Each client group samples a subset of clients to participate, performs $I$ local update steps, and then passes the updated primal and dual variables to the next client group. The primal variables are updated via stochastic gradient descent, while the dual variable is updated via stochastic gradient ascent. After completing a stage, the step sizes and number of epochs are adjusted before proceeding to the next stage.

We define the following key quantities for our analysis:
\begin{equation}
\begin{split}
&\phi(\vv) = \max_{\alpha} f(\vv, \alpha), \quad
\phi_s (\vv) = \phi(\vv) + \frac{\gamma}{2}\|\vv-\vv_{s-1}\|^2,
f^s(\vv, \alpha) = f(\vv, \alpha) + \frac{\gamma}{2}\|\vv -\vv_{s-1}\|^2, \\
&F^s_{k,m}(\vv, \alpha; \z_{k,m}) = F(\vv, \alpha; \z_{k,m}) + \frac{\gamma}{2}\|\vv - \vv_{s-1}\|^2, \vv^*_{\phi} = \arg\min_{\vv}\phi(\vv),\quad \vv^*_{\phi_s} = \arg\min_{\vv} \phi_s(\vv).
\end{split} 
\end{equation}

Our convergence analysis is based on the following standard assumptions:
\begin{assumption}
(i) Initialization: There exist $\vv_0, \Delta_0>0$ such that $\phi(\vv_0) - \phi(\vv^*_\phi)\leq \Delta_0$. \\  
(ii) PL condition: $\phi(\vv)$ satisfies the $\mu$-PL condition, i.e., $\mu(\phi(\vv)-\phi(\vv_*))\leq \frac{1}{2}\|\nabla\phi(\vv)\|^2$; 
(iii) Smoothness: For any $\z$, $f(\vv, \alpha;\z)$ is $\ell$-smooth in $\vv$ and $\alpha$. $\phi(\vv)$ is $L$-smooth, i.e., $\|\nabla\phi(\vv_1) - \nabla\phi(\vv_2)\| \leq L \|\vv_1 - \vv_2\|$;(iv) Bound gradients: $\|\nabla_\vv f^s_{k,m}(\vv, \alpha;\z)\|^2\leq G^2$; 
(v) Bounded variance: 
\begin{equation} 
\begin{split} 
&\E[\|\nabla_{\vv} f_{k,m}(\vv, \alpha) - \nabla_{\vv} F_{k,m}(\vv, \alpha; \z)\|^2] \leq \sigma^2, \\ &\E[|\nabla_{\alpha}f_{k,m}(\vv, \alpha) - \nabla_{\alpha} F_{k,m}(\vv, \alpha; \z)|^2] 
\leq \sigma^2.
\end{split} 
\end{equation} 
\label{ass:minimax} 
\end{assumption}
\textbf{Remark.} These assumptions are consistent with prior literature \citep{guo2020communication,yuan2021federated}. 
The PL condition for minimax AUC maximization has been theoretically and empirically verified \citep{guo2023fast}. Since $f(\mathbf{v}, \alpha; \mathbf{z})$ is $\ell$-smooth in $\mathbf{v}$, it is also $\ell$-weakly convex in $\mathbf{v}$. 
Accordingly, in the subsequent analysis we choose $\gamma = 2\ell$, which guarantees that the subproblem~(\ref{minmax_stage_subproblem}) in each stage becomes $\ell$-strongly convex in $\mathbf{v}$.

\begin{algorithm}[tbp] 
\caption {One Stage Federated Minimax (OSFM)}  
\begin{algorithmic} 
\STATE{\textbf{On Server:} }
\STATE{Initialization: $\vv^{1, 0}, \alpha^{1, 0}$}   
\FOR{$e \in [E]$ cycle-epochs}  
\FOR{$k \in [K]$  }  
\STATE{ Sample $M$ clients from $k$-th client set uniformly at random w/o replacement to get client set $\G^{e,k}$}
\STATE{Send global model $\vv^{e, k-1}$ to clients in $\G^{e, k}$}
\STATE{Each client $m\in\G^{e, k}$ in parallel do:} 
\STATE{~~~~ $\vv^{e,k}_m, \alpha^{e,k}_m \leftarrow$ \textit{LocalUpdate}$(\vv^{e, k-1},\alpha^{e, k-1})$} 
\STATE{$\vv^{e,k} = \frac{1}{|\G^{e, k}|}\sum\limits_{m\in \G^{e, k}} \vv^{e,k}_{m}$}
\STATE{$\alpha^{e,k} = \frac{1}{|\G^{e, k}|}\sum\limits_{m\in \G^{e, k}} \alpha^{e,k}_{m}$} 
\ENDFOR
\STATE{$\vv^{e+1, 0} = \vv^{e, K}$, $\alpha^{e+1, 0} = \alpha^{(e, K)}$} 
\ENDFOR 
\STATE{Output: $\frac{1}{E}\sum_e \frac{1}{K}\sum_k \vv^{e,k}$, $\frac{1}{E}\sum_e \frac{1}{K}\sum_k \alpha^{e,k}$}  
\STATE \rule{\linewidth}{0.4pt} 
\STATE{\textit{LocalUpdate}$(\vv_0, \alpha_0)$:}
\FOR{$t \in [I]$}
\STATE{Sample mini-batch $\z^{e,k}_{m,t}$ from local dataset}
\STATE{Update $\vv^{e,k}_{m,t} = \vv^{e,k}_{m,t-1} - \eta \nabla_{\vv} f(\vv^{e,k}_{m,t}, \alpha^{e,k}_{m,t};\z^{e,k}_{m,t})$}
\STATE{Update $\alpha^{e,k}_{m,t} = \alpha^{e,k}_{m,t-1} + \eta \nabla_{\alpha} f(\vv^{e,k}_{m,t}, \alpha^{e,k}_{m,t};\z^{e,k}_{m,t})$} 
\ENDFOR
\STATE{Return $\vv_{I}, \alpha_{I}$}
\end{algorithmic}    
\label{alg:minimax_one_stage}    
\end{algorithm} 

\begin{algorithm}[h] 
\caption {Federated Minimax with Cyclic Client Participation (CyCp-Minimax)}  
\begin{algorithmic} 
\STATE{Initialization: Primal variable $\x^{1, 0}$, Client Groups $\delta(k)$, $k \in [K]$}  \FOR{$s \in [S]$}
\STATE{$\vv_s, \alpha_s$ = OSFM$(\vv_{s-1}, \alpha_{s-1}, \eta_s, E_s)$} 
\ENDFOR
\end{algorithmic}    
\label{alg:minimax_outer}    
\end{algorithm} 

Our first step is to establish a bound on the suboptimality gap within a stage (Lemma \ref{lem:minimax_one_stage}). 
\begin{lemma}
\label{lem:minimax_one_stage}
Suppose Assumption \ref{ass:minimax} holds and by running Algorithm \ref{alg:minimax_one_stage} for one stage with output denoted by ($\bar{\vv}, \bar{\alpha}$), we have
\begin{equation*}
\begin{split}
&f^s(\bar{\vv}, \alpha) - f^s(\vv, \bar{\alpha}) \leq \frac{1}{E}\sum\limits_{e=0}^{E-1}
[f^s(\vv^{e+1,0}, \alpha) - f^s(\vv, \alpha^{e+1,0})] \\
& \leq \frac{1}{E}\sum\limits_{e=0}^{E-1} \bigg[ \underbrace{\langle 
\partial_{\vv}f^s(\vv^{e,0}, \alpha^{e,0}), \vv^{e+1,0} - \vv\rangle }_{A_1} 
+ \underbrace{\langle \partial_{\alpha}  f^s(\vv^{e,0}, \alpha^{e,0}),  \alpha - \alpha^{e+1,0} \rangle}_{A_2}  \\
&~~~~~~~~~~ 
+\underbrace{\frac{3\ell + 3\ell^2/\mu_{2}}{2}  \|\vv^{e+1,0}-\vv^{e,0}\|^2 
 + 2\ell (\alpha^{e+1,0} - \alpha^{e,0})^2 }_{A_3} 
- \frac{\ell}{3} \|\vv - \vv^{e,0}\|^2 
- \frac{\mu_2}{3}(\alpha^{e,0}-\alpha)^2  \bigg].
\end{split}
\end{equation*}
\end{lemma}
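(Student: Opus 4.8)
The plan is to first reduce the stage-level saddle gap to an average of per-cycle gaps by convexity/concavity, and then to bound each per-cycle gap by linearizing $f^s$ at the \emph{cycle-boundary anchor} $(\vv^{e,0},\alpha^{e,0})$, routing every term so that the gradient-shift errors land against the right quadratic reserves.

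\textbf{First inequality (Jensen).} Because $\gamma=2\ell$ makes $f^s(\cdot,\alpha)$ $\ell$-strongly convex in $\vv$ (hence convex) and the $-p(1-p)\alpha^2$ term makes $f^s(\vv,\cdot)$ $\mu_2$-strongly concave (hence concave), I would apply Jensen's inequality separately in each argument to the averaged iterates defining $(\bar\vv,\bar\alpha)$: convexity gives $f^s(\bar\vv,\alpha)\le\frac1E\sum_e f^s(\vv^{e+1,0},\alpha)$ and concavity gives $-f^s(\vv,\bar\alpha)\le-\frac1E\sum_e f^s(\vv,\alpha^{e+1,0})$. Adding these yields the first inequality.

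\textbf{Second inequality (per-cycle decomposition).} For a fixed epoch $e$, rather than splitting through $f^s(\vv,\alpha)$, I would use the three-way split through the anchor
\[
f^s(\vv^{e+1,0},\alpha)-f^s(\vv,\alpha^{e+1,0})
=\underbrace{\big[f^s(\vv^{e+1,0},\alpha^{e,0})-f^s(\vv,\alpha^{e,0})\big]}_{P}
+\underbrace{\big[f^s(\vv^{e+1,0},\alpha)-f^s(\vv^{e+1,0},\alpha^{e,0})\big]}_{X}
+\underbrace{\big[f^s(\vv,\alpha^{e,0})-f^s(\vv,\alpha^{e+1,0})\big]}_{D}.
\]
For $P$, I apply the $3\ell$-smoothness upper bound (the added quadratic contributes $\gamma=2\ell$ to smoothness in $\vv$) to $f^s(\vv^{e+1,0},\alpha^{e,0})$ and the $\ell$-strong-convexity lower bound to $f^s(\vv,\alpha^{e,0})$, both expanded at $\vv^{e,0}$; since both endpoints already sit at $\alpha^{e,0}$, the gradient appears \emph{cleanly at the anchor}, giving $A_1$, the term $\tfrac{3\ell}{2}\|\vv^{e+1,0}-\vv^{e,0}\|^2$, and $-\tfrac{\ell}{2}\|\vv-\vv^{e,0}\|^2$. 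For $X$, I use strong concavity of $f^s(\vv^{e+1,0},\cdot)$ to upper-bound via the tangent at $\alpha^{e,0}$, then shift the $\alpha$-gradient from $\vv^{e+1,0}$ to $\vv^{e,0}$; the mismatch $\ell\|\vv^{e+1,0}-\vv^{e,0}\|\,|\alpha-\alpha^{e,0}|$ is split by Young's inequality into $\tfrac{3\ell^2}{2\mu_2}\|\vv^{e+1,0}-\vv^{e,0}\|^2$ and $\tfrac{\mu_2}{6}(\alpha-\alpha^{e,0})^2$. For $D$, I use only $\ell$-smoothness of $f^s(\vv,\cdot)$ in $\alpha$ and shift the $\alpha$-gradient from $\vv$ to $\vv^{e,0}$, splitting $\ell\|\vv-\vv^{e,0}\|\,|\alpha^{e+1,0}-\alpha^{e,0}|$ into $\tfrac{\ell}{6}\|\vv-\vv^{e,0}\|^2$ and $\tfrac{3\ell}{2}(\alpha^{e+1,0}-\alpha^{e,0})^2$.

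\textbf{Collecting.} The anchored inner products from $X$ and $D$ combine into $\langle\nabla_\alpha f^s(\vv^{e,0},\alpha^{e,0}),\,\alpha-\alpha^{e+1,0}\rangle=A_2$, while $P$ supplies $A_1$. The $\|\vv^{e+1,0}-\vv^{e,0}\|^2$ coefficient is $\tfrac{3\ell}{2}+\tfrac{3\ell^2}{2\mu_2}=\tfrac{3\ell+3\ell^2/\mu_2}{2}$ and the $(\alpha^{e+1,0}-\alpha^{e,0})^2$ coefficient is $\tfrac{\ell}{2}+\tfrac{3\ell}{2}=2\ell$, giving $A_3$. The reserves then become $-\tfrac{\ell}{2}+\tfrac{\ell}{6}=-\tfrac{\ell}{3}$ on $\|\vv-\vv^{e,0}\|^2$ (from $P$ minus $D$'s Young cost) and $-\tfrac{\mu_2}{2}+\tfrac{\mu_2}{6}=-\tfrac{\mu_2}{3}$ on $(\alpha^{e,0}-\alpha)^2$ (from $X$), matching the statement; averaging over $e$ finishes.

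\textbf{Main obstacle.} The delicate point is choosing this particular three-way split and the Young weights so that the two gradient-shift cross terms feed the \emph{drift} $\|\vv^{e+1,0}-\vv^{e,0}\|^2$ and $(\alpha^{e+1,0}-\alpha^{e,0})^2$ (absorbable into $A_3$) while spending only $\tfrac16$ of each strong-convexity/concavity reserve. A more naive split through $f^s(\vv,\alpha)$ would instead generate a $\tfrac{3\ell^2}{\mu_2}\|\vv-\vv^{e,0}\|^2$ term that overruns the $\tfrac{\ell}{2}$ primal budget; routing the target $\alpha$ exclusively through the dual term $X$ at $\vv^{e+1,0}$—whose $\vv$-gradient shift produces the $\ell^2/\mu_2$ drift that exactly reflects the primal--dual coupling balanced against the dual strong concavity—is what keeps all constants finite and matches $A_3$.
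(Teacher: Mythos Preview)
Your proposal is correct and follows essentially the same approach as the paper. The paper also anchors at $(\vv^{e,0},\alpha^{e,0})$ and combines $\ell$-strong convexity in $\vv$, $3\ell$-smoothness in $\vv$, $\mu_2$-strong concavity in $\alpha$, $\ell$-smoothness in $\alpha$, and two gradient-Lipschitz shifts handled by Young's inequality with the same weights; the only difference is organizational---the paper writes four separate inequalities whose auxiliary evaluations $f^s(\vv^{e,0},\alpha^{e,0})$, $f^s(\vv^{e,0},\alpha)$, $f^s(\vv,\alpha^{e,0})$ cancel upon summation, whereas you telescope explicitly via $P+X+D$, and you place the $\tfrac{3\ell^2}{2\mu_2}$ cross term in the dual piece $X$ (shifting $\partial_\alpha$ in $\vv$) rather than in the primal smoothness step (shifting $\partial_\vv$ in $\alpha$), which by symmetry of the mixed Lipschitz bound yields the identical constant.
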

\vspace{-0.1in}
The resulting bound contains terms like $A_1$ and $A_2$ that couple the randomness from different client groups. To decouple these dependencies, we introduce novel \textit{virtual sequences} $\hat{\vv}^{e,0}$, $\Tilde{\vv}^{e,0}$, $\hat{\alpha}^{e,0}$, and $\Tilde{\alpha}^{e,0}$ (Lemmas \ref{lem:B1_v} and \ref{lem:B2_alpha}). These sequences are purely conceptual and do not require any actual computation. 
Different from \citep{guo2020communication,yuan2021federated}, these virtual sequences further depend on virtual estimates of gradient e.g, evaluating gradient using current data and old model. 
These sequences are carefully designed to isolate the ``signal'' (the true gradient) from the ``noise'' (the stochastic gradient error) and to manage the bias introduced by the cyclic schedule.  

\begin{lemma}
Define $\hat{\vv}^{e+1,0} = \vv^{e,0} -\eta \sum\limits_{k=1}^K \frac{1}{M}\sum\limits_{m\in\G^{e,k}} \sum\limits_{t=1}^I \nabla_{\vv} f^s_{k,m}(\vv^{e,0}, \alpha^{e,0})$ 
and
\begin{equation}
\begin{split}
\Tilde{\vv}^{e+1,0} = \Tilde{\vv}^{e,0}- \frac{\eta}{M} \sum\limits_{k=1}^{K} \sum\limits_{m\in \G^{e,k}} \sum\limits_{t=1}^{I}  \left(\nabla_\vv F^s_k(\vv^{e,0}, \alpha^{e,0}; z^{e,k}_{m,t}) - \nabla_\vv f^s_k(\vv^{e,0}, \alpha^{e,0}) \right), \text{ for $t>0$; } \Tilde{\vv}_0 = \vv_0. 
\end{split}
\end{equation} 
then we have
\begin{equation*}
\begin{split} 
& \E_{e,0}[A_1] \leq \frac{6\teta K \sigma^2}{MKI}
+ \frac{6\ell}{ KMI} \sum_{k}\sum_{m\in \G^{e,k}}\sum_{t} \E(\|\vv^{e,0} - \vv^{e,k}_{m,t}\|^2 + \|\alpha^{e,0} - \alpha^{e,k}_{m,t}\|^2) + \frac{\ell}{3} \|{\vv}^{e+1,0} - \vv\|^2 \\
&~~~ 
+\frac{1}{2\eta KI }  \E(\|\vv^{e,0}-\vv\|^2 - \|\vv^{e,0} - \vv^{e+1,0}\|^2 - \|\vv^{e+1,0} - \vv\|^2) 
+ \frac{1}{2\eta KI} ( \|\vv - \tilde{\vv}^{e,0}\|^2 - \|\vv-\Tilde{\vv}^{e+1,0}\|^2 ),
\end{split}
\end{equation*}
where $\mathbf{E}_{e,0}$ denotes expectation with respect to all randomness realized prior to epoch $e$.  
\label{lem:A1_codaplus} 
\label{lem:B1_v}
\end{lemma}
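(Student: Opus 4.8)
The plan is to treat one full cycle as a single long stochastic gradient step with effective step size $\eta KI$, and then account separately for the three ways in which the realized update departs from an exact step along the aggregate gradient $G^e:=\partial_\vv f^s(\vv^{e,0},\alpha^{e,0})$. First I would unroll the server recursion: since $\vv^{e+1,0}=\vv^{e,K}$ and each group contributes the average of $M$ sampled clients, each running $I$ local steps, the net cycle displacement is
$$\vv^{e+1,0}-\vv^{e,0}=-\eta\sum_{k=1}^{K}\frac{1}{M}\sum_{m\in\G^{e,k}}\sum_{t=1}^{I}\nabla_\vv F^s_{k,m}(\vv^{e,k}_{m,t},\alpha^{e,k}_{m,t};z^{e,k}_{m,t})=:-\eta KI\,\bar g^{e,0},$$
so that $\bar g^{e,0}$ is a genuine average and block~(4) of the target is exactly the three-point identity $\langle \bar g^{e,0},\vv^{e+1,0}-\vv\rangle=\tfrac{1}{2\eta KI}(\|\vv^{e,0}-\vv\|^2-\|\vv^{e,0}-\vv^{e+1,0}\|^2-\|\vv^{e+1,0}-\vv\|^2)$. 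Throughout, $\E_{e,0}$ conditions on the history up to the start of cycle $e$, so $\vv^{e,0},\alpha^{e,0}$ and the comparison point $\vv$ are fixed while the draws $\G^{e,k}$ and the minibatches $z^{e,k}_{m,t}$ are fresh. The sole role of $\hat{\vv}$ is to record $\vv^{e,0}-\hat{\vv}^{e+1,0}=\eta KI\,\bar g^{e,0}_{\mathrm{tr}}$, where $\bar g^{e,0}_{\mathrm{tr}}$ is the sampled average of the \emph{frozen} true gradients $\nabla_\vv f^s_{k,m}(\vv^{e,0},\alpha^{e,0})$; since uniform without-replacement sampling is unbiased for each group mean, $\E_{e,0}[\bar g^{e,0}_{\mathrm{tr}}]=G^e$, which lets me trade the un-computable $G^e$ for iterate-based quantities.

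Next I would write $A_1=\langle \bar g^{e,0},\vv^{e+1,0}-\vv\rangle+\langle G^e-\bar g^{e,0},\vv^{e+1,0}-\vv\rangle$, send the first piece to block~(4), and split the residual $G^e-\bar g^{e,0}$ at the frozen state into a client-sampling part, a frozen minibatch-noise part $n^e:=\bar g^{e,0}_{\mathrm{st}}-\bar g^{e,0}_{\mathrm{tr}}$, and a drift part $\bar g^{e,0}_{\mathrm{st}}-\bar g^{e,0}$ (here $\bar g^{e,0}_{\mathrm{st}}$ is the sampled average of the frozen \emph{stochastic} gradients). The drift is controlled by $\ell$-smoothness, $\|\nabla_\vv F^s_{k,m}(\vv^{e,k}_{m,t},\alpha^{e,k}_{m,t};z)-\nabla_\vv F^s_{k,m}(\vv^{e,0},\alpha^{e,0};z)\|\le \ell(\|\vv^{e,k}_{m,t}-\vv^{e,0}\|+|\alpha^{e,k}_{m,t}-\alpha^{e,0}|)$, so one Young's inequality converts $\langle\bar g^{e,0}_{\mathrm{st}}-\bar g^{e,0},\vv^{e+1,0}-\vv\rangle$ into the client-drift sum $\tfrac{6\ell}{KMI}\sum_{k,m,t}(\|\vv^{e,0}-\vv^{e,k}_{m,t}\|^2+\|\alpha^{e,0}-\alpha^{e,k}_{m,t}\|^2)$ together with the slack $\tfrac{\ell}{3}\|\vv^{e+1,0}-\vv\|^2$, i.e.\ blocks~(2) and~(3). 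The minibatch noise is exactly the increment of $\tilde{\vv}$: since $\tilde{\vv}^{e+1,0}-\tilde{\vv}^{e,0}=-\eta KI\,n^e$, a completion of squares relative to $\tilde{\vv}$ anchored at $\vv$ rewrites the noise inner product as the telescoping block $\tfrac{1}{2\eta KI}(\|\vv-\tilde{\vv}^{e,0}\|^2-\|\vv-\tilde{\vv}^{e+1,0}\|^2)$ plus a residual $\tfrac{\eta KI}{2}\E_{e,0}\|n^e\|^2$. Because the frozen noise summands are conditionally mean-zero and independent across clients and local steps, $\E_{e,0}\|n^e\|^2\lesssim \sigma^2/(KMI)$, which yields the variance term $\tfrac{6\teta K\sigma^2}{MKI}$ after the step-size bookkeeping.

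The step I expect to be the main obstacle is disentangling the cyclic bias so that each error source is genuinely mean-zero or quadratically small. In full-participation analyses one freely invokes independence of the per-round gradients, but here the groups are visited in a fixed deterministic order and $\vv^{e+1,0}$ is correlated with the very noise generated inside cycle~$e$, so cross terms such as $\langle n^e,\vv^{e+1,0}-\vv\rangle$ are not mean-zero and cannot simply be dropped. The virtual sequences are precisely what resolve this: by evaluating both the true and the stochastic gradients at the frozen cycle-start state $(\vv^{e,0},\alpha^{e,0})$, every correlation with the current-cycle randomness is pushed into the single drift term, which becomes the only place the deterministic ordering enters and is then absorbed by smoothness. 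The remaining delicate points are keeping the cross-client covariances under without-replacement sampling from inflating $\E_{e,0}\|n^e\|^2$, and verifying that the pieces paired with the deterministic displacement $\vv^{e,0}-\vv$ vanish under $\E_{e,0}$ while only the pieces paired with the stochastic displacement $\vv^{e+1,0}-\vv^{e,0}$ survive—as telescoping in $\tilde{\vv}$ and as the variance term, respectively.
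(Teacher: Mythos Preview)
Your outline hits the right ingredients—the three-point identity for block~(4), the noise/drift split at the frozen state, and the $\tilde\vv$ telescope—but it treats $\hat\vv$ as purely notational, and that is exactly where the argument gaps. The completion-of-squares identity coming from $\tilde\vv^{e+1,0}-\tilde\vv^{e,0}=-\eta KI\,n^e$ is
\[
\langle n^e,\tilde\vv^{e,0}-\vv\rangle=\tfrac{1}{2\eta KI}\bigl(\|\vv-\tilde\vv^{e,0}\|^2-\|\vv-\tilde\vv^{e+1,0}\|^2\bigr)+\tfrac{\eta KI}{2}\|n^e\|^2,
\]
so it telescopes $\langle n^e,\tilde\vv^{e,0}-\vv\rangle$, \emph{not} your noise piece $\langle -n^e,\vv^{e+1,0}-\vv\rangle$. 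The mismatch is $\langle n^e,\tilde\vv^{e,0}-\vv^{e+1,0}\rangle$, and since $\vv^{e+1,0}$ is built from the very minibatches that define $n^e$, this residual is not mean-zero under $\E_{e,0}$—the correlation you correctly flag as the main obstacle is still sitting there after your completion of squares.

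The paper's fix is precisely the $\hat\vv$ step you dismiss. Before invoking $\tilde\vv$, it splits the residual inner product through $\hat\vv^{e+1,0}$: because $\vv^{e+1,0}-\hat\vv^{e+1,0}=\eta KI(\bar g^{e,0}_{\mathrm{tr}}-\bar g^{e,0})$ is $\eta KI$ times the residual itself, the piece $\langle \bar g^{e,0}_{\mathrm{tr}}-\bar g^{e,0},\,\vv^{e+1,0}-\hat\vv^{e+1,0}\rangle$ collapses to the self-aligned quadratic $\eta KI\|\bar g^{e,0}_{\mathrm{tr}}-\bar g^{e,0}\|^2$, which then decomposes into variance plus drift. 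The remaining piece is now paired with $\hat\vv^{e+1,0}-\vv$; after the $\tilde\vv$ identity the leftover cross term is $\langle n^e,\hat\vv^{e+1,0}-\tilde\vv^{e,0}\rangle$, and \emph{this} one has zero conditional mean because $\hat\vv^{e+1,0}$ depends on the client draws $\G^{e,k}$ but not on the minibatch samples $z^{e,k}_{m,t}$. The $\tfrac{\ell}{3}\|\vv^{e+1,0}-\vv\|^2$ term in the stated bound then appears at the end by expanding $\|\hat\vv^{e+1,0}-\vv\|^2\le 2\|\hat\vv^{e+1,0}-\vv^{e+1,0}\|^2+2\|\vv^{e+1,0}-\vv\|^2$. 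So $\hat\vv$ is not bookkeeping; it is the device that replaces the noise-correlated anchor $\vv^{e+1,0}$ by a minibatch-independent one before the $\tilde\vv$ telescope is applied.
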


With the help of the two virtual sequences $\hat{\vv}^{e,0}$ and $\Tilde{\vv}^{e,0}$, we have successfully disentangled the interdependency between different clients and covert the bounds into standard variance bound plus model drift, i.e., the second term on the RHS.  

Putting things together, we have the convergence analysis for one stage of the Algorithm as 
\begin{lemma}
Suppose Assumption \ref{ass:minimax} holds. Running one stage of Algorithm \ref{alg:minimax_one_stage} ensures that
\begin{equation*}
\begin{split}
&\E[f^s(\bar{\vv}, \alpha)  - f^s(\vv, \bar{\alpha})] 
\leq \frac{1}{4 \eta EKI} \|\vv_0 - \vv\|^2 + \frac{1}{4\eta EKI} \|\alpha_0 - \alpha\|^2 + 
\left(\frac{3\ell^2}{2\mu_2} + \frac{3\ell}{2}\right) 36 \eta^2 I^2 K^2 G^2   + \frac{3\eta\sigma^2}{M},
\end{split} 
\end{equation*}  
\end{lemma}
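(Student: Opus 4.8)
The plan is to assemble the one-stage bound by substituting the component estimates of Lemmas~\ref{lem:B1_v} and~\ref{lem:B2_alpha} into the duality-gap decomposition of Lemma~\ref{lem:minimax_one_stage}, and then telescoping the resulting distance terms across the $E$ cycle-epochs. Concretely, I would start from the inequality of Lemma~\ref{lem:minimax_one_stage}, take total expectation, and replace $\E_{e,0}[A_1]$ by its bound from Lemma~\ref{lem:B1_v} together with the symmetric dual estimate of Lemma~\ref{lem:B2_alpha} for $\E_{e,0}[A_2]$ (which plays the same role for $\alpha$, invoking the strong-concavity modulus $\mu_2$). After this substitution every summand of $\frac{1}{E}\sum_{e=0}^{E-1}(\cdot)$ splits into four kinds of terms: (a) telescoping squared-distance differences involving the anchors $\vv^{e,0},\alpha^{e,0}$ and the virtual sequences $\tilde\vv^{e,0},\tilde\alpha^{e,0}$; (b) the positive distance terms $\frac{\ell}{3}\|\vv^{e+1,0}-\vv\|^2$ and its dual analog; (c) drift and one-cycle-change terms $\|\vv^{e,0}-\vv^{e,k}_{m,t}\|^2$ and $\|\vv^{e+1,0}-\vv^{e,0}\|^2$ (with the $\alpha$ counterparts) coming from $A_3$ and from Lemma~\ref{lem:B1_v}; and (d) variance terms of order $\sigma^2/(MI)$.

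Next I would carry out the telescoping. Summing the differences in (a) over $e=0,\dots,E-1$ collapses them to boundary terms; using the initializations $\vv^{0,0}=\tilde\vv^{0,0}=\vv_0$ and discarding the nonpositive remainders $-\|\vv-\tilde\vv^{E,0}\|^2\le 0$ (and likewise for $\alpha$), the surviving contribution after dividing by $E$ is $\frac{1}{4\eta EKI}(\|\vv_0-\vv\|^2+\|\alpha_0-\alpha\|^2)$. The positive terms in (b) are absorbed by the negative curvature penalties $-\frac{\ell}{3}\|\vv-\vv^{e,0}\|^2$ and $-\frac{\mu_2}{3}(\alpha^{e,0}-\alpha)^2$ in Lemma~\ref{lem:minimax_one_stage}: shifting the index of the $\frac{\ell}{3}\|\vv^{e+1,0}-\vv\|^2$ term by one and matching it against the penalty at epoch $e+1$ cancels these distances up to a nonpositive boundary term, so they leave no trace in the final estimate.

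To control (c) I would invoke the bounded-gradient Assumption~\ref{ass:minimax}(iv). Within a single cycle the iterate moves by at most $KI$ stochastic-gradient steps of size $\eta$, so $\|\vv^{e+1,0}-\vv^{e,0}\|\le \eta K I G$ and, for any group $k$ and local step $t$, $\|\vv^{e,0}-\vv^{e,k}_{m,t}\|\le \eta K I G$; squaring yields $O(\eta^2K^2I^2G^2)$ for each such term, and averaging the drift over its $KMI$ summands preserves this order. Collecting the coefficient $\frac{3\ell+3\ell^2/\mu_2}{2}$ from $A_3$ with the $6\ell$ factor multiplying the drift in Lemma~\ref{lem:B1_v} gives the stated $\left(\frac{3\ell^2}{2\mu_2}+\frac{3\ell}{2}\right)36\,\eta^2I^2K^2G^2$ term, while the variance contributions of (d) from both the primal and dual estimates aggregate to $\frac{3\eta\sigma^2}{M}$.

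The main obstacle I anticipate is making the telescoping and the curvature absorption rigorous under the deterministic cyclic schedule: because groups are visited in a fixed order, the anchors $\vv^{e,k}$ are \emph{not} conditionally unbiased estimates of a single global gradient step, and it is precisely the function of the virtual sequences $\hat\vv^{e,0},\tilde\vv^{e,0}$ (and their dual versions) to separate the true-gradient ``signal'' that telescopes cleanly from the mean-zero noise whose accumulated second moment is charged to the variance term. Verifying that the cross-group coupling hidden inside $A_1$ and $A_2$ is fully captured by the drift terms---so that no uncancelled bias of order $\eta K I$ survives---is the delicate step, and the bounded-gradient assumption is what ultimately confines every such residual to the benign $O(\eta^2K^2I^2G^2)$ scale.
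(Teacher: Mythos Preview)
Your proposal is correct and follows essentially the same route as the paper: plug the bounds of Lemmas~\ref{lem:B1_v} and~\ref{lem:B2_alpha} into the decomposition of Lemma~\ref{lem:minimax_one_stage}, telescope the anchor and virtual-sequence distance terms, absorb the $\tfrac{\ell}{3}\|\vv^{e+1,0}-\vv\|^2$ (and its dual analog) against the curvature penalties by an index shift, and control all drift/one-cycle-change terms via the bounded-gradient assumption $\|\vv^{e+1,0}-\vv^{e,0}\|^2,\|\vv^{e,0}-\vv^{e,k}_{m,t}\|^2\le \eta^2K^2I^2G^2$. The only noteworthy discrepancy is in how the $A_3$ term is disposed of: the paper pairs $\tfrac{3\ell+3\ell^2/\mu_2}{2}\|\vv^{e+1,0}-\vv^{e,0}\|^2$ with the negative $-\tfrac{1}{2\eta KI}\|\vv^{e,0}-\vv^{e+1,0}\|^2$ arising from the three-point identity inside Lemma~\ref{lem:B1_v} and cancels the combination under the step-size condition $\eta\le\min\{\tfrac{1}{3\ell+3\ell^2/\mu_2},\tfrac{1}{4\ell}\}$, whereas you bound $A_3$ directly by $O(\eta^2K^2I^2G^2)$ and simply drop that negative piece. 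Both are valid and give the same order; your version avoids invoking the step-size hypothesis explicitly at the cost of a slightly looser constant.
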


\textbf{Remark.} Lemma above shows that the bound for the output of a stage (Algorithm \ref{alg:minimax_one_stage}) depends on the quality of its inputs, $\mathbf{v}_0$ and $\alpha_0$, which are the outputs of th e previous stage. By employing a stagewise algorithm (Algorithm \ref{alg:minimax_outer}) and setting parameters appropriately, we can ensure that the duality gap decreases exponentially across stages. 
Extending the one-stage result to the full double-loop (Algorithm \ref{alg:minimax_outer}) procedure yields the following theorem.

\begin{theorem}
\label{thm:coda_plus} 
Define $\hat{L} \hspace{-0.1in} =  \hspace{-0.1in} L\!+\!2\ell, c=\frac{\mu/\hat{L}}{5+\mu/\hat{L}}$. Set $\gamma \!=\! 2\ell$, $\eta_s = \eta_0 \exp(-(s-1)c)$, 
$T_s = \frac{212}{\eta_0 \min(\ell, \mu_2)} \exp( (s-1)c)$. 
 To return $\vv_S$ such that $\E[\phi(\vv_S) - \phi(\vv^*_{\phi})] \leq \epsilon$, it suffices to choose $S\geq O\left(\frac{5\hat{L}+\mu}{\mu} \max\bigg\{\log \left(\frac{2\Delta_0}{\epsilon}\right), \log S + \log\bigg[ \frac{2\eta_0}{\epsilon} \frac{ 12(\sigma^2)}{5K}\bigg]\bigg\}\right)$.
 The iteration complexity is  $\widetilde{O}\bigg(\frac{\hat{L}}{\mu^2 M\epsilon}\bigg)$ and the communication complexity is $\widetilde{O}\bigg( \frac{K}{\mu^{3/2}\epsilon^{1/2}}\bigg)$ by setting $I_s = \Theta(\frac{1}{K\sqrt{M\eta_s}})$, where $\widetilde{O}$ suppresses logarithmic factors. 
\end{theorem}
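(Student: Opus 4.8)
The plan is to convert the per-stage guarantee into a geometric recursion on the suboptimality $\Delta_s := \E[\phi(\vv_s) - \phi(\vv^*_\phi)]$ and then iterate it, so that the PL condition supplies the contraction and the decaying step size controls the noise floor. First I would specialize the one-stage convergence lemma (the displayed lemma preceding the final Remark) by the standard choice of its free arguments: take $\vv = \vv^*_{\phi_s}$ and let $\alpha$ be the maximizer of $f^s(\bar{\vv},\cdot)$. Because $\gamma = 2\ell$ makes $f^s$ $\ell$-strongly convex in $\vv$ and $\mu_2$-strongly concave in $\alpha$, this choice forces the left-hand side of that lemma to lower-bound the primal subproblem gap, giving
\begin{equation*}
\E[\phi_s(\bar{\vv}) - \phi_s(\vv^*_{\phi_s})] \leq \tfrac{1}{4\eta_s E_s K I_s}\left(\|\vv_{s-1} - \vv^*_{\phi_s}\|^2 + \|\alpha_{s-1} - \alpha^*_s\|^2\right) + C\eta_s^2 I_s^2 K^2 G^2 + \tfrac{3\eta_s\sigma^2}{M},
\end{equation*}
where $\alpha^*_s$ is the saddle dual of stage $s$ and $\bar{\vv} = \vv_s$.

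Next I would control the two initialization distances. Since $\vv_{s-1}$ is the proximal center, the quadratic term vanishes there, so $\phi_s(\vv_{s-1}) = \phi(\vv_{s-1})$ and $\phi_s(\vv^*_{\phi_s}) \geq \phi(\vv^*_\phi)$; hence $\phi_s(\vv_{s-1}) - \phi_s(\vv^*_{\phi_s}) \leq \Delta_{s-1}$, and $\ell$-strong convexity of $\phi_s$ yields $\|\vv_{s-1} - \vv^*_{\phi_s}\|^2 \leq \tfrac{2}{\ell}\Delta_{s-1}$. The dual distance $\|\alpha_{s-1} - \alpha^*_s\|^2$ I would bound analogously through $\mu_2$-strong concavity, relating $\alpha^*_s$ to the previous stage's dual and to the primal iterate. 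Combining these with the PL inequality of Assumption \ref{ass:minimax}(ii)—which is what converts subproblem progress into genuine progress on $\phi$ and where the modulus $\mu$ enters via the proximal-point interpretation—produces the one-step recursion
\begin{equation*}
\Delta_s \leq \left(1 - \tfrac{\mu/\hat{L}}{5 + \mu/\hat{L}}\right)\Delta_{s-1} + b_s \leq e^{-c}\Delta_{s-1} + b_s,
\end{equation*}
with $c$ exactly the stated contraction rate and $b_s$ collecting the drift and variance terms.

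The remaining work is bookkeeping. Setting $\eta_s = \eta_0 e^{-(s-1)c}$ and $T_s = E_s K I_s = \tfrac{212}{\eta_0\min(\ell,\mu_2)}e^{(s-1)c}$ keeps $\eta_s T_s$ constant, so the initialization coefficient $\tfrac{1}{4\eta_s T_s}$ is a fixed small fraction absorbed into the contraction. With $I_s = \Theta\!\big(1/(K\sqrt{M\eta_s})\big)$ the cyclic drift satisfies $C\eta_s^2 I_s^2 K^2 G^2 = \Theta(\eta_s G^2/M)$, so it matches the variance term and $b_s = O(\eta_s/M) = O(\eta_0 e^{-(s-1)c}/M)$ decays at the same geometric rate. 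Unrolling gives $\Delta_S \lesssim e^{-cS}\Delta_0 + S\,e^{-cS}\eta_0/M$; requiring each part to be at most $\epsilon$ yields the stated $S = O\!\left(\tfrac{5\hat{L}+\mu}{\mu}\max\{\log(2\Delta_0/\epsilon),\ \log S + \log(\cdots)\}\right)$ since $1/c = (5\hat{L}+\mu)/\mu$. Summing $T_s$ over $s\leq S$ (dominated by the last, largest term, and using that the $\sigma^2/M$ variance reduction permits $\eta_0 = \Theta(M\mu/\hat{L})$ up to logs) gives the iteration complexity $\widetilde{O}(\hat{L}/(\mu^2 M\epsilon))$, while summing the $E_s K = T_s/I_s = \Theta(K\sqrt{M}\,\eta_s^{-1/2})$ communication rounds gives $\widetilde{O}(K/(\mu^{3/2}\epsilon^{1/2}))$, the $\sqrt{M}$ canceling against $\eta_0 \propto M$.

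I expect the main obstacle to be closing the recursion cleanly in the presence of the dual: the duality-gap form mixes primal and dual distances, and one must show $\|\alpha_{s-1} - \alpha^*_s\|^2$ itself contracts (or is dominated by $\Delta_{s-1}$) rather than accumulating across stages, using only $\mu_2$-strong concavity and the dependence of the saddle dual on the moving primal iterate. A secondary difficulty is verifying that the single choice $I_s = \Theta(1/(K\sqrt{M\eta_s}))$ keeps the cyclic-induced drift $C\eta_s^2 I_s^2 K^2 G^2$ at the same order as the statistical terms at \emph{every} stage, so that the geometric decay is never spoiled; this uniform balancing is precisely what produces the simultaneous $K$- and $\sqrt{M}$-dependence in the two complexities.
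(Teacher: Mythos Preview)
Your high-level outline (one-stage bound $\to$ geometric recursion $\to$ unroll and sum) is the right skeleton, and you correctly flag the dual-tracking problem as the crux. However, the proposal as written has a genuine gap precisely at that point, and the Lyapunov you chose is too weak to close it.

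You define $\Delta_s = \E[\phi(\vv_s)-\phi(\vv^*_\phi)]$ (primal only) and hope to bound $\|\alpha_{s-1}-\alpha^*_s\|^2$ by $\Delta_{s-1}$ ``analogously through $\mu_2$-strong concavity.'' That does not work: $\alpha^*_s$ is the saddle dual of the \emph{new} subproblem (centered at $\vv_{s-1}$), whereas $\alpha_{s-1}$ is the \emph{output} of the previous stage. Strong concavity of $f^s$ in $\alpha$ relates $\|\alpha_{s-1}-\alpha^*_s\|^2$ to a dual gap at stage $s$, not to the primal quantity $\Delta_{s-1}$. So the recursion on your $\Delta_s$ alone cannot be closed without carrying dual information across stages.

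The paper resolves this by using a \emph{combined} Lyapunov,
\[
\widetilde\Delta_s \;=\; \big(\phi(\vv_0^s)-\phi(\vv_*)\big) \;+\; \lambda\cdot \mathrm{Gap}_s(\vv_0^s,\alpha_0^s),
\]
and two auxiliary lemmas (from Yan et al., 2020): (i) for a strongly-convex--strongly-concave problem, the initialization distances $\|\vv_0^s-\hat\vv_s(\alpha_s)\|^2$ and $\|\alpha_0^s-\hat\alpha_s(\vv_s)\|^2$ are bounded by $\mathrm{Gap}_s(\vv_0^s,\alpha_0^s)+\mathrm{Gap}_s(\vv_s,\alpha_s)$; and (ii) $\mathrm{Gap}_s(\vv_s,\alpha_s)$ lower-bounds a positive combination of $\mathrm{Gap}_{s+1}(\vv_0^{s+1},\alpha_0^{s+1})$ and $\phi(\vv_0^{s+1})-\phi(\vv_0^s)$. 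Feeding the one-stage lemma into (i), then (ii), and coupling with a weak-convexity/PL inequality of the form $(4\ell+2\mu)(\phi(\vv_s)-\phi(\vv_*)) - 4\ell(\phi(\vv_{s-1})-\phi(\vv_*)) \le \|\partial\phi_s(\vv_s)\|^2 \le 2\hat L\,\mathrm{Gap}_s(\vv_s,\alpha_s)$ is what produces the contraction $\widetilde\Delta_{s+1}\le \frac{c}{c+2\mu}\widetilde\Delta_s + O(\eta_s/M)$ with the stated rate. Your proposal is missing both this combined potential and the two linking lemmas; without an equivalent mechanism, the dual error can in principle accumulate and the recursion does not close. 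The bookkeeping you outline after that step (parameter choices, summing $T_s$ and $T_s/I_s$) is essentially correct once the recursion is established.
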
 

\textbf{Remark.} \textbf{Linear Speedup and Efficiency:} The iteration complexity exhibits linear speedup with respect to the number of simultaneous participating clients $M$, i.e., $\widetilde{O}(1/(M\epsilon))$. 
This indicates that the proposed algorithm efficiently leverages parallelism even under cyclic participation. The need for a smaller communication interval (scaled by $K$) as the number of groups increases arises naturally from the cyclic participation protocol. A larger $K$ introduces longer delays between consecutive client updates, heightening staleness and drift. Adapting the local update count $I_s$ based on $K$ effectively balances communication efficiency against the resulting error.

\textbf{Comparison to Prior Work:} Our communication complexity matches the dependency on $\mu$ and $\epsilon$ achieved by \citet{guo2020communication,yuan2021federated} under the more idealistic assumption of random client sampling. Thus, our analysis demonstrates that cyclic participation does not degrade the asymptotic convergence rate. 
To our knowledge, this is the first such guarantee for federated minimax optimization.

\section{Pairwise Objective} 
\label{sec:pairwise} 
In this section, we study a broader class of federated pairwise optimization problems, which generalizes AUC maximization to arbitrary pairwise loss functions. We propose a novel algorithm and provide convergence analysis for this setting under cyclic client participation, both with and without assuming the PL condition.
We consider the following federated pairwise objective:
\begin{small} 
\begin{align}\label{problem:fed_pairwise}
    \min_{\w\in\R^d}F(\w)=\frac{1}{N}\sum_{i=1}^N\E_{\z\in\D_1^i}\frac{1}{N}\sum_{j=1}^N\E_{\z'\in\D_2^j}\psi(h(\w;\z), h(\w;\z')). 
\end{align}
\end{small}

A major challenge in optimizing \eqref{problem:fed_pairwise} is that its gradient inherently couples data across all clients.
Let $\nabla_1\psi(\cdot,\cdot)$ and $\nabla_2\psi(\cdot,\cdot)$ denote the partial derivatives of $\psi$ with respect to its first and second arguments, respectively. We decompose the global gradient as: 
\begin{small}
\begin{align*}
\nabla F(\w)=&\frac{1}{N}\sum_{i=1}^N\underbrace{\E_{\z\in\D_1^i}\frac{1}{N}\sum_{j=1}^N\E_{\z'\in\D_2^j}\nabla_1\psi(h(\w;\z), h(\w;\z'))\nabla h(\w;\z)}\limits_{\Delta_{i1}}\\
&\!+\!\frac{1}{N}\!\sum_{i=1}^N\underbrace{\E_{\z'\in\D_2^i}\frac{1}{N}\sum_{j=1}^N\E_{\z\in\D_1^j}\nabla_2\psi(h(\w;\z), h(\w;\z'))\nabla h(\w;\z')}\limits_{\Delta_{i2}}.
\end{align*}
\end{small} 
Defining the local gradient component as $\nabla F_i(\w) := \Delta_{i1} + \Delta_{i2}$, the global gradient can be expressed compactly as
$\nabla F(\w) = \frac{1}{N}\sum_{i=1}^N \nabla F_i(\w)$. 

The difficulty in computing $\Delta_{i1}$ and $\Delta_{i2}$ lies in its dependence on global information—specifically, the expectation over all clients' datasets $\D_2^j$ and $\D_1^j$: 
\begin{align} 
\Delta_{i1}=\E_{\z\in\D_1^i}\frac{1}{N}\sum_{j=1}^N\E_{\z'\in\D_2^j}\nabla_1\psi(\underbrace{h(\w;\z)}_{\text{local}},\underbrace{h(\w;\z')}_{\text{global}})\underbrace{\nabla h(\w;\z)}_{\text{local}}, 
\label{eq:Delta1_decomposition}
\end{align} 
which local components can be computed by each client using its own data but global components depend on data from all clients. 

To address this challenge while maintaining the cyclic participation constraint, we adopt an \textit{active–passive} strategy. To estimate  $\Delta_{i,1}$, each client $i$ computes the active parts using its own data while the passive parts are  contributed by other clients through previously shared global information. 
Specifically, for the $k$-th group in the $e$-th epoch, we estimate the global term by sampling $h^{e-1}_{2, \xi}\in\H^{e-1}_2$ without replacement and compute an estimator of $\Delta_{i1}$ by 
\begin{align}\label{eqn:G1} 
    G^{e,k}_{i,t,1} =  \nabla_1 \psi(\underbrace{h(\w^{e,k}_{m,t};\z^{e,k}_{m,t,1})}\limits_{\text{active}},  \underbrace{h^{e-1}_{2,\xi}}\limits_{\text{passive}}) \underbrace{\nabla h(\w^{e,k}_{m,t};\z^{e,k}_{m,t,1})}\limits_{\text{active}}, 
\end{align} 
and similarly $\Delta_{i2}$ by 
\begin{align}\label{eqn:G2} 
    G^{e,k}_{m,t,2} =  \nabla_1 \psi( \underbrace{h^{e-1}_{1,\xi}}\limits_{\text{passive}}, \underbrace{h(\w^{e,k}_{m,t};\z^{e,k}_{m,t,2})}\limits_{\text{active}}, ) \underbrace{\nabla h(\w^{e,k}_{m,t};\z^{e,k}_{m,t,2})}\limits_{\text{active}}, 
\end{align} 
where local components in $G^{e,k}_{i,t,1}$ are referred as active parts, while global components are referred as passive parts. The passive parts $h^{e-1}_{2,\xi}$ and $h^{e-1}_{1,\xi}$ are constructed by sampling scores from the \textit{previous epoch} ($e-1$). And  $\xi=(\tilde{k}, \tilde{m}, \tilde{t},  \z^{e-1,\tilde{k}}_{\tilde{m}, \tilde{t}, 2})$ represents a random variable that captures the randomness in the sampled group $\tilde{k} \in \{1, \ldots, K\}$, sampled client $\tilde{m} \in\{1, \ldots, \G^{\tilde{k}}\}$, iteration index $\tilde{t} \in\{1, \ldots, I\}$, data sample $\z^{e-1, \tilde{k}}_{\tilde{m},\tilde{t},1}\in\D_{1}^j$ and $\z^{e-1, \tilde{k}}_{\tilde{m},\tilde{t},2}\in\D_{2}^j$ for estimating the global component in~(\ref{eq:Delta1_decomposition}). This delayed estimation bring two challenges: a latency error of using old estimates and interdependence between randomness of different epochs and clients.  

Our algorithm design and analysis differ from \citet{guo2023fedxl} in two key aspects: (1) the passive components are from the previous \textit{epoch} rather than the previous \textit{round}, introducing greater latency but necessary for cyclic analysis; (2) the deterministic cyclic order creates more complex interdependency between clients than random sampling. Unlike our minimax analysis where we could trace back to the start of the current cycle, here we must reference the previous cycle's initial state ($\w^{e-1,0}$) to establish independence, which increases the analytical complexity. A single loop algorithm is shown in Algorithm \ref{alg:FeDXL1}, and a double loop algorithm for leveraging PL condition is shown in \ref{alg:pairwise_outer}.
The One-Stage Federated Pairwise (Algorithm \ref{alg:FeDXL1}: OSFP) algorithm executes a single stage of federated optimization by iterating over multiple epochs. Within each epoch, it sequentially processes each client group. For every group, the server samples a subset of clients and broadcasts the current global model along with reference prediction sets. Each selected client then performs a \textit{LocalUpdate} procedure, which maintains buffers of past predictions, samples new data points, computes pairwise losses and their gradients, and updates the local model using stochastic gradient steps. After completing local computations, clients return the updated models and prediction sets to the server, which aggregates them to update the global model. Once all epochs are completed, OSFP outputs the averaged global model. Furthermore, if the PL condition is satisfied, an outer loop (Algorithm \ref{alg:pairwise_outer}) repeatedly calls OSFP over multiple stages, adjusting learning rates and epoch schedules to progressively refine the model.

\begin{algorithm}[htbp]  
\caption {One Stage Federated Pairwise (OSFP)} \label{alg:FeDXL1} 
\begin{algorithmic}[1] 
\STATE{On Server} 
\STATE{Initialize $\H^{1}_1, \H^{1}_2$ by one initial epoch} 
\FOR{$e \in [E]$}
\STATE {$\H^{e+1}_1, \H^{e+1}_2=\emptyset$ }
\FOR{$k \in [K]$} 
\STATE{Sample $M$ clients from  $k$-th client set uniformly at random w/o replacement to get client set $\G^{e,k}$}  
\STATE{Send global model $\w^{e, k-1}$ to clients in $\S^{e, k}$}  
\STATE{Clients $\S^{e, k}$ in parallel do:} 
\STATE ~~~~ {Sample without replacement $\mathcal R^{e,k}_{i,1}, \mathcal R^{e,k}_{i,2}$ from $\H^{e}_1, \H^{e}_2$, respectively. }
\STATE ~~~~ Send $\mathcal R^{e,k}_{i,1}, \mathcal R^{e,k}_{i,2}$ to client $i$ for all $i\in[N]$  
\STATE{~~~~ $\w^{e,k}_m, \H^{e,k}_{m,1}, \H^{e,k}_{m,2} \leftarrow$ \textit{LocalUpdate}$(\w^{e, k-1})$} 
\STATE {Add $\cup_{m \in \G^{e,k}} \H^{e,k}_{m,1}$ to $\H^{e+1}_{1}$ and $\cup_{m \in \G^{e,k}} \H^{e,k}_{m,2}$ to $\H^{e+1}_{2}$ }  
\STATE Compute $\w^{e,k} = \frac{1}{|\G^{e,k}|}\sum_{m\in\G^{e,k}} \w^{e,k}_{m}$.
\ENDFOR 
\ENDFOR 
\STATE{Return $\frac{1}{E}\sum_e \frac{1}{K} \sum_k \w^{e,k}_{m,K}$}
\vspace*{0.1in} 
\hrule
\vspace*{0.05in}
\STATE{On Client \textit{LocalUpdate}}
\STATE{On Client $i$: {\bf Require} parameters $\eta, K$} 
\STATE{Initialize model $\w_{i,K}^0$ and  initialize Buffer $\B_{i,1}, \B_{i,2}=\emptyset$} 
\STATE Receives $\wb^{e,k-1}$ from the server and set $\w^{e,k}_{i,0} = \wb^{e,k-1}$ 
\STATE Receive $\mathcal R^{e,k}_{i,1}, \mathcal R^{e,k}_{i,2}$ from the server  
\STATE{ Shuffle $\mathcal R^{e,k}_{i,1}, \mathcal R^{e,k}_{i,2}$ and place the results into buffers $\B_{i,1}, \B_{i,2}$ }  
\STATE{Sample $K$ points from $\D_1^i$, compute their predictions using model $\w_{i,K}^{0}$ denoted by $\H^{0}_{i,1}$} 
\STATE{Sample $K$ points from $\D_2^i$, compute their predictions using model $\w_{i,K}^{0}$ denoted by $\H^{0}_{i,2}$}  
\FOR{$k=0, .., K-1$}
\STATE{Sample $\z^{e,k}_{m, t, 1}$ from $\D^m_{1}$, sample $\z^{e,k}_{m,t,2}$ from $\D^m_2$} \hfill $\diamond$ or sample two mini-batches of data 
\STATE{Take next  $h^{e-1}_\xi$ and $h^{e-1}_{\zeta}$ from  $\B_{m,1}$ and $\B_{m,2}$, resp.}
\STATE{Compute $h(\w^{e,k}_{m,t};\z^{e,k}_{m,t,1})$ and $h(\w^{e,k}_{m,t};\z^{e,k}_{m,t,2})$}
\STATE{Add $h(\w^{e,k}_{m,k};\z^{e,k}_{m,t,1})$ into $\H^{e,k}_{m,1}$ and add $h(\w^{e,k}_{m,t};\z^{e,k}_{m,t,2})$ into $\H^{e,k}_{m,2}$}
\STATE {Compute $G^{e,k}_{m,t,1}$ and $G^{e,k}_{m, t, 2}$ according to~(\ref{eqn:G1}) and~(\ref{eqn:G2})}
\STATE{$\w^{e,k}_{m, t+1} = \w^{e,k}_{m, t} - \eta (G^{e,k}_{m,t,1} + G^{e,k}_{m,t,2})$} 
\ENDFOR  
\STATE Sends $\w^{e,k}_{m,t}$ to the server
\STATE{Sends $\H^{e,k}_{m,1}, \H^{e,k}_{m,2}$ to the server} 
\end{algorithmic}  
\label{alg:1}
\end{algorithm} 

\begin{algorithm}[htbp] 
\caption {Federated Pairwise AUC Under PL Condition (CyCP-Pairwise)}   
\begin{algorithmic} 
\STATE{Initialization: $\w_0$} 
\FOR{$s\in [S]$} 
\STATE{$\w_s, \alpha_s$ = OSFP$(\w_{s-1}, \eta_s, E_s)$} 
\ENDFOR
\STATE{Return $\w_S, \alpha_S$}
\end{algorithmic}    
\label{alg:pairwise_outer}    
\end{algorithm}

Our analysis relies on the following standard assumptions for pairwise optimization in problem (\ref{problem:fed_pairwise}).  
\begin{assumption} 
(i) $\psi(\cdot)$ is differentiable, $L_{\psi}$-smooth and $C_{\psi}$-Lipschitz.
(ii) $h(\cdot;\z)$ is differentiable, $L_{h}$-smooth and $C_{h}$-Lipschitz on $\w$ for any $\z \in \mathcal{S}_1 \cup \mathcal{S}_2$.
(iii) $\E_{\z\in\mathcal{S}_1^i} \E_{j\in[1:N]} \E_{\z' \in \mathcal{S}_2^j} \|\nabla_1 \psi(h(\w;\z), h(\w;\z'))\nabla h(\w;\z) \!+\! \nabla_2 \psi(h(\w;\z), h(\w;\z'))\nabla h(\w;\z') \!-\! \nabla F_{i}(\w)  \|^2  \!\leq\! \sigma^2$. 
(iv)  $\exists D$ such that $\|\nabla F_{i}(\w) - \nabla F(\w)\|^2 \leq D^2, \forall i$. 
\label{ass:linear_composition} 
\end{assumption}  

\begin{theorem}
Suppose Assumption \ref{ass:linear_composition} holds. Running Algorithm \ref{alg:FeDXL1} ensures that 
\begin{equation}
\begin{split}
&\frac{1}{E} \sum_{e=1}^E \E\|\nabla F(\w^{e-1})\|^2 \leq 
O\left(\frac{F(\w^{e,0}) - F(\w_*)}{\eta I K E} + \eta^2 I^2 K^2 D^2 + 24 \eta \frac{\sigma^2}{M}  \right). 
\end{split}
\end{equation}
\label{thm:fed_pairwise}
\end{theorem}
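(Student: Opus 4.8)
The plan is to prove a standard nonconvex descent bound, treating one full cycle (epoch) as the unit of progress and controlling the two sources of bias peculiar to this setting: the active--passive estimator and the deterministic cyclic schedule. First I would record that the objective $F$ in~(\ref{problem:fed_pairwise}) is $L_F$-smooth with $L_F = O(C_\psi L_h + L_\psi C_h^2)$; this follows directly from Assumption~\ref{ass:linear_composition}(i)--(ii) by differentiating the composition $\psi(h(\w;\z),h(\w;\z'))$ via the chain and product rules and inserting the Lipschitz/smoothness constants. The same constants show that each stochastic estimator $G^{e,k}_{m,t,1}+G^{e,k}_{m,t,2}$ is uniformly bounded by $2C_\psi C_h$, which is what lets me control model drift. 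With smoothness in hand I would apply the descent lemma over one epoch, writing the total displacement as $\w^{e,0}-\w^{e-1,0} = -\eta\sum_{k=1}^K \frac{1}{M}\sum_{m\in\G^{e,k}}\sum_{t=1}^I (G^{e,k}_{m,t,1}+G^{e,k}_{m,t,2})$ and expanding
\[
F(\w^{e,0}) \le F(\w^{e-1,0}) + \langle \nabla F(\w^{e-1,0}), \w^{e,0}-\w^{e-1,0}\rangle + \frac{L_F}{2}\|\w^{e,0}-\w^{e-1,0}\|^2 .
\]

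The heart of the argument is the conditional expectation of the inner-product term. I would decompose each estimator into four pieces: the true local gradient $\nabla F_i$ at the cycle anchor $\w^{e-1,0}$; the client/local-step drift $\nabla F_i(\w^{e,k}_{m,t}) - \nabla F_i(\w^{e-1,0})$; the latency bias incurred by plugging passive scores $h^{e-1}_{2,\xi}, h^{e-1}_{1,\xi}$ generated in epoch $e-1$ into the passive slot instead of current-model scores; and the mean-zero stochastic noise. Conditioning on all randomness realized up to the start of the previous cycle, the passive scores become independent of the current active sampling, so in expectation the estimator reproduces $\nabla F_i(\w^{e,k}_{m,t})$ up to the passive-slot model mismatch. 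The inner product against $\nabla F(\w^{e-1,0})$ then yields the dominant descent term $-\eta I K \|\nabla F(\w^{e-1,0})\|^2$ plus cross terms, which Young's inequality splits into a small multiple of $\|\nabla F(\w^{e-1,0})\|^2$ and squared error terms that feed the right-hand side.

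I would then bound the three error sources. The client-drift terms $\|\w^{e,k}_{m,t}-\w^{e-1,0}\|^2$ are at most $O(\eta^2 I^2 K^2 C_\psi^2 C_h^2)$ via the uniform gradient bound, while the cross-group bias---arising because the iterate traverses the $K$ groups sequentially and each group's update is skewed toward its local objective---contributes the $\eta^2 I^2 K^2 D^2$ term through the heterogeneity bound in Assumption~\ref{ass:linear_composition}(iv). The latency bias is controlled by $L_\psi C_h \|\w^{e,k}_{m,t}-\w^{e-1,\cdot}\|$ using the smoothness of $\psi$ and Lipschitzness of $h$, which again reduces to per-cycle drift and folds into the $\eta^2 I^2 K^2$ error. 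Sampling $M$ clients without replacement from each active group gives the variance reduction, producing the $\eta\sigma^2/M$ term from Assumption~\ref{ass:linear_composition}(iii). Finally I would telescope $F(\w^{e,0})-F(\w^{e-1,0})$ over $e=1,\dots,E$, divide by $\eta I K E$, and rearrange to isolate $\frac{1}{E}\sum_{e}\E\|\nabla F(\w^{e-1})\|^2$, absorbing the $O(L_F\eta)$ curvature contribution into the step-size condition.

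The main obstacle is the latency/independence argument. Unlike~\citet{guo2023fedxl}, whose passive components lag by a single round, here they lag by a full epoch, so establishing conditional independence forces me to trace back to the initial state $\w^{e-1,0}$ of the previous cycle---effectively a two-cycle delay---and the deterministic group ordering entangles the randomness of consecutive groups rather than keeping it i.i.d. The delicate part is the bookkeeping of which randomness is frozen at each conditioning step and the verification that the accumulated drift over these two cycles stays $O(\eta I K)$, so that the latency bias enters only quadratically; once that is secured, the remaining steps are routine applications of smoothness, Young's inequality, and telescoping.
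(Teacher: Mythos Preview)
Your plan mirrors the paper's proof: apply the smoothness descent lemma over one full cycle, shift the anchor to the \emph{previous} cycle's initial state $\w^{e-1,0}$ so that the passive samples drawn from $\H^{e-1}_1,\H^{e-1}_2$ become conditionally fresh, control drift/latency/variance via Young's inequality, and telescope. The two-cycle trace-back you flag as the main obstacle is exactly what the paper does.

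One imprecision is worth noting: your accounting for the $\eta^2 I^2 K^2 D^2$ term does not work as written. You bound the drift $\|\w^{e,k}_{m,t}-\w^{e-1,0}\|^2$ crudely by $O(\eta^2 I^2 K^2 C_\psi^2 C_h^2)$ and then claim a \emph{separate} heterogeneity contribution from ``cross-group bias.'' But once every estimator is virtualized at $\w^{e-1,0}$, the average over the $K$ groups already reproduces $\nabla F(\w^{e-1,0})$ exactly---there is no residual heterogeneity term in the inner product. The $D^2$ dependence enters only \emph{inside} the drift bound: the paper bounds $\|\w^{e,k}_{m,t}-\w^{e,0}\|^2$ not uniformly but by decomposing each partial sum of local updates (again virtualized at $\w^{e-1,0}$) into a centered-noise piece ($\sigma^2/I$), a heterogeneity piece $\nabla F_\tau-\nabla F$ ($D^2$), a $\|\nabla F(\w^{e-1,0})\|^2$ piece that is later absorbed, and a self-referential drift piece that is closed under the step-size condition $\eta I K \tilde L = O(1)$. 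Your crude uniform bound would yield the theorem with $D^2$ replaced by $C_\psi^2 C_h^2$, which is valid but strictly weaker (and loses the message that homogeneous data, $D=0$, kills the heterogeneity term); to recover the stated $D^2$ dependence you need the refined self-referential drift analysis rather than the uniform estimate.
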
 
\textbf{Remark.} Since $\teta = \eta I$.
By setting $\eta = \epsilon^2 M$, $I = \frac{1}{M K \epsilon}$, $E = \frac{1}{\epsilon^3}$, the iteration complexity is $EKI = O(\frac{1}{M\epsilon^4})$, which demonstrates a linear-speedup by $M$, and communication cost is $EK=O(\frac{K}{\epsilon^3})$. These results matches the results in 
\citep{guo2023fedxl} of full client participation setting. This demonstrates that our algorithm successfully handles the additional challenges of cyclic participation without sacrificing asymptotic efficiency.    
It is important to note that the prediction scores $(\mathcal{H}^e_1, \mathcal{H}^e_2)$ incur no additional computation, as they simply reuse the scores generated during local updates. Since $(\mathcal{H}^e_1, \mathcal{H}^e_2)$ stores only the prediction scores from the previous communication round, the required memory is $O(IMK)$, where $I$ is the communication interval, $K$ is the number of client groups, and $M$ is the number of simultaneously participating clients per group. This storage cost is negligible compared with the number of parameters in a modern neural network and can be adjusted in practice by tuning $M$ or $I$. Each client’s local buffers, $\mathcal{B}_{i,1}$ and $\mathcal{B}_{i,2}$, are of size $O(I)$, which stores sufficient historical predictions used to construct the loss function at every local update iteration. Shuffling these buffers is essential to ensure that, over time, each client has the opportunity to interact with every other client.

Under the additional assumption that $F(\w)$ satisfies the $\mu$-PL condition, which is commonly used in pairwise optimization \citep{yang2021simple}, we obtain significantly improved convergence rates as follows. 
\begin{theorem}
\label{thm:pairwise_PL} 
Suppose Assumption \ref{ass:linear_composition} holds and $F(\cdot)$ satisfies a $\mu$-PL condition. To achieve an $\epsilon$-stationary point by running Algorithm \ref{alg:pairwise_outer},
the iteration complexity is  $\widetilde{O}\bigg(\frac{1}{\mu^2 M\epsilon}\bigg)$ and the communication complexity is $\widetilde{O}\bigg(\frac{K}{\mu^{3/2}\epsilon^{1/2}}\bigg)$ by setting $I_s = \Theta(\frac{1}{\sqrt{K\eta_s}})$,  where $\widetilde{O}$ suppresses logarithmic factors. 
\end{theorem}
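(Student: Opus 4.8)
\emph{Proof plan.} The strategy is to lift the single-stage guarantee of Theorem~\ref{thm:fed_pairwise} into a geometric, multi-stage contraction by invoking the PL condition, exactly paralleling the minimax development that produced Theorem~\ref{thm:coda_plus}. Write $\Delta_s := \E[F(\w_s) - F(\w_*)]$ for the expected suboptimality of the output of stage $s$; by PL this also controls $\E\|\nabla F(\w_s)\|^2$ up to the $\mu$ factors already appearing in the stated rates, so targeting $\Delta_S \le \epsilon$ suffices for an $\epsilon$-stationary point. The whole proof reduces to establishing a per-stage recursion $\Delta_s \le \rho\,\Delta_{s-1} + B_s$ with a constant contraction factor $\rho<1$ and a noise floor $B_s$ that decays geometrically in $s$; summing the resulting per-stage costs then yields the two complexities.

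First I would convert the gradient-norm bound of Theorem~\ref{thm:fed_pairwise} into a function-value descent. Rather than apply Jensen to the averaged output of OSFP (which is invalid for nonconvex $F$), I track the epoch-boundary iterates $\w^{e,0}$ as anchors and combine the smooth-descent inequality underlying Theorem~\ref{thm:fed_pairwise} with the PL inequality $\|\nabla F(\w^{e,0})\|^2 \ge 2\mu (F(\w^{e,0})-F(\w_*))$. Using the effective per-epoch step $\teta_s = \eta_s I_s$, one full cycle of OSFP contracts the anchor suboptimality by a factor $(1 - \Theta(\mu\,\teta_s K))$ up to additive error $O(\eta_s^2 I_s^2 K^2 D^2 + \eta_s \sigma^2/M)$ arising from the client drift across the $K$ sequentially visited groups, the two-cycle latency of the passive components, and the stochastic variance. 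Unrolling over the $E_s$ epochs of the stage gives
\begin{equation*}
\Delta_s \le \left(1 - \Theta(\mu\,\teta_s K)\right)^{E_s}\Delta_{s-1} + O\!\left(\frac{\eta_s^2 I_s^2 K^2 D^2 + \eta_s \sigma^2/M}{\mu\,\teta_s K}\right).
\end{equation*}

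Next I would fix the schedule, mirroring Theorem~\ref{thm:coda_plus}. Choosing $\eta_s = \eta_0 \exp(-(s-1)c)$ with $c = \Theta(\mu)$, $E_s = \Theta(\exp((s-1)c))$, and $I_s = \Theta(1/\sqrt{K\eta_s})$ (which balances the $\eta_s^2 I_s^2 K^2$ drift term against the per-stage communication count) makes the contraction factor a fixed constant $\rho<1$ while forcing the noise floor $B_s$ to shrink geometrically. A standard induction then yields $\Delta_s \le O(\exp(-(s-1)c))$, so $S = O(\tfrac{1}{\mu}\log(1/\epsilon))$ stages suffice for $\Delta_S \le \epsilon$. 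Summing iterations $\sum_{s=1}^S E_s K I_s$ and communications $\sum_{s=1}^S E_s K$ over the geometric schedule—both dominated by the final stage, where $\eta_S \propto \epsilon$—reproduces the claimed $\widetilde{O}(1/(\mu^2 M \epsilon))$ iteration and $\widetilde{O}(K/(\mu^{3/2}\epsilon^{1/2}))$ communication complexities; the $1/M$ factor is inherited directly from the $\sigma^2/M$ variance term, giving the linear speedup.

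The main obstacle is the first step: maintaining a clean \emph{function-value} recursion despite the deterministic cyclic bias. In the non-PL analysis (Theorem~\ref{thm:fed_pairwise}) it suffices to bound the averaged squared gradient, but here the contraction must be stated for $F$ itself while the OSFP output is an average of epoch-group iterates rather than a single anchor. Reconciling this requires showing that the additive errors—in particular the latency error from using previous-cycle prediction scores $\H^{e-1}_1,\H^{e-1}_2$ and the client-drift error accumulated across the $K$ groups—remain $O(\eta_s\cdot\mathrm{poly})$ so they are absorbed into the shrinking noise floor $B_s$ rather than polluting the geometric rate. Because the passive components lag by two full cycles, these error terms are more entangled than in the random-sampling analysis of \citet{guo2023fedxl}, so the bookkeeping of the anchor/virtual sequences used in the proof of Theorem~\ref{thm:fed_pairwise} must be carried uniformly through the PL contraction across all stages.
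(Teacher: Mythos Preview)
Your proposal is plausible but takes a more involved route than the paper. The paper's proof does \emph{not} establish a per-epoch PL contraction inside each stage; instead it treats OSFP (Algorithm~\ref{alg:FeDXL1}) purely as a black box via Theorem~\ref{thm:fed_pairwise}. Concretely, the paper fixes a geometrically decreasing per-stage target $\epsilon_s$ and, at stage $s$, solves for $(\eta_s, I_s, E_s)$ so that each of the three terms on the right-hand side of Theorem~\ref{thm:fed_pairwise}---the initial-value term $(F(\w_{s-1})-F(\w_*))/(\eta I K E)$, the drift term $\eta^2 I^2 K^2 D^2$, and the variance term $\eta\sigma^2/M$---is at most $\mu\epsilon_s/3$. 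Since $F(\w_{s-1})-F(\w_*)\le\epsilon_{s-1}$ by induction, this gives $\E\|\nabla F(\w_s)\|^2\le\mu\epsilon_s$ for the (randomly returned) epoch-start iterate, and one application of PL yields $F(\w_s)-F(\w_*)\le\epsilon_s$, closing the induction. The stated complexities then follow by summing $E_s K I_s$ and $E_s K$ over $S=O(\log(1/\epsilon))$ stages with the explicit choices $\eta_s=O(\mu\epsilon_s M/\sigma^2)$, $I_s=O(\sigma^2/(MK\sqrt{\mu\epsilon_s}))$, $E_s=O(1/(\mu^{3/2}\epsilon_s^{1/2}))$.

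Your plan to derive a per-epoch contraction and then chain stages via the exponential $\eta_s$ schedule of Theorem~\ref{thm:coda_plus} would reach the same destination, but it forces you to carry the one-epoch lag---the descent at $\w^{e+1,0}$ is controlled by $\|\nabla F(\w^{e-1,0})\|^2$, not $\|\nabla F(\w^{e,0})\|^2$---through a delayed recursion, and to justify that the averaged OSFP output is a valid anchor for the next stage. The paper sidesteps both issues by working only at the granularity of whole stages and invoking PL exactly once per stage. Conversely, your route would make the role of $\mu$ in the per-epoch dynamics explicit and would avoid the implicit random-iterate trick the paper needs to pass from the averaged gradient bound of Theorem~\ref{thm:fed_pairwise} to a pointwise one.
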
 

\textbf{Remark.} 
\textbf{Linear Speedup:} Both Theorem \ref{thm:fed_pairwise} and \ref{thm:pairwise_PL} confirm linear speedup with respect to the number of simultaneously participating clients $M$, demonstrating efficient use of parallel resources even under cyclic participation.

\textbf{Impact of PL Condition:} The PL condition dramatically reduce the iteration complexity from $O(1/\epsilon^4)$ to $\widetilde{O}(1/\epsilon)$ and the communication complexity from $O(1/\epsilon^3)$ to $\widetilde{O}(1/\mu^{3/2}\epsilon^{1/2})$.  

\textbf{General Impact:} The problem we consider is general enough to extend beyond AUC maximization to applications such as bipartite ranking, metric learning, and other client-coupled tasks. For instance, consider the contrastive loss commonly used in metric learning \citep{DBLP:conf/cvpr/HadsellCL06}:
\begin{equation}
\psi(h(\w; \z_1), h(\w; \z_2)) = \frac{1}{2}\,\delta(\z_1,\z_2) \, \|h(\w; \z_1) - h(\w; \z_2)\|^2 
+ \frac{1}{2}\,(1-\delta(\z_1,\z_2)) \, \big(\max(0, m - \|h(\w; \z_1) - h(\w; \z_2)\|)\big)^2,
\end{equation}
where $\delta(\z_1, \z_2) = 1$ if $\z_1$ and $\z_2$ are similar, and $0$ otherwise, and $h(\w; \z)$ denotes the embedding of data point $\z$ produced by model $\w$.  
This loss naturally fits into the formulation in Problem~\ref{problem:fed_pairwise} when the data pairs are distributed across multiple clients. 

\textbf{Privacy Considerations.}  
The two algorithmic frameworks proposed in this work require clients to share model parameters—and, for pairwise objectives, to additionally share prediction scores—to enable collaborative optimization, consistent with prior literature \citep{guo2023fedxl,mcmahan2017communication}. However, such information exchange can introduce privacy risks, as model updates and related signals may potentially leak information about individual data points \citep{DBLP:conf/nips/ZhuLH19}.
These risks can be mitigated through several techniques, including: 1) adding noise to ensure differential privacy \citep{abadi2016deep,mcmahan2018learning,truex2020ldp,wei2020federated}; 2) quantization \citep{DBLP:conf/meditcom/KangLHZSL24,DBLP:journals/corr/abs-2306-11913,DBLP:journals/corr/abs-2508-18911}; 3) dropout \citep{DBLP:journals/corr/0002KTW15}; and 4) homomorphic encryption during aggregation \citep{jin2023fedml,fang2021privacy}. 

\section{Experiments}
We evaluate our proposed method on four distinct datasets: CIFAR-10, CIFAR-100, ChestMNIST, and an Insurance fraud dataset to demonstrate the robustness of the proposed algorithms across domains (computer vision, medical imaging, and tabular data) under challenging non-IID and imbalanced conditions. 

For CIFAR-10 and CIFAR-100 \citep{krizhevsky2009learning}, we reformulate the multi-class tasks as binary classification problems by designating one class as positive and the rest as negative. To induce class imbalance, 95\% of positive samples in CIFAR-10 and 50\% in CIFAR-100 are removed. ChestMNIST \citep{yang2021medmnist} is framed as a binary task predicting the presence of a mass. Training data is distributed across 100 clients using a Dirichlet distribution with concentration parameter $dir$ to simulate heterogeneity. ResNet18 \citep{he2016deep} is used for Cifar-10/100 and DenseNet121 \cite{huang2017densely} is used for ChestMNIST.

The Insurance fraud dataset is constructed from Medicare Part B claims (2017–2022) from the Centers for Medicare \& Medicaid Services (CMS) \citep{cmsproviderdata}. Providers are labeled as fraudulent based on the List of Excluded Individuals and Entities (LEIE) \citep{leie2024}. Each U.S. state is treated as a separate client, resulting in 50 clients with naturally heterogeneous data. A chronological split is used: 2017–2020 for training, 2021 for validation, and 2022 for testing. The linear model is used for the insurance data. 

We compare our methods with cyclic-participation baselines: CyCp-FedAVG \citep{cho2023convergence}, Amplified FedAVG (A-FedAVG) \citep{wang2022unified}, and Amplified SCAFFOLD (A-SCAFFOLD) \citep{crawshaw2024federated}. We also include Random Sampling Minimax (RS-Minimax) and Random Sampling PSM (RS-PSM). These two methods are adapted from \cite{guo2020communication} and \cite{guo2023fedxl} and rely on randomly sampling clients in each round. Step sizes are tuned in [1e-1, 1e-2, 1e-3]. We use a surrogate loss $\psi(h1, h2) = \frac{1}{1+\exp((h1-h2)/\lambda)}$ where the scaling factor $\lambda$ is tuned in [1, 1e-1, 1e-2, 1e-3]. Regularization factor $\rho$ for minimax AUC is tuned in [1e-1, 1e-2, 1e-3]. All algorithms are trained for 100 epochs, with stagewise algorithms tuned for initial stage length [1,10,50], decay factor [0.1,0.2,0.5], and stage scaling [2,5,10]. The reported results in all tables are mean and std of three runs of an algorithm

We first set dir$=0.5$ for CIFAR-10/100 and evaluate two settings: (1) \textit{Original labels}, using naturally imbalanced labels; (2) \textit{Flipped labels}, where 20\% of positive labels are randomly flipped to negative. Test AUC results are shown in Tables~\ref{tab:exp:original} and \ref{tab:exp:flip}. Our proposed methods (Minimax and PSM) consistently outperform all baselines across datasets and settings. For instance, on CIFAR-100, PSM improves AUC by 5–6\% over the best baseline; on the Insurance dataset, PSM surpasses FedAVG by over 4\%. Under label flips, Minimax and PSM also significantly outperform baselines. Our proposed methods, CyCP-Minimax and CyCP-PSM, achieve superior performance to these random-sampling baselines in most cases since random sampling does not ensure comprehensive population coverage, which is essential for mitigating non-IID bias. We also emphasize that random sampling implicitly assumes that all sampled clients are available to participate, an assumption that often breaks down in real-world federated environments. In contrast, cyclic client participation overcomes these limitations by utilizing predictable participation.

\textbf{Sensitivity to Heterogeneity and Label Noise}
We further examine the impact of heterogeneity (Dirichlet concentration $dir$) and label noise (flip ratio $flip$) on CIFAR-10, CIFAR-100, and ChestMNIST (Tables \ref{tab:exp:more_cifar10}, \ref{tab:exp:more_cifar100} and \ref{tab:exp:more_chest}). 
The trends are consistent:
1) As heterogeneity increases (smaller Dirichlet $\alpha$), baseline methods exhibit significant drops in AUC, while Minimax and PSM remain stable.
2) Under high label-flip rates ($flip$=0.2), our methods maintain competitive performance, sometimes exceeding the clean-label baselines (e.g., PSM on CIFAR-10).
3) These observations confirm that the proposed algorithms better align client objectives despite data inconsistency and label corruption, validating their theoretical design for robust federated AUC optimization.

\textbf{Ablation on Communication Efficiency}
Figure \ref{fig:ablation_I} presents an ablation study on the communication interval $I$, which controls the number of local updates before synchronization.
Both Minimax and PSM maintain stable test AUC even as $I$ increases, demonstrating that the proposed methods can tolerate infrequent communication without loss of convergence quality. This property makes them well-suited for bandwidth-constrained federated environments, where communication cost dominates training time.

Overall, the experimental results demonstrate that:
1) Performance: Minimax and PSM achieve consistent and significant AUC improvements over state-of-the-art baselines across diverse modalities.
2) Robustness: They remain resilient under severe class imbalance and label noise.
3) Scalability: The methods scale effectively to large heterogeneous client populations and tolerate sparse communication.
Together, these results substantiate the practical advantages of our proposed algorithms for real-world federated learning scenarios involving non-IID, imbalanced, and noisy data distributions.

\begin{table}[htbp]
\small 
\centering
\caption{ Experimental Results on Original Labels. ($dir$=0.5 for Cifar-10/100) }
\begin{tabular}{lcccc}
\toprule
& Cifar-10 & Cifar-100 & ChestMNIST & Insurance    \\
\midrule
CyCp-FedAVG    & $0.7836\pm0.0020$ & $0.8894\pm0.0012$ &  $0.6012\pm0.0016$ & $0.7339\pm0.0035$   \\
A-FedAVG& $0.7950\pm0.0014$ & $0.9137\pm0.0028$ & $0.5994\pm0.0025$ & $0.7384\pm0.0030$ \\
A-SCAFFOLD & $0.7993\pm0.0018$ & $0.9105\pm0.0017$ & $0.6015\pm0.0022$ & $0.7412\pm0.0021$ \\
RS-Minimax  & $0.8153\pm0.0025$  & $0.9392\pm0.0029$    & $0.5697\pm0.0031$ &$0.7419\pm0.0013$ \\ 
RS-Pairwise & $0.8357\pm0.0012$ &$\mathbf{0.9688\pm0.0010}$ &$0.6118\pm0.0020$ & $\mathbf{0.7619\pm0.0011}$ \\
\hline
CyCp-Minimax & $\mathbf{0.8446\pm0.0015}$ &$0.9318\pm0.0014$ & $\mathbf{0.6163\pm0.0026}$& $0.7616\pm0.0017$ \\
CyCp-Pairwise    & $\mathbf{0.8480\pm 0.0008}$ & $\mathbf{0.9694\pm0.0013}$ &$\mathbf{0.6256\pm0.0015}$& $\mathbf{0.7778 \pm 0.0002}$ \\ 
\bottomrule 
\end{tabular}
\label{tab:exp:original}
\end{table}

\begin{table}[htbp]
\small 
\centering
\caption{ Experimental Results on Flipped Labels  ($dir$=0.5 for Cifar-10/100, $flip$=20\%) }
\begin{tabular}{lcccc}
\toprule 
& Cifar-10 & Cifar-100 & ChestMNIST & Insurance  \\
\midrule 
CyCp-FedAVG    & $0.7739\pm0.0031$ &$0.8781\pm0.0019$&$0.5927\pm0.0008$ &  $0.7292\pm0.0029$  \\
A-FedAVG & $0.7891\pm0.0020$& $0.8951\pm0.0023$& $0.5901\pm0.0013$& $0.7298\pm 0.0035$  \\
A-SCAFFOLD &$0.7950\pm0.0025$  &$0.9066\pm0.0024$ &$0.5987\pm0.0011$&$0.7307\pm0.0036$  \\
RS-Minimax & $0.8170\pm0.0021$ &$0.9185\pm0.0019$ &$0.5587\pm0.0026$ &$\mathbf{0.7517\pm0.0038}$\\  
RS-Pairwise & $0.8201\pm0.0017$ &$\mathbf{0.9480\pm0.0020}$ & $0.6004\pm0.0009$ & $0.7173\pm0.0043$ \\ 
\hline
Minimax & $\mathbf{0.8316\pm0.0006}$ &$0.9275\pm0.0015$& $\mathbf{0.6150\pm0.0007}$ & $0.7505\pm0.0041$    \\ 
CyCp-Pairwise     &$\mathbf{0.8424\pm0.0038}$ &$\mathbf{0.9455\pm0.0020}$ & $\mathbf{0.6077\pm0.0012}$ & $\mathbf{0.7722 \pm 0.0018}$ \\ 
\bottomrule
\end{tabular}
\label{tab:exp:flip}
\end{table}

\begin{table}[htbp]
\small 
\centering 
\caption{ Results with different Dirichlet parameter $dir$ and flip ratio $flip$ on Cifar-10 }
\begin{tabular}{lcccc} 
\toprule 
& $dir$=0.1,$flip$=0 &  $dir$=0.1,$flip$=0.2 &  $dir$=10,$flip$=0 & $dir$=10,$flip$=0.2     \\
\midrule
CyCp-FedAVG    &  $0.7581\pm0.0023$ & $0.6800\pm0.0030$  & $0.8227\pm0.0025$ & $0.8199\pm0.0019$  \\
A-FedAVG & $0.7732\pm0.0029$   & $0.7015\pm0.0028$ & $0.8354\pm0.0033$ & $0.8254\pm0.0015$\\
A-SCAFFOLD  & $0.7815\pm0.0016$ & $0.7119\pm0.0024$ & $0.8336\pm0.0019$ & $0.8315\pm0.0011$ \\
RS-Minimax & $0.7976\pm0.0021$  & $\mathbf{0.8083\pm0.0025}$  & $0.8154\pm0.0031$ & $0.8086\pm0.0016$ \\
RS-Pairwise & $0.8059\pm0.0014$ & $0.7712\pm0.0027$ & $0.8304\pm0.0032$  & $0.8217\pm0.0023$ \\ 
\hline
CyCp-Minimax &  $\mathbf{0.8184\pm0.0018}$ & $\mathbf{0.8095\pm0.0017}$&$\mathbf{0.8702\pm0.0023}$ &$\mathbf{0.8511\pm0.0008}$  \\
CyCp-Pairwise    & $\mathbf{0.8283\pm0.0012}$ &$0.7828\pm0.0020$ &$\mathbf{0.8626\pm0.0027}$ &$\mathbf{0.8540\pm0.0014}$ \\ 
\bottomrule 
\end{tabular}
\label{tab:exp:more_cifar10}
\end{table}

We show ablation experiments over the communication interval in Figure \ref{fig:ablation_I}.  We can see that the algorithms can tolerate a large number of local update steps $I$ without degrading the performance, which verifies the communication-efficiency of the proposed algorithms. 

For more ablation experiments, please refer to Appendix \ref{app:experiments}.  
\begin{figure}[htbp]
    \centering
    \includegraphics[width=0.3\linewidth]{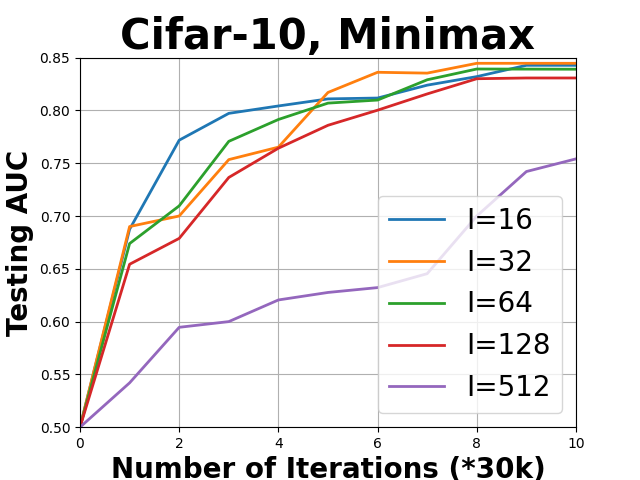}
    \includegraphics[width=0.3\linewidth]{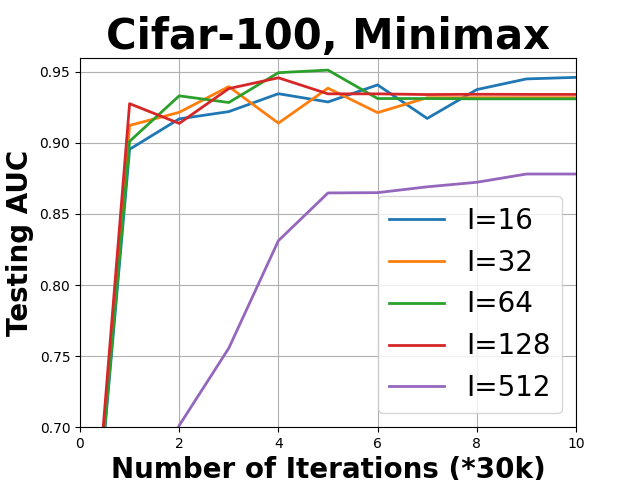}  \\
    \includegraphics[width=0.3\linewidth]{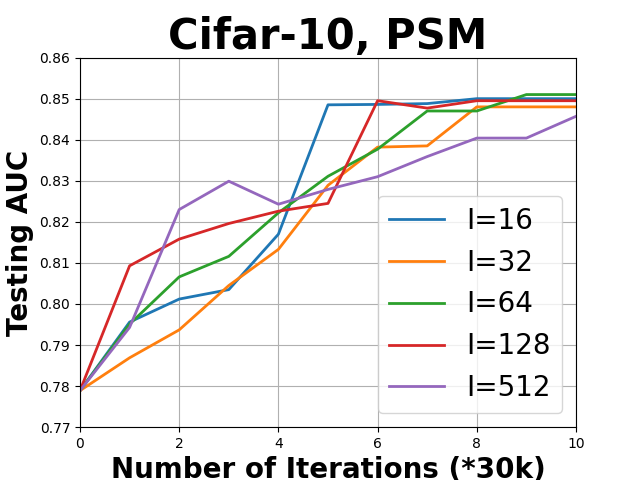}
    \includegraphics[width=0.3\linewidth]{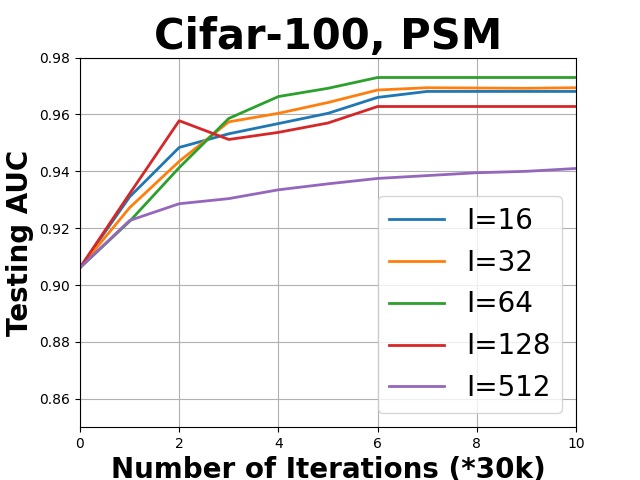}  
    \caption{Ablation Study: The effect of communication interval $I$.}
    \label{fig:ablation_I}
\end{figure} 

\begin{table}[htbp] 
\small 
\centering 
\caption{ Results with different Dirichlet parameter $dir$ and flip ratio $flip$ on Cifar-100 }
\begin{tabular}{lcccc} 
\toprule 
& $dir$=0.1,$flip$=0 &  $dir$=0.1,$flip$=0.2  &  $dir$=10,$flip$=0 & $dir$=10,$flip$=0.2     \\
\midrule
CyCp-FedAVG     & $0.9203\pm0.0009$ & $0.8774\pm0.0018$ & $0.9587\pm0.0025$ &$0.9363\pm0.0024$ \\
A-FedAVG   & $0.9230\pm0.0014$ & $0.8825\pm0.0026$ & $0.9605\pm0.0030$ & $0.9402\pm0.0023$ \\
A-SCAFFOLD & $0.9199\pm0.0021$ & $0.8906\pm0.0015$ & $\mathbf{0.9627\pm0.0027}$ & $0.9421\pm0.0016$\\
RS-Minimax &$0.9216\pm0.0011$ & $0.9123\pm0.0022$ & $0.9520\pm0.0018$& $0.9266\pm0.0030$ \\
RS-Pairwise & $0.9243\pm0.0015$ & $\mathbf{0.9180\pm0.0024}$ & $0.9601\pm0.0017$  & $0.9475\pm 0.0026$ \\ 
\hline
CyCp-Minimax &$\mathbf{0.9299\pm0.0013}$ & $\mathbf{0.9148\pm0.0017}$ &$0.9610\pm0.0023$ &$\mathbf{0.9564\pm0.0019}$  \\  
CyCp-Pairwise   &$\mathbf{0.9308\pm0.0016}$ &  $0.9123\pm0.0028$ & $\mathbf{0.9669\pm0.0015}$ & $\mathbf{0.9525\pm0.0021}$   \\ 
\bottomrule 
\end{tabular}
\label{tab:exp:more_cifar100}
\end{table} 

\begin{table}[htbp]
\small 
\centering 
\caption{ Results with different Dirichlet parameter $dir$ and flip ratio $flip$ on ChestMNIST }
\begin{tabular}{lcccc} 
\toprule 
& $dir$=0.1,$flip$=0 &  $dir$=0.1,$flip$=0.2  &  $dir$=10,$flip$=0 & $dir$=10,$flip$=0.2     \\
\midrule
CyCp-FedAVG     &$\mathbf{0.5584\pm0.0006}$ &$0.5329\pm0.0003$ & $0.6561\pm0.0013$ & $0.6197\pm0.0017$\\
A-FedAVG   &$0.5580\pm0.0004$ & $0.5334\pm0.0006$ & $\mathbf{0.6572\pm0.0015}$ & $0.6201\pm0.0013$  \\
A-SCAFFOLD &$0.5573\pm0.0004$ & $0.5341\pm0.0002$ & $0.6568\pm0.0016$ & $0.6233\pm0.0019$ \\
RS-Minimax & $0.5245\pm0.0011$ & $0.5155\pm0.0004$ &  $0.6541\pm0.0008$ & $0.6232\pm0.0017$ \\
RS-Pairwise & $0.5502\pm0.0007$ & $0.5338\pm0.0009$ & $0.6545\pm0.0012$ & $0.6293\pm0.0014$ \\  
\hline
CyCp-Minimax & $0.5553\pm0.0010$ & $\mathbf{0.5362\pm0.0005}$  & $0.6431\pm0.0012$ & $\mathbf{0.6318\pm0.0018}$ \\
CyCp-Pairwise     &$\mathbf{0.5598\pm0.0009}$  & $\mathbf{0.5498\pm0.0004}$ & $\mathbf{0.6595\pm0.0010}$ & $\mathbf{0.6356\pm0.0023}$ \\ 
\bottomrule 
\end{tabular}
\label{tab:exp:more_chest}
\end{table}

\section{Conclusion}
This paper studies federated optimization of non-ERM objectives, with a focus on AUC maximization under the practical constraint of cyclic client participation. We propose communication-efficient algorithms designed for this setting. For minimax-reformulated AUC, we introduce a stagewise method with auxiliary sequences to manage cyclic dependencies. For general pairwise losses, we develop an active-passive strategy that shares prediction scores across clients. We provide theoretical guarantees on both iteration and communication complexity, and validate the effectiveness of our algorithms on diverse tasks involving 50–100 clients. 

\section{Limitations and Future Work} 
Our work has several limitations. First, the fast communication rate of $\widetilde{O}(1/\epsilon^{1/2})$ depends on the PL condition, which may limit the generalizability of the associated algorithms. Second, the convergence rates in both the PL and non-PL regimes have not yet reached known lower bounds, leaving room for improvement. Third, developing methods to formally ensure differential privacy remains an open question. Finally, extending our approach beyond AUC maximization to a broader class of non-ERM objectives constitutes an important direction for future research. 

\bibliographystyle{tmlr}
\bibliography{ref}

@article{yang2021medmnist,
  title={MedMNIST v2: A large-scale lightweight benchmark for 2D and 3D biomedical image classification},
  author={Yang, Junhao and Shi, Yiding and Zeng, Guoqiang and Zhang, Yutong and Cui, Shaoting},
  journal={Scientific Data},
  volume={8},
  number={1},
  year={2021},
  publisher={Nature Publishing Group}
}

@article{cmsproviderdata,
  author       = {{Centers for Medicare \& Medicaid Services (CMS)}},
  title        = {{Medicare Provider Utilization and Payment Data: Physician and Other Supplier}},
  year         = {2016},
  howpublished = {\url{https://www.cms.gov/Research-Statistics-Data-and-Systems/Statistics-Trends-and-Reports/Medicare-Provider-Charge-Data/Physician-and-Other-Supplier.html}},
  note         = {Accessed: 2024-03-25}
}

@misc{leie2024,
  author       = {{Office of Inspector General, U.S. Department of Health \& Human Services}},
  title        = {{LEIE: Office of Inspector General LEIE Downloadable Databases}},
  year         = {2016},
  howpublished = {\href{https://oig.hhs.gov/exclusions/authorities.asp}{https://oig.hhs.gov/exclusions/authorities.asp}},
  note         = {Accessed: 2024-03-25}
}

@inproceedings{huang2017densely,
  title={Densely connected convolutional networks},
  author={Huang, Gao and Liu, Zhuang and Van Der Maaten, Laurens and Weinberger, Kilian Q},
  booktitle={Proceedings of the IEEE conference on computer vision and pattern recognition},
  pages={4700--4708},
  year={2017}
}

@inproceedings{he2016deep,
  title={Deep residual learning for image recognition},
  author={He, Kaiming and Zhang, Xiangyu and Ren, Shaoqing and Sun, Jian},
  booktitle={Proceedings of the IEEE conference on computer vision and pattern recognition},
  pages={770--778},
  year={2016}
}

@article{pati2022federated,
  title={Federated learning enables big data for rare cancer boundary detection},
  author={Pati, Sarthak and Baid, Ujjwal and Edwards, Brandon and Sheller, Micah and Wang, Shih-Han and Reina, G Anthony and Foley, Patrick and Gruzdev, Alexey and Karkada, Deepthi and Davatzikos, Christos and others},
  journal={Nature communications},
  volume={13},
  number={1},
  pages={7346},
  year={2022},
  publisher={Nature Publishing Group UK London}
}

@inproceedings{mohri2019agnostic,
  title={Agnostic federated learning},
  author={Mohri, Mehryar and Sivek, Gary and Suresh, Ananda Theertha},
  booktitle={International Conference on Machine Learning},
  pages={4615--4625},
  year={2019},
  organization={PMLR}
}

@article{hard2018federated,
  title={Federated learning for mobile keyboard prediction},
  author={Hard, Andrew and Rao, Kanishka and Mathews, Rajiv and Ramaswamy, Swaroop and Beaufays, Fran{\c{c}}oise and Augenstein, Sean and Eichner, Hubert and Kiddon, Chlo{\'e} and Ramage, Daniel},
  journal={arXiv preprint arXiv:1811.03604},
  year={2018}
}

@misc{
wu2022federated,
title={Federated Contrastive Representation Learning with Feature Fusion and Neighborhood Matching},
author={Yawen Wu and Zhepeng Wang and Dewen Zeng and Meng Li and Yiyu Shi and Jingtong Hu},
year={2022},
url={https://openreview.net/forum?id=6LNPEcJAGWe}
}

@inproceedings{DBLP:conf/icml/ZhaoHJY11,
  author    = {Peilin Zhao and
               Steven C. H. Hoi and
               Rong Jin and
               Tianbao Yang},
  title     = {Online {AUC} Maximization},
  booktitle = {Proceedings of the 28th International Conference on Machine Learning,
               {ICML} 2011, Bellevue, Washington, USA, June 28 - July 2, 2011},
  pages     = {233--240},
  year      = {2011},
}

@inproceedings{DBLP:conf/icml/KotlowskiDH11,
  author    = {Wojciech Kotlowski and
               Krzysztof Dembczynski and
               Eyke H{\"{u}}llermeier},
  title     = {Bipartite Ranking through Minimization of Univariate Loss},
  booktitle = {Proceedings of the 28th International Conference on Machine Learning,
               {ICML} 2011, Bellevue, Washington, USA, June 28 - July 2, 2011},
  pages     = {1113--1120},
  publisher = {Omnipress},
  year      = {2011}
}

@inproceedings{DBLP:conf/ijcai/GaoZ15,
  author    = {Wei Gao and
               Zhi{-}Hua Zhou},
  editor    = {Qiang Yang and
               Michael J. Wooldridge},
  title     = {On the Consistency of {AUC} Pairwise Optimization},
  booktitle = {Proceedings of the Twenty-Fourth International Joint Conference on
               Artificial Intelligence, {IJCAI} 2015, Buenos Aires, Argentina, July
               25-31, 2015},
  pages     = {939--945},
  publisher = {{AAAI} Press},
  year      = {2015},
}

@inproceedings{DBLP:conf/pkdd/CaldersJ07,
  author    = {Toon Calders and
               Szymon Jaroszewicz},
  title     = {Efficient {AUC} Optimization for Classification},
  booktitle = {Knowledge Discovery in Databases: {PKDD} 2007, 11th European Conference
               on Principles and Practice of Knowledge Discovery in Databases, Warsaw,
               Poland, September 17-21, 2007, Proceedings},
  series    = {Lecture Notes in Computer Science},
  volume    = {4702},
  pages     = {42--53},
  publisher = {Springer},
  year      = {2007},
}

@inproceedings{DBLP:conf/icml/CharoenphakdeeL19,
  author    = {Nontawat Charoenphakdee and
               Jongyeong Lee and
               Masashi Sugiyama},
  title     = {On Symmetric Losses for Learning from Corrupted Labels},
  booktitle = {Proceedings of the 36th International Conference on Machine Learning,
               {ICML} 2019, 9-15 June 2019, Long Beach, California, {USA}},
  series    = {Proceedings of Machine Learning Research},
  volume    = {97},
  pages     = {961--970},
  publisher = {{PMLR}},
  year      = {2019},
}

@inproceedings{radenovic2016cnn,
  title={CNN image retrieval learns from BoW: Unsupervised fine-tuning with hard examples},
  author={Radenovi{\'c}, Filip and Tolias, Giorgos and Chum, Ond{\v{r}}ej},
  booktitle={European conference on computer vision},
  pages={3--20},
  year={2016},
  organization={Springer}
}

@inproceedings{wu2017sampling,
  title={Sampling matters in deep embedding learning},
  author={Wu, Chao-Yuan and Manmatha, R and Smola, Alexander J and Krahenbuhl, Philipp},
  booktitle={Proceedings of the IEEE international conference on computer vision},
  pages={2840--2848},
  year={2017}
}

@article{dembczynski2012consistent,
  title={Consistent multilabel ranking through univariate losses},
  author={Dembczynski, Krzysztof and Kotlowski, Wojciech and H{\"u}llermeier, Eyke},
  journal={arXiv preprint arXiv:1206.6401},
  year={2012}
}

@article{clemenccon2008ranking,
  title={Ranking and empirical minimization of U-statistics},
  author={Cl{\'e}men{\c{c}}on, St{\'e}phan and Lugosi, G{\'a}bor and Vayatis, Nicolas},
  journal={The Annals of Statistics},
  volume={36},
  number={2},
  pages={844--874},
  year={2008},
  publisher={Institute of Mathematical Statistics}
}

@article{cohen1997learning,
  title={Learning to order things},
  author={Cohen, William W and Schapire, Robert E and Singer, Yoram},
  journal={Advances in neural information processing systems},
  volume={10},
  year={1997}
}

@inproceedings{mcmahan2017communication,
  title={Communication-efficient learning of deep networks from decentralized data},
  author={McMahan, Brendan and Moore, Eider and Ramage, Daniel and Hampson, Seth and y Arcas, Blaise Aguera},
  booktitle={Artificial intelligence and statistics},
  pages={1273--1282},
  year={2017},
  organization={PMLR}
}

@article{kairouz2021advances,
  title={Advances and open problems in federated learning},
  author={Kairouz, Peter and McMahan, H Brendan and Avent, Brendan and Bellet, Aur{\'e}lien and Bennis, Mehdi and Bhagoji, Arjun Nitin and Bonawitz, Kallista and Charles, Zachary and Cormode, Graham and Cummings, Rachel and others},
  journal={Foundations and Trends{\textregistered} in Machine Learning},
  volume={14},
  number={1--2},
  pages={1--210},
  year={2021},
  publisher={Now Publishers, Inc.}
}

@article{konevcny2016federated,
  title={Federated optimization: Distributed machine learning for on-device intelligence},
  author={Kone{\v{c}}n{\`y}, Jakub and McMahan, H Brendan and Ramage, Daniel and Richt{\'a}rik, Peter},
  journal={arXiv preprint arXiv:1610.02527},
  year={2016}
}

@inproceedings{stich2018local,
  title={Local SGD Converges Fast and Communicates Little},
  author={Stich, Sebastian U},
  booktitle={International Conference on Learning Representations},
  year={2018}
}

@inproceedings{yu2019linear,
  title={On the linear speedup analysis of communication efficient momentum SGD for distributed non-convex optimization},
  author={Yu, Hao and Jin, Rong and Yang, Sen},
  booktitle={International Conference on Machine Learning},
  pages={7184--7193},
  year={2019},
  organization={PMLR}
}

@inproceedings{yu2019parallel,
  title={Parallel restarted SGD with faster convergence and less communication: Demystifying why model averaging works for deep learning},
  author={Yu, Hao and Yang, Sen and Zhu, Shenghuo},
  booktitle={Proceedings of the AAAI Conference on Artificial Intelligence},
  pages={5693--5700},
  year={2019}
}

@inproceedings{karimireddy2020scaffold,
  title={Scaffold: Stochastic controlled averaging for federated learning},
  author={Karimireddy, Sai Praneeth and Kale, Satyen and Mohri, Mehryar and Reddi, Sashank and Stich, Sebastian and Suresh, Ananda Theertha},
  booktitle={International Conference on Machine Learning},
  pages={5132--5143},
  year={2020},
  organization={PMLR}
}

@inproceedings{guo2020communication,
  title={Communication-efficient distributed stochastic auc maximization with deep neural networks},
  author={Guo, Zhishuai and Liu, Mingrui and Yuan, Zhuoning and Shen, Li and Liu, Wei and Yang, Tianbao},
  booktitle={International Conference on Machine Learning},
  pages={3864--3874},
  year={2020},
  organization={PMLR}
}

@inproceedings{yuan2021federated,
  title={Federated deep AUC maximization for hetergeneous data with a constant communication complexity},
  author={Yuan, Zhuoning and Guo, Zhishuai and Xu, Yi and Ying, Yiming and Yang, Tianbao},
  booktitle={International Conference on Machine Learning},
  pages={12219--12229},
  year={2021},
  organization={PMLR}
}

@inproceedings{sharma2022federated,
  title={Federated minimax optimization: Improved convergence analyses and algorithms},
  author={Sharma, Pranay and Panda, Rohan and Joshi, Gauri and Varshney, Pramod},
  booktitle={International Conference on Machine Learning},
  pages={19683--19730},
  year={2022},
  organization={PMLR}
}

@inproceedings{gao2022convergence,
  title={On the Convergence of Local Stochastic Compositional Gradient Descent with Momentum},
  author={Gao, Hongchang and Li, Junyi and Huang, Heng},
  booktitle={International Conference on Machine Learning},
  pages={7017--7035},
  year={2022},
  organization={PMLR}
}

@misc{krizhevsky2009learning,
  title={Learning multiple layers of features from tiny images.(2009)},
  author={Krizhevsky, Alex and Hinton, Geoffrey and others},
  year={2009}
}

@article{yang2021simple,
  title={Simple Stochastic and Online Gradient Descent Algorithms for Pairwise Learning},
  author={Yang, Zhenhuan and Lei, Yunwen and Wang, Puyu and Yang, Tianbao and Ying, Yiming},
  journal={Advances in Neural Information Processing Systems},
  volume={34},
  pages={20160--20171},
  year={2021}
 }

@article{li2022fedgrec,
  title={FedGRec: Federated Graph Recommender System with Lazy Update of Latent Embeddings},
  author={Li, Junyi and Huang, Heng},
  journal={arXiv preprint arXiv:2210.13686},
  year={2022}
}

@inproceedings{cho2023convergence,
  title={On the convergence of federated averaging with cyclic client participation},
  author={Cho, Yae Jee and Sharma, Pranay and Joshi, Gauri and Xu, Zheng and Kale, Satyen and Zhang, Tong},
  booktitle={International Conference on Machine Learning},
  pages={5677--5721},
  year={2023},
  organization={PMLR}
}

@inproceedings{ying2016stochastic,
	title={Stochastic Online AUC Maximization},
	author={Ying, Yiming and Wen, Longyin and Lyu, Siwei},
	booktitle={Advances in Neural Information Processing Systems},
	pages={451--459},
	year={2016}
}

@inproceedings{namkoong2016stochastic,
  title={Stochastic gradient methods for distributionally robust optimization with f-divergences},
  author={Namkoong, Hongseok and Duchi, John C},
  booktitle={Advances in Neural Information Processing Systems (NeurIPS)},
  pages={2208--2216},
  year={2016}
}

@inproceedings{deng2021local,
  title={Local stochastic gradient descent ascent: Convergence analysis and communication efficiency},
  author={Deng, Yuyang and Mahdavi, Mehrdad},
  booktitle={International Conference on Artificial Intelligence and Statistics},
  pages={1387--1395},
  year={2021},
  organization={PMLR}
}

@inproceedings{guo2023fedxl,
  title={Fedxl: Provable federated learning for deep x-risk optimization},
  author={Guo, Zhishuai and Jin, Rong and Luo, Jiebo and Yang, Tianbao},
  booktitle={International Conference on Machine Learning},
  pages={11934--11966},
  year={2023},
  organization={PMLR}
}

@article{crawshaw2024federated,
  title={Federated learning under periodic client participation and heterogeneous data: A new communication-efficient algorithm and analysis},
  author={Crawshaw, Michael and Liu, Mingrui},
  journal={Advances in Neural Information Processing Systems},
  volume={37},
  pages={8240--8299},
  year={2024}
}

@inproceedings{kairouz2021practical,
  title={Practical and private (deep) learning without sampling or shuffling},
  author={Kairouz, Peter and McMahan, Brendan and Song, Shuang and Thakkar, Om and Thakurta, Abhradeep and Xu, Zheng},
  booktitle={International Conference on Machine Learning},
  pages={5213--5225},
  year={2021},
  organization={PMLR}
}

@inproceedings{jhunjhunwala2022fedvarp,
  title={Fedvarp: Tackling the variance due to partial client participation in federated learning},
  author={Jhunjhunwala, Divyansh and Sharma, Pranay and Nagarkatti, Aushim and Joshi, Gauri},
  booktitle={Uncertainty in Artificial Intelligence},
  pages={906--916},
  year={2022},
  organization={PMLR}
}

@article{yang2021achieving,
  title={Achieving linear speedup with partial worker participation in non-iid federated learning},
  author={Yang, Haibo and Fang, Minghong and Liu, Jia},
  journal={arXiv preprint arXiv:2101.11203},
  year={2021}
}

@inproceedings{ruan2021towards,
  title={Towards flexible device participation in federated learning},
  author={Ruan, Yichen and Zhang, Xiaoxi and Liang, Shu-Che and Joe-Wong, Carlee},
  booktitle={International conference on artificial intelligence and statistics},
  pages={3403--3411},
  year={2021},
  organization={PMLR}
}

@article{cho2020client,
  title={Client selection in federated learning: Convergence analysis and power-of-choice selection strategies},
  author={Cho, Yae Jee and Wang, Jianyu and Joshi, Gauri},
  journal={arXiv preprint arXiv:2010.01243},
  year={2020}
}

@article{goetz2019active,
  title={Active federated learning},
  author={Goetz, Jack and Malik, Kshitiz and Bui, Duc and Moon, Seungwhan and Liu, Honglei and Kumar, Anuj},
  journal={arXiv preprint arXiv:1909.12641},
  year={2019}
}

@article{wang2022unified,
  title={A unified analysis of federated learning with arbitrary client participation},
  author={Wang, Shiqiang and Ji, Mingyue},
  journal={Advances in Neural Information Processing Systems},
  volume={35},
  pages={19124--19137},
  year={2022}
}

@inproceedings{yan2020optimal,
  title={Optimal Epoch Stochastic Gradient Descent Ascent Methods for Min-Max Optimization},
  author={Yan, Yan and Xu, Yi and Lin, Qihang and Liu, Wei and Yang, Tianbao},
  booktitle={Advances in Neural Information Processing Systems 33 (NeurIPS)},
  year={2020}
}

@book{DBLP:books/sp/Nesterov04,
  author    = {Yurii E. Nesterov},
  title     = {Introductory Lectures on Convex Optimization - {A} Basic Course},
  series    = {Applied Optimization},
  volume    = {87},
  publisher = {Springer},
  year      = {2004}
}

@inproceedings{gao2013one,
	title={One-Pass AUC Optimization.},
	author={Gao, Wei and Jin, Rong and Zhu, Shenghuo and Zhou, Zhi-Hua},
	booktitle={ICML (3)},
	pages={906--914},
	year={2013}
}

@article{zhao2011online,
  title={Online AUC maximization},
  author={Zhao, Peilin and Hoi, Steven CH and Jin, Rong and Yang, Tianbo},
  year={2011},
  publisher={International Machine Learning Society}
}

@article{guo2023fast,
  title={Fast objective \& duality gap convergence for non-convex strongly-concave min-max problems with pl condition},
  author={Guo, Zhishuai and Yan, Yan and Yuan, Zhuoning and Yang, Tianbao},
  journal={Journal of Machine Learning Research},
  volume={24},
  number={148},
  pages={1--63},
  year={2023}
}

@inproceedings{yuan2021robust,
	title={Large-scale Robust Deep AUC Maximization: A New Surrogate Loss and Empirical Studies on Medical Image Classification},
	author={Yuan, Zhuoning and Yan, Yan and Sonka, Milan and Yang, Tianbao},
	booktitle={Proceedings of the IEEE/CVF International Conference on Computer Vision},
	year={2021}
	}

@inproceedings{elkan2001foundations,
  title={The foundations of cost-sensitive learning},
  author={Elkan, Charles},
  booktitle={International joint conference on artificial intelligence},
  volume={17},
  number={1},
  pages={973--978},
  year={2001},
  organization={Lawrence Erlbaum Associates Ltd}
}

@article{cortes2003auc,
  title={AUC optimization vs. error rate minimization},
  author={Cortes, Corinna and Mohri, Mehryar},
  journal={Advances in neural information processing systems},
  volume={16},
  year={2003}
}

@inproceedings{rakotomamonjy2004optimizing,
  title={Optimizing Area Under Roc Curve with SVMs.},
  author={Rakotomamonjy, Alain},
  booktitle={ROCAI},
  pages={71--80},
  year={2004}
}

@inproceedings{DBLP:conf/cvpr/HadsellCL06,
  author       = {Raia Hadsell and
                  Sumit Chopra and
                  Yann LeCun},
  title        = {Dimensionality Reduction by Learning an Invariant Mapping},
  booktitle    = {2006 {IEEE} Computer Society Conference on Computer Vision and Pattern
                  Recognition {(CVPR} 2006), 17-22 June 2006, New York, NY, {USA}},
  pages        = {1735--1742},
  publisher    = {{IEEE} Computer Society},
  year         = {2006},
  url          = {https://doi.org/10.1109/CVPR.2006.100},
  doi          = {10.1109/CVPR.2006.100},
  timestamp    = {Fri, 28 Feb 2025 14:45:26 +0100},
  biburl       = {https://dblp.org/rec/conf/cvpr/HadsellCL06.bib},
  bibsource    = {dblp computer science bibliography, https://dblp.org}
}

@inproceedings{DBLP:conf/meditcom/KangLHZSL24,
  author       = {Tianqu Kang and
                  Lumin Liu and
                  Hengtao He and
                  Jun Zhang and
                  Shenghui Song and
                  Khaled B. Letaief},
  title        = {The Effect of Quantization in Federated Learning:A R{\'{e}}nyi
                  Differential Privacy Perspective},
  booktitle    = {{IEEE} International Mediterranean Conference on Communications and
                  Networking, MeditCom 2024, Madrid, Spain, July 8-11, 2024},
  pages        = {233--238},
  publisher    = {{IEEE}},
  year         = {2024},
  url          = {https://doi.org/10.1109/MeditCom61057.2024.10621075},
  doi          = {10.1109/MEDITCOM61057.2024.10621075},
  timestamp    = {Mon, 03 Mar 2025 21:17:38 +0100},
  biburl       = {https://dblp.org/rec/conf/meditcom/KangLHZSL24.bib},
  bibsource    = {dblp computer science bibliography, https://dblp.org}
}

@article{DBLP:journals/corr/abs-2306-11913,
  author       = {Yeojoon Youn and
                  Zihao Hu and
                  Juba Ziani and
                  Jacob D. Abernethy},
  title        = {Randomized Quantization is All You Need for Differential Privacy in
                  Federated Learning},
  journal      = {CoRR},
  volume       = {abs/2306.11913},
  year         = {2023},
  url          = {https://doi.org/10.48550/arXiv.2306.11913},
  doi          = {10.48550/ARXIV.2306.11913},
  eprinttype    = {arXiv},
  eprint       = {2306.11913},
  timestamp    = {Fri, 23 Jun 2023 15:19:11 +0200},
  biburl       = {https://dblp.org/rec/journals/corr/abs-2306-11913.bib},
  bibsource    = {dblp computer science bibliography, https://dblp.org}
}

@article{DBLP:journals/corr/abs-2508-18911,
  author       = {Zhibo Xu and
                  Jianhao Zhu and
                  Jingwen Xu and
                  Changze Lv and
                  Zisu Huang and
                  Xiaohua Wang and
                  Muling Wu and
                  Qi Qian and
                  Xiaoqing Zheng and
                  Xuanjing Huang},
  title        = {Enhancing Model Privacy in Federated Learning with Random Masking
                  and Quantization},
  journal      = {CoRR},
  volume       = {abs/2508.18911},
  year         = {2025},
  url          = {https://doi.org/10.48550/arXiv.2508.18911},
  doi          = {10.48550/ARXIV.2508.18911},
  eprinttype    = {arXiv},
  eprint       = {2508.18911},
  timestamp    = {Sun, 21 Sep 2025 20:49:04 +0200},
  biburl       = {https://dblp.org/rec/journals/corr/abs-2508-18911.bib},
  bibsource    = {dblp computer science bibliography, https://dblp.org}
}

@inproceedings{abadi2016deep,
  title={Deep learning with differential privacy},
  author={Abadi, Martin and Chu, Andy and Goodfellow, Ian and McMahan, H Brendan and Mironov, Ilya and Talwar, Kunal and Zhang, Li},
  booktitle={Proceedings of the 2016 ACM SIGSAC conference on computer and communications security},
  pages={308--318},
  year={2016}
}

@inproceedings{mcmahan2018learning,
  title={Learning Differentially Private Recurrent Language Models},
  author={McMahan, H Brendan and Ramage, Daniel and Talwar, Kunal and Zhang, Li},
  booktitle={International Conference on Learning Representations},
  year={2018}
}

@inproceedings{truex2020ldp,
  title={LDP-Fed: Federated learning with local differential privacy},
  author={Truex, Stacey and Liu, Ling and Chow, Ka-Ho and Gursoy, Mehmet Emre and Wei, Wenqi},
  booktitle={Proceedings of the third ACM international workshop on edge systems, analytics and networking},
  pages={61--66},
  year={2020}
}

@article{wei2020federated,
  title={Federated learning with differential privacy: Algorithms and performance analysis},
  author={Wei, Kang and Li, Jun and Ding, Ming and Ma, Chuan and Yang, Howard H and Farokhi, Farhad and Jin, Shi and Quek, Tony QS and Poor, H Vincent},
  journal={IEEE transactions on information forensics and security},
  volume={15},
  pages={3454--3469},
  year={2020},
  publisher={IEEE}
}

@article{DBLP:journals/corr/0002KTW15,
  author       = {Prateek Jain and
                  Vivek Kulkarni and
                  Abhradeep Thakurta and
                  Oliver Williams},
  title        = {To Drop or Not to Drop: Robustness, Consistency and Differential Privacy
                  Properties of Dropout},
  journal      = {CoRR},
  volume       = {abs/1503.02031},
  year         = {2015},
  url          = {http://arxiv.org/abs/1503.02031},
  eprinttype    = {arXiv},
  eprint       = {1503.02031},
  timestamp    = {Mon, 13 Aug 2018 16:48:19 +0200},
  biburl       = {https://dblp.org/rec/journals/corr/0002KTW15.bib},
  bibsource    = {dblp computer science bibliography, https://dblp.org}
}

@article{jin2023fedml,
  title={FedML-HE: An efficient homomorphic-encryption-based privacy-preserving federated learning system},
  author={Jin, Weizhao and Yao, Yuhang and Han, Shanshan and Gu, Jiajun and Joe-Wong, Carlee and Ravi, Srivatsan and Avestimehr, Salman and He, Chaoyang},
  journal={arXiv preprint arXiv:2303.10837},
  year={2023}
}

@article{fang2021privacy,
  title={Privacy preserving machine learning with homomorphic encryption and federated learning},
  author={Fang, Haokun and Qian, Quan},
  journal={Future Internet},
  volume={13},
  number={4},
  pages={94},
  year={2021},
  publisher={MDPI}
}

@inproceedings{DBLP:conf/nips/ZhuLH19,
  author       = {Ligeng Zhu and
                  Zhijian Liu and
                  Song Han},
  editor       = {Hanna M. Wallach and
                  Hugo Larochelle and
                  Alina Beygelzimer and
                  Florence d'Alch{\'{e}}{-}Buc and
                  Emily B. Fox and
                  Roman Garnett},
  title        = {Deep Leakage from Gradients},
  booktitle    = {Advances in Neural Information Processing Systems 32: Annual Conference
                  on Neural Information Processing Systems 2019, NeurIPS 2019, December
                  8-14, 2019, Vancouver, BC, Canada},
  pages        = {14747--14756},
  year         = {2019},
  url          = {https://proceedings.neurips.cc/paper/2019/hash/60a6c4002cc7b29142def8871531281a-Abstract.html},
  timestamp    = {Mon, 16 May 2022 15:41:51 +0200},
  biburl       = {https://dblp.org/rec/conf/nips/ZhuLH19.bib},
  bibsource    = {dblp computer science bibliography, https://dblp.org}
}

@inproceedings{DBLP:conf/mlsys/HubaNMZRYWZUSWS22,
  author       = {Dzmitry Huba and
                  John Nguyen and
                  Kshitiz Malik and
                  Ruiyu Zhu and
                  Mike Rabbat and
                  Ashkan Yousefpour and
                  Carole{-}Jean Wu and
                  Hongyuan Zhan and
                  Pavel Ustinov and
                  Harish Srinivas and
                  Kaikai Wang and
                  Anthony Shoumikhin and
                  Jesik Min and
                  Mani Malek},
  editor       = {Diana Marculescu and
                  Yuejie Chi and
                  Carole{-}Jean Wu},
  title        = {{PAPAYA:} Practical, Private, and Scalable Federated Learning},
  booktitle    = {Proceedings of the Fifth Conference on Machine Learning and Systems,
                  MLSys 2022, Santa Clara, CA, USA, August 29 - September 1, 2022},
  publisher    = {mlsys.org},
  year         = {2022},
  url          = {https://proceedings.mlsys.org/paper\_files/paper/2022/hash/a8bc4cb14a20f20d1f96188bd61eec87-Abstract.html},
  timestamp    = {Fri, 28 Jun 2024 10:41:23 +0200},
  biburl       = {https://dblp.org/rec/conf/mlsys/HubaNMZRYWZUSWS22.bib},
  bibsource    = {dblp computer science bibliography, https://dblp.org}
}

@article{DBLP:journals/corr/abs-2102-08503,
  author       = {Matthias Paulik and
                  Matt Seigel and
                  Henry Mason and
                  Dominic Telaar and
                  Joris Kluivers and
                  Rogier C. van Dalen and
                  Chi Wai Lau and
                  Luke Carlson and
                  Filip Granqvist and
                  Chris Vandevelde and
                  Sudeep Agarwal and
                  Julien Freudiger and
                  Andrew Byde and
                  Abhishek Bhowmick and
                  Gaurav Kapoor and
                  Si Beaumont and
                  {\'{A}}ine Cahill and
                  Dominic Hughes and
                  Omid Javidbakht and
                  Fei Dong and
                  Rehan Rishi and
                  Stanley Hung},
  title        = {Federated Evaluation and Tuning for On-Device Personalization: System
                  Design {\&} Applications},
  journal      = {CoRR},
  volume       = {abs/2102.08503},
  year         = {2021},
  url          = {https://arxiv.org/abs/2102.08503},
  eprinttype    = {arXiv},
  eprint       = {2102.08503},
  timestamp    = {Fri, 19 Feb 2021 11:02:14 +0100},
  biburl       = {https://dblp.org/rec/journals/corr/abs-2102-08503.bib},
  bibsource    = {dblp computer science bibliography, https://dblp.org}
}

@article{DBLP:journals/corr/abs-1812-02903,
  author       = {Timothy Yang and
                  Galen Andrew and
                  Hubert Eichner and
                  Haicheng Sun and
                  Wei Li and
                  Nicholas Kong and
                  Daniel Ramage and
                  Fran{\c{c}}oise Beaufays},
  title        = {Applied Federated Learning: Improving Google Keyboard Query Suggestions},
  journal      = {CoRR},
  volume       = {abs/1812.02903},
  year         = {2018},
  url          = {http://arxiv.org/abs/1812.02903},
  eprinttype    = {arXiv},
  eprint       = {1812.02903},
  timestamp    = {Tue, 01 Jan 2019 15:01:25 +0100},
  biburl       = {https://dblp.org/rec/journals/corr/abs-1812-02903.bib},
  bibsource    = {dblp computer science bibliography, https://dblp.org}
}

@inproceedings{DBLP:conf/icml/KairouzM00TX21,
  author       = {Peter Kairouz and
                  Brendan McMahan and
                  Shuang Song and
                  Om Thakkar and
                  Abhradeep Thakurta and
                  Zheng Xu},
  editor       = {Marina Meila and
                  Tong Zhang},
  title        = {Practical and Private (Deep) Learning Without Sampling or Shuffling},
  booktitle    = {Proceedings of the 38th International Conference on Machine Learning,
                  {ICML} 2021, 18-24 July 2021, Virtual Event},
  series       = {Proceedings of Machine Learning Research},
  volume       = {139},
  pages        = {5213--5225},
  publisher    = {{PMLR}},
  year         = {2021},
  url          = {http://proceedings.mlr.press/v139/kairouz21b.html},
  timestamp    = {Tue, 29 Nov 2022 08:48:17 +0100},
  biburl       = {https://dblp.org/rec/conf/icml/KairouzM00TX21.bib},
  bibsource    = {dblp computer science bibliography, https://dblp.org}
}

@article{DBLP:journals/tjs/ZhuZW23,
  author       = {Feng Zhu and
                  Jingjing Zhang and
                  Xin Wang},
  title        = {Communication-efficient local {SGD} with age-based worker selection},
  journal      = {J. Supercomput.},
  volume       = {79},
  number       = {12},
  pages        = {13794--13816},
  year         = {2023},
  url          = {https://doi.org/10.1007/s11227-023-05190-7},
  doi          = {10.1007/S11227-023-05190-7},
  timestamp    = {Mon, 02 Jun 2025 07:55:46 +0200},
  biburl       = {https://dblp.org/rec/journals/tjs/ZhuZW23.bib},
  bibsource    = {dblp computer science bibliography, https://dblp.org}
}

@inproceedings{charoenphakdee2019symmetric,
  title={On symmetric losses for learning from corrupted labels},
  author={Charoenphakdee, Nontawat and Lee, Jongyeong and Sugiyama, Masashi},
  booktitle={International Conference on Machine Learning},
  pages={961--970},
  year={2019},
  organization={PMLR}
}

@article{DBLP:journals/corr/abs-2203-14177,
  author       = {Dixian Zhu and
                  Xiaodong Wu and
                  Tianbao Yang},
  title        = {Benchmarking Deep {AUROC} Optimization: Loss Functions and Algorithmic
                  Choices},
  journal      = {CoRR},
  volume       = {abs/2203.14177},
  year         = {2022},
  url          = {https://doi.org/10.48550/arXiv.2203.14177},
  doi          = {10.48550/ARXIV.2203.14177},
  eprinttype    = {arXiv},
  eprint       = {2203.14177},
  timestamp    = {Sat, 19 Aug 2023 18:09:48 +0200},
  biburl       = {https://dblp.org/rec/journals/corr/abs-2203-14177.bib},
  bibsource    = {dblp computer science bibliography, https://dblp.org}
}

\newpage
\appendix 

\section{Notations} 
\label{app:notations}
\begin{table*}[!htbp] 
	\caption{Notations}\label{tab:notations} 
	\centering 
	\scalebox{0.7}{\begin{tabular}{l l} 
			\toprule 
$\w \in \mathbb{R}^d$& Model parameters of the neural network, variables to be trained  \\
$\w^{e,k}_{m,t} \in \mathbb{R}^d$&Model parameters of machine $m$ of group $k$ at epoch $e$, iteration $t$, stage index is omitted when context ic clear \\
$a,b,\alpha \in \mathbb{R}^1$ & Introduced variables in minimax objective function \\
$\vv = (\w, a, b)\in \mathbb{R}^{d+2}$ & Primal variable in minimax objective function. \\
$\w^{e,k}, \vv^{e,k}, \alpha^{e,k}$ & Outputs of group/round $k$ of epoch $e$. Averages over all participating clients. \\ 
$\z$&A data point\\
$\z_i$& A data point from machine $i$\\
$\z^{e,k}_{m,t}$& A data point sampled on machine $m$ of group $k$ at epoch $e$, iteration $t$ \\
$\z^{e,k}_{m,t,1},  \z^{e,k}_{m,t,2}$& Two independent data points (positive and negative, respectively) sampled on machine $m$ of group $k$ at epoch $e$, iteration $t$ \\
$h(\w;\z) \in \mathbb{R}^1$& The prediction score of data $\z$ by network $\w$ \\
$[X]$ & The set of integers $\{1,2,…,X\}$, where $X$ is an integer. \\
$G^{e,k}_{m,t,1}, G^{e,k}_{m,t,2}$ & Local stochastic estimators of components of gradient\\
$\H^{e}_{1}, \H^{e}_{2}$ & Global buffers of historical prediction scores for positive and negative data. \\
$\H^{e,k}_{m,1}, \H^{e,k}_{m,2}$ & Collected historical prediction scores on machine $m$ of group/round $k$ at epoch $e$ \\
$\mathcal R^{e,k}_{i,1}, \mathcal R^{e,k}_{i,2}$ & Trasmitted buffer from server to client\\
$h^{e-1}_{\epsilon}, h^{e-1}_{\zeta}$ & Predictions scores sampled from the collected scores of epoch $e-1$\\
		\bottomrule 
	\end{tabular}} 
\end{table*}

\section{Analysis of CyCp Minimax}

\subsection{Auxiliary Lemmas} 

For the stagewise algorithm for minimax problem, we define the duality gap of $s$-th stage at a point $(\vv, \alpha)$ as
\begin{equation}
\begin{split}
Gap_s(\vv, \alpha) = \max\limits_{\alpha'} f^s(\vv, \alpha') 
- \min\limits_{\vv'} f^s(\vv', \alpha).  
\end{split}
\end{equation}
Before we show the proofs, we first present the auxiliary lemmas from \cite{yan2020optimal}.

\begin{lemma} [Lemma 1 of \cite{yan2020optimal}]
Suppose a function $f(\vv, \alpha)$ is $\lambda_1$-strongly convex in $\vv$ and $\lambda_2$-strongly concave in $\alpha$. 
Consider the following problem 
\begin{align*}
\min\limits_{\vv\in X} \max\limits_{\alpha\in Y} f(\vv, \alpha), 
\end{align*}
\label{lem:Yan1}
where $X$ and $Y$ are convex compact sets.
Denote $\hat{\vv}_f (\alpha) = \arg\min\limits_{\vv'\in X} f(\vv', \alpha)$ 
and $\hat{\alpha}_f(\vv) = \arg\max\limits_{\alpha' \in Y} f(\vv, \alpha')$.
Suppose we have two solutions $(\vv_0, \alpha_0)$ and $(\vv_1, \alpha_1)$.
Then the following relation between variable distance and duality gap holds
\begin{align}
\begin{split}
\frac{\lambda_1}{4} \|\hat{\vv}_f(\alpha_1)-\vv_0\|^2 + \frac{\lambda_2}{4}\|\hat{\alpha}_f(\vv_1) - \alpha_0\|^2 \leq& 
\max\limits_{\alpha' \in Y} f(\vv_0, \alpha') - \min\limits_{\vv' \in X}f(\vv', \alpha_0) \\ 
&+ \max\limits_{\alpha' \in Y} f(\vv_1, \alpha') - \min\limits_{\vv' \in X} f(\vv', \alpha_1).
\end{split}
\end{align}
\end{lemma}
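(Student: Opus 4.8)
The plan is to establish the two squared-distance terms on the left-hand side separately, each from a single one-sided strong-convexity (respectively strong-concavity) inequality anchored at the appropriate conjugate point, and then to add the two bounds and observe that the four resulting $\max$/$\min$ quantities regroup \emph{exactly} into the claimed right-hand side. No saddle-point existence or minimax theorem is needed; the argument uses only the curvature of $f$ and the definitions of $\hat{\vv}_f$ and $\hat{\alpha}_f$.

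First I would handle the primal term $\frac{\lambda_1}{4}\|\hat{\vv}_f(\alpha_1) - \vv_0\|^2$. Since $f(\cdot,\alpha_1)$ is $\lambda_1$-strongly convex on the convex set $X$ and $\hat{\vv}_f(\alpha_1)$ is its constrained minimizer, the first-order optimality condition $\langle \nabla_\vv f(\hat{\vv}_f(\alpha_1),\alpha_1),\, \vv_0 - \hat{\vv}_f(\alpha_1)\rangle \geq 0$ lets me discard the linear term in the strong-convexity expansion, yielding $f(\vv_0,\alpha_1) \geq f(\hat{\vv}_f(\alpha_1),\alpha_1) + \frac{\lambda_1}{2}\|\vv_0 - \hat{\vv}_f(\alpha_1)\|^2$. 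Rearranging and using $f(\hat{\vv}_f(\alpha_1),\alpha_1) = \min_{\vv'} f(\vv',\alpha_1)$ together with $f(\vv_0,\alpha_1) \leq \max_{\alpha'} f(\vv_0,\alpha')$ gives $\frac{\lambda_1}{2}\|\vv_0 - \hat{\vv}_f(\alpha_1)\|^2 \leq \max_{\alpha'} f(\vv_0,\alpha') - \min_{\vv'} f(\vv',\alpha_1)$.

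By a symmetric argument for the dual term, I would apply $\lambda_2$-strong concavity of $f(\vv_1,\cdot)$ at its constrained maximizer $\hat{\alpha}_f(\vv_1)$, again using the corresponding optimality inequality to drop the linear term, to obtain $\frac{\lambda_2}{2}\|\alpha_0 - \hat{\alpha}_f(\vv_1)\|^2 \leq \max_{\alpha'} f(\vv_1,\alpha') - \min_{\vv'} f(\vv',\alpha_0)$, where I identify $f(\vv_1,\hat{\alpha}_f(\vv_1)) = \max_{\alpha'} f(\vv_1,\alpha')$ and bound $f(\vv_1,\alpha_0) \geq \min_{\vv'} f(\vv',\alpha_0)$. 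Adding the two displayed inequalities, the four terms $\max_{\alpha'} f(\vv_0,\alpha')$, $\max_{\alpha'} f(\vv_1,\alpha')$, $-\min_{\vv'} f(\vv',\alpha_0)$, and $-\min_{\vv'} f(\vv',\alpha_1)$ sum to precisely the right-hand side of the lemma; the apparent cross-pairing of $\alpha_0$ with $\vv_1$ and $\alpha_1$ with $\vv_0$ is immaterial since the four scalars are simply reordered. This in fact proves the stronger bound with constant $\tfrac{1}{2}$, from which the claimed $\tfrac{1}{4}$ follows at once because all left-hand terms are nonnegative.

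The main point requiring care — more a matter of precision than a genuine obstacle — is invoking the constrained first-order optimality conditions correctly, since $\hat{\vv}_f(\alpha_1)$ and $\hat{\alpha}_f(\vv_1)$ are optimizers over the compact convex sets $X$ and $Y$ rather than interior stationary points, so the variational inequalities (not vanishing gradients) are what eliminate the inner-product terms. The only other thing to track is the bookkeeping of which fixed index ($\alpha_0$ versus $\alpha_1$, $\vv_0$ versus $\vv_1$) each $\max$ and $\min$ is evaluated at, so that the final regrouping into the stated right-hand side is exact.
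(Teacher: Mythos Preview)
Your argument is correct and in fact establishes the inequality with the sharper constant $\tfrac{1}{2}$ in place of $\tfrac{1}{4}$. The paper does not give its own proof of this lemma; it is quoted verbatim as Lemma~1 of \cite{yan2020optimal} and closed with a $\Box$, so there is nothing to compare against beyond noting that your self-contained derivation via the two one-sided strong-convexity/concavity bounds at $\hat{\vv}_f(\alpha_1)$ and $\hat{\alpha}_f(\vv_1)$ is sound and that the final regrouping of the four $\max$/$\min$ terms is exactly as you describe.
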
 
$\hfill \Box$

\begin{lemma}[Lemma 5 of \cite{yan2020optimal}]
We have the following lower bound for $\text{Gap}_s(\vv_s, \alpha_s)$ 
\begin{align*} 
\text{Gap}_s(\vv_s, \alpha_s) \geq  \frac{3}{50}\text{Gap}_{s+1}(\vv_0^{s+1}, \alpha_0^{s+1}) + \frac{4}{5}(\phi(\vv_0^{s+1}) - \phi(\vv_0^s)),
\end{align*} 
\label{lem:Yan5} 
\end{lemma}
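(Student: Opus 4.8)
The plan is to establish two independent lower bounds on $\text{Gap}_s(\vv_s,\alpha_s)$ and then take their convex combination. Throughout I use $\vv_0^{s+1}=\vv_s$, $\alpha_0^{s+1}=\alpha_s$, and $\vv_0^s=\vv_{s-1}$, and I recall that since $f(\vv,\alpha;\z)$ is $\ell$-weakly convex in $\vv$ with $\gamma=2\ell$, each regularized objective $f^s(\vv,\alpha)=f(\vv,\alpha)+\frac{\gamma}{2}\|\vv-\vv_{s-1}\|^2$ is $\ell$-strongly convex in $\vv$ and $\mu_2$-strongly concave in $\alpha$; hence strong duality holds and $f^s$ (resp.\ $f^{s+1}$) admits a unique saddle point, which I denote $(\vv^*_{\phi_s},\alpha^s_*)$ (resp.\ $(\vv^*_{\phi_{s+1}},\alpha^{s+1}_*)$). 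The two subproblems differ only in the location of the proximal center ($\vv_{s-1}$ versus $\vv_s$), which is the structural fact I would exploit.

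The first, clean bound is $\text{Gap}_s(\vv_s,\alpha_s)\ge \phi(\vv_s)-\phi(\vv_{s-1})$. Its proof is a pair of one-line estimates: for the primal half, $\max_{\alpha}f^s(\vv_s,\alpha)=\phi(\vv_s)+\frac{\gamma}{2}\|\vv_s-\vv_{s-1}\|^2\ge\phi(\vv_s)$; for the dual half, the minimax inequality gives $\min_{\vv}f^s(\vv,\alpha_s)\le\min_{\vv}\max_{\alpha}f^s(\vv,\alpha)=\phi_s(\vv^*_{\phi_s})\le\phi_s(\vv_{s-1})=\phi(\vv_{s-1})$, where the last equality uses that the regularizer vanishes at its own center. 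Subtracting yields the claim, and this is exactly what produces the $\tfrac{4}{5}$ term.

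The second, harder bound is $\text{Gap}_s(\vv_s,\alpha_s)\ge \tfrac{3}{10}\,\text{Gap}_{s+1}(\vv_s,\alpha_s)$. Here I would first apply Lemma~\ref{lem:Yan1} to $f^s$ with the two solutions $(\vv_s,\alpha_s)$ and the saddle point $(\vv^*_{\phi_s},\alpha^s_*)$, whose own gap contribution vanishes, to obtain the distance lower bound $\text{Gap}_s(\vv_s,\alpha_s)\ge \tfrac{\ell}{4}\|\vv_s-\vv^*_{\phi_s}\|^2+\tfrac{\mu_2}{4}|\alpha_s-\alpha^s_*|^2$. In the reverse direction, I would upper bound $\text{Gap}_{s+1}(\vv_s,\alpha_s)$ by smoothness of $f^{s+1}$ in terms of the squared distances from $(\vv_s,\alpha_s)$ to the stage-$(s+1)$ saddle $(\vv^*_{\phi_{s+1}},\alpha^{s+1}_*)$. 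The two saddle points are then linked: because $f^s$ and $f^{s+1}$ share the same $f$ and differ only by a shift of the proximal center from $\vv_{s-1}$ to $\vv_s$, strong convexity/concavity bounds the displacement $\|\vv^*_{\phi_s}-\vv^*_{\phi_{s+1}}\|$ and $|\alpha^s_*-\alpha^{s+1}_*|$ in terms of the center shift $\|\vv_s-\vv_{s-1}\|$. Crucially, the excess term $\frac{\gamma}{2}\|\vv_s-\vv_{s-1}\|^2$ already sitting inside the primal half of $\text{Gap}_s$ is what absorbs this displacement error, so that no extra $\|\vv_s-\vv_{s-1}\|^2$ survives in the final inequality. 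Combining the distance lower bound on $\text{Gap}_s$, the distance upper bound on $\text{Gap}_{s+1}$, and the saddle-displacement bound, while tracking the ratios of the strong-convexity constants $\ell,\mu_2$ against the smoothness constant, yields the factor $\tfrac{3}{10}$.

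Finally I would combine the two bounds convexly with weight $\tfrac45$ on the first and $\tfrac15$ on the second: since $(1-\tfrac45)\cdot\tfrac{3}{10}=\tfrac{3}{50}$, this gives $\text{Gap}_s(\vv_s,\alpha_s)\ge \tfrac{4}{5}\big(\phi(\vv_s)-\phi(\vv_{s-1})\big)+\tfrac{3}{50}\text{Gap}_{s+1}(\vv_s,\alpha_s)$, which is the statement after substituting $\vv_0^{s+1}=\vv_s$ and $\vv_0^s=\vv_{s-1}$. I expect the main obstacle to be the second bound: precisely controlling how the shift of the proximal center perturbs the saddle point of the regularized minimax problem, and verifying that the resulting constant is exactly $\tfrac{3}{10}$—equivalently, that the displacement error is fully absorbed by the $\frac{\gamma}{2}\|\vv_s-\vv_{s-1}\|^2$ excess rather than leaving a residual term. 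The first bound and the final convex combination are routine.
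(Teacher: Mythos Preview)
The paper does not actually prove this lemma: it is stated in the appendix under ``Auxiliary Lemmas'' as Lemma~5 of \cite{yan2020optimal} and is invoked without any argument, so there is no proof in the paper to compare your proposal against.

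On the substance of your outline: the first bound $\text{Gap}_s(\vv_s,\alpha_s)\ge \phi(\vv_s)-\phi(\vv_{s-1})$ is fully correct as you argue it, and the convex combination at the end is arithmetically sound. The second bound, $\text{Gap}_s(\vv_s,\alpha_s)\ge\tfrac{3}{10}\,\text{Gap}_{s+1}(\vv_s,\alpha_s)$, is indeed the crux, and your plan---Lemma~\ref{lem:Yan1} for a distance lower bound on $\text{Gap}_s$, a smoothness-type upper bound on $\text{Gap}_{s+1}$ in terms of distances to the stage-$(s{+}1)$ saddle, and a perturbation estimate linking the two saddle points via the center shift $\|\vv_s-\vv_{s-1}\|$---is the right shape of argument and matches how such inequalities are typically derived in the proximal-point-for-minimax literature. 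The only thing left genuinely open in your sketch is tracking the constants through the saddle-perturbation step to land on exactly $\tfrac{3}{10}$; this is where the specific choice $\gamma=2\ell$ and the ratio of strong-convexity to smoothness parameters must be used carefully, and you would need to consult or reproduce the calculation from \cite{yan2020optimal} to nail it down.
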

where $\vv_0^{s+1} = \vv_s$ and $\alpha_0^{s+1} = \alpha_s$, i.e., the initialization of $(s+1)$-th stage is the output of the $s$-th stage.

\subsection{Lemmas}
Utilizing the strong convexity/concavity and smoothness of $f(\cdot, \cdot)$, we have the following lemma. 
\begin{lemma}
\label{lem:codaplus_one_state_split}
Suppose Assumption \ref{ass:minimax} holds and by running Algorithm \ref{alg:minimax_one_stage} with input $(\vv^{0,0}, \alpha^{0,0})$. For any $(\vv, \alpha)$, the outputs $(\bar{\vv}, \bar{\alpha})$ satisfies  
\begin{small} 
\begin{equation*} 
\begin{split} 
&f^s(\bar{\vv}, \alpha) - f^s(\vv, \bar{\alpha}) \leq \frac{1}{E}\sum\limits_{e=0}^{E-1}
[f^s({\vv}^{e+1,0}, \alpha) - f^s(\vv, {\alpha}^{e+1,0})] \\
& \leq \frac{1}{E}\sum\limits_{e=0}^{E-1} \bigg[ \underbrace{\langle 
\partial_{\vv}f^s({\vv}^{e,0},{\alpha}^{e,0}), {\vv}^{e+1,0} - \vv\rangle }_{A_1} 
+ \underbrace{\langle \partial_{\alpha}  f^s({\vv}^{e,0}, {\alpha}^{e,0}),  \alpha - {\alpha}^{e+1,0} \rangle}_{A_2}  \\
&~~~~~~~~~~ 
+\underbrace{\frac{3\ell + 3\ell^2/\mu_{2}}{2}  \|\vv^{e+1,0}-\vv^{e,0}\|^2 
 + 2\ell (\alpha^{e+1,0} - \alpha^{e,0})^2 }_{A_3} 
- \frac{\ell}{3} \|\vv - \vv^{e,0}\|^2 
- \frac{\mu_2}{3}(\alpha^{e,0}-\alpha)^2  \bigg].
\end{split} 
\end{equation*}      
\end{small} 
\end{lemma}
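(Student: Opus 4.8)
The plan is to prove the two displayed inequalities separately, since the first merely reduces the gap at the averaged output to an average of per-epoch boundary gaps, while the second is the substantive per-epoch estimate. For the first inequality I would use the convex--concave structure of $f^s$: the choice $\gamma = 2\ell$ makes $f^s(\cdot,\alpha)$ $\ell$-strongly convex in $\vv$, and $f^s(\vv,\cdot)$ inherits $\mu_2$-strong concavity in $\alpha$ from the $-p(1-p)\alpha^2$ term of $F$. Treating the reported output $(\bar{\vv},\bar{\alpha})$ as the epoch average of the boundary iterates $\vv^{e+1,0}$ and $\alpha^{e+1,0}$, Jensen's inequality applied to the convexity in $\vv$ gives $f^s(\bar{\vv},\alpha) \le \frac1E\sum_e f^s(\vv^{e+1,0},\alpha)$, and applied to the concavity in $\alpha$ gives $f^s(\vv,\bar{\alpha}) \ge \frac1E\sum_e f^s(\vv,\alpha^{e+1,0})$; subtracting yields the claim.

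For the second inequality I would fix an epoch $e$ and bound $f^s(\vv^{e+1,0},\alpha) - f^s(\vv,\alpha^{e+1,0})$ by inserting the intermediate value $f^s(\vv,\alpha)$, splitting it into a \emph{primal gap} $f^s(\vv^{e+1,0},\alpha) - f^s(\vv,\alpha)$ and a \emph{dual gap} $f^s(\vv,\alpha) - f^s(\vv,\alpha^{e+1,0})$. Applying $\ell$-strong convexity to the primal gap and $\mu_2$-strong concavity to the dual gap produces the first-order terms $\langle \partial_\vv f^s(\vv^{e+1,0},\alpha),\vv^{e+1,0}-\vv\rangle$ and $\langle \partial_\alpha f^s(\vv,\alpha^{e+1,0}),\alpha-\alpha^{e+1,0}\rangle$, each accompanied by a negative penalty $-\tfrac{\ell}{2}\|\vv-\vv^{e+1,0}\|^2$ and $-\tfrac{\mu_2}{2}(\alpha-\alpha^{e+1,0})^2$.

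The crux is then to re-anchor both gradients at the common epoch-start point $(\vv^{e,0},\alpha^{e,0})$, which produces exactly $A_1$ and $A_2$ plus two error inner products of gradient differences against $\vv^{e+1,0}-\vv$ and $\alpha-\alpha^{e+1,0}$. I would bound each error by decomposing its gradient difference into a pure-$\vv$ part (controlled by the $3\ell$-smoothness of $f^s$ in $\vv$, since the added quadratic contributes $2\ell$) and a pure-$\alpha$ part (controlled by the $\ell$-Lipschitzness of the cross derivatives of $f$), then applying Young's inequality to each. The weights are chosen so that the $\vv$- and $\alpha$-drift factors $\|\vv^{e+1,0}-\vv^{e,0}\|^2$ and $(\alpha^{e+1,0}-\alpha^{e,0})^2$ collect into $A_3$, while the residual $\|\vv^{e+1,0}-\vv\|^2$, $(\alpha-\alpha^{e+1,0})^2$ and $\|\vv-\vv^{e,0}\|^2$, $(\alpha^{e,0}-\alpha)^2$ terms are absorbed by the negative penalties; here the inequality $\|\vv-\vv^{e,0}\|^2 \le 2\|\vv-\vv^{e+1,0}\|^2 + 2\|\vv^{e+1,0}-\vv^{e,0}\|^2$ (and its $\alpha$ analogue) converts reference points, leaving the stated residuals $-\tfrac{\ell}{3}\|\vv-\vv^{e,0}\|^2 - \tfrac{\mu_2}{3}(\alpha^{e,0}-\alpha)^2$.

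I expect the main obstacle to be this cross-term bookkeeping rather than any single inequality. The primal and dual errors each generate mixed $\vv$--$\alpha$ products, and the Young's-inequality weights must be tuned jointly so that every ``distance'' term is dominated by an available strong-convexity or strong-concavity penalty while the leftover constants land at precisely $\tfrac{\ell}{3}$, $\tfrac{\mu_2}{3}$, $\tfrac{3\ell+3\ell^2/\mu_2}{2}$, and $2\ell$. The $\ell^2/\mu_2$ coefficient in $A_3$ is the telltale of the delicate step: it arises exactly when a cross term pairing a $\vv$-direction error against an $\alpha$-distance is balanced against the $\mu_2$-concavity penalty, and ensuring these couplings do not feed back on one another is where the care is needed; the remainder is routine strong-convexity and smoothness algebra.
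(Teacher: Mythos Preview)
Your handling of the first inequality via Jensen is fine and matches the paper. The second inequality, however, has a real gap in the re-anchoring step.

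In your scheme you apply strong convexity at $(\vv^{e+1,0},\alpha)$ and strong concavity at $(\vv,\alpha^{e+1,0})$, then shift both gradients to $(\vv^{e,0},\alpha^{e,0})$. The primal error term is
\[
\big\langle \partial_\vv f^s(\vv^{e+1,0},\alpha)-\partial_\vv f^s(\vv^{e,0},\alpha^{e,0}),\ \vv^{e+1,0}-\vv\big\rangle,
\]
and after splitting the gradient difference into a $\vv$-part and an $\alpha$-part you get a cross contribution of size $\ell\,|\alpha-\alpha^{e,0}|\cdot\|\vv^{e+1,0}-\vv\|$. The factor $\|\vv^{e+1,0}-\vv\|$ is the \emph{distance to the comparison point}, not the epoch drift. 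Young's inequality on this product forces either (i) a term $\frac{\ell^2}{\mu_2}\|\vv^{e+1,0}-\vv\|^2$, which cannot be absorbed by the only available penalty $-\tfrac{\ell}{2}\|\vv^{e+1,0}-\vv\|^2$ when $\mu_2\ll\ell$, or (ii) a term of order $\ell\,(\alpha-\alpha^{e,0})^2$, which cannot be absorbed by the only $\alpha$-penalty $-\tfrac{\mu_2}{2}(\alpha-\alpha^{e+1,0})^2$. The same obstruction appears symmetrically in the dual error, where $\ell\|\vv-\vv^{e,0}\|\cdot|\alpha-\alpha^{e+1,0}|$ emerges. So the constants cannot be tuned to land on $A_3$ as you hope; the $\ell^2/\mu_2$ you correctly anticipate must multiply the \emph{drift} $\|\vv^{e+1,0}-\vv^{e,0}\|^2$, but your decomposition pairs it with the wrong factor.

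The paper avoids this by a different four-inequality combination: strong convexity of $f^s(\cdot,\alpha^{e,0})$ at $\vv^{e,0}$ and $3\ell$-smoothness of $f^s(\cdot,\alpha)$ from $\vv^{e,0}$ to $\vv^{e+1,0}$, plus the analogous pair for $\alpha$. Adding all four, the auxiliary values $f^s(\vv^{e,0},\alpha^{e,0})$, $f^s(\vv,\alpha^{e,0})$, $f^s(\vv^{e,0},\alpha)$ cancel in pairs. The only re-anchoring needed is $\partial_\vv f^s(\vv^{e,0},\alpha)\to\partial_\vv f^s(\vv^{e,0},\alpha^{e,0})$ \emph{inside the smoothness step}, so the cross term is $\ell|\alpha-\alpha^{e,0}|\cdot\|\vv^{e+1,0}-\vv^{e,0}\|$; Young then deposits $\tfrac{3\ell^2}{2\mu_2}\|\vv^{e+1,0}-\vv^{e,0}\|^2$ directly into $A_3$ and only $\tfrac{\mu_2}{6}(\alpha-\alpha^{e,0})^2$ against the concavity penalty. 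The dual side similarly yields $\tfrac{\ell}{6}\|\vv-\vv^{e,0}\|^2$ against the convexity penalty, giving the final $-\tfrac{\ell}{3}$ and $-\tfrac{\mu_2}{3}$.
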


\begin{proof} 
For any $\vv$ and $\alpha$,
using Jensen's inequality and the fact that $f^s(\vv, \alpha)$ is convex in $\vv$ and concave in $\alpha$, 
\begin{small}
\begin{equation}
\begin{split}
& f^s (\bar{\vv}, \alpha) - f^s(\vv, \bar{\alpha}) \leq \frac{1}{E}\sum\limits_{e=1}^{E}  \left(f^s(\vv^{e,0}, \alpha) - f^s(\vv, \alpha^{e,0})\right). 
\end{split} 
\label{gap_relax}  
\end{equation} 
\end{small}

By $\ell$-strongly convexity of $f^s(\vv, \alpha)$ in $\vv$, we have
\begin{equation} 
f^s(\vv^{e,0}, \alpha^{e,0}) + \langle \partial_\vv f^s(\vv^{e,0}, \alpha^{e,0}), \vv - \vv^{e,0} \rangle + \frac{\ell}{2} \|\vv^{e,0}-\vv\|^2 \leq f(\vv, \alpha^{e,0}).  
\label{strong_v} 
\end{equation}

By $3\ell$-smoothness of $f^s(\vv, \alpha)$ in $\vv$, we have
\begin{equation} 
\begin{split}
& f^s(\vv^{e+1,0}, \alpha) \leq f^s(\vv^{e,0}, \alpha) + \langle \partial_\vv f^s(\vv^{e,0}, \alpha), \vv^{e+1,0}-\vv^{e,0}\rangle + \frac{3\ell}{2}\|\vv^{e+1,0}-\vv^{e,0}\|^2 \\ 
&= f^s(\vv^{e,0}, \alpha) + \langle \partial_\vv f^s(\vv^{e,0}, \alpha^{e,0}), \vv^{e+1,0}-\vv^{e,0}\rangle + \frac{3\ell}{2}\|\vv^{e+1,0}-\vv^{e,0}\|^2 \\
&~~~+ \langle \partial_\vv f^s(\vv^{e,0}, \alpha)-\partial_\vv f^s(\vv^{e,0}, \alpha^{e,0}), \vv^{e+1,0}-\vv^{e,0}\rangle \\
&\overset{(a)}\leq f^s(\vv^{e,0}, \alpha) + \langle \partial_{\vv}f^s(\vv^{e,0}, \alpha^{e,0}), \vv^{e+1,0} - \vv^{e,0}\rangle + \frac{3\ell}{2}\|\vv^{e+1,0} - \vv^{e,0}\|^2 \\
&~~~~ + \ell |\alpha^{e,0} - \alpha| \|\vv^{e+1,0} - \vv^{e,0}\| \\
&\overset{(b)}\leq f^s(\vv^{e,0}, \alpha) + \langle \partial_{\vv}f^s(\vv^{e,0}, \alpha^{e,0}), \vv^{e+1,0} - \vv^{e,0}\rangle + \frac{3\ell}{2}\|\vv^{e+1,0} - \vv^{e,0}\|^2 \\
&~~~~ + \frac{\mu_2}{6} (\alpha^{e,0} - \alpha)^2 + \frac{3\ell^2}{2\mu_2} \|\bar{\vv}^{e+1, 0}- \vv^{e,0}\|^2, \\ 
\end{split}  
\label{smooth_v}
\end{equation}
where $(a)$ holds because that we know $\partial_{\vv} f(\vv, \alpha)$ is $\ell$-Lipschitz in $\alpha$ since $f(\vv, \alpha)$ is $\ell$-smooth, $(b)$ holds by Young's inequality, and $\mu_2=2p(1-p)$ is the strong concavity coefficient of $f^s$ in $\alpha$. 

Adding (\ref{strong_v})  and (\ref{smooth_v}),  rearranging terms, we have
\begin{small} 
\begin{equation}  
\begin{split}
& f^s(\vv^{e,0}, \alpha^{e,0}) + f^s(\vv^{e+1,0}, \alpha)  \\
& \leq f(\vv, \alpha^{e,0})
+ f(\vv^{e,0}, \alpha) + \langle \partial_{\vv}f(\vv^{e,0}, \alpha^{e,0}), \vv^{e+1,0} - \vv\rangle  + \frac{3\ell + 3\ell^2/\mu_{2}}{2} \|\vv^{e+1,0}-\vv^{e,0}\|^2 \\
&~~~ - \frac{\ell}{2}\|\vv^{e,0}-\vv\|^2   
 + \frac{\mu_{2}}{6} (\alpha^{e,0} - \alpha)^2.
\end{split} 
\label{sum_v}  
\end{equation}
\end{small}

We know that $-f(\vv, \alpha)$ is $\mu_{2}$-strong convexity of  in $\alpha$. 
Thus, we have 
\begin{small}
\begin{equation}
\begin{split} 
-f^s(\vv^{e,0}, \alpha^{e,0}) 
- \partial_{\alpha}f^s(\vv^{e,0}, \alpha^{e,0})^\top(\alpha - \alpha^{e,0}) + \frac{\mu_{2}}{2}(\alpha - \alpha^{e,0})^2 \leq -f^s(\vv^{e,0}, \alpha). 
\end{split} 
\label{strong_alpha} 
\end{equation}
\end{small}

Since $f(\vv, \alpha)$ is $\ell$-smooth in $\alpha$, we get 
\begin{small} 
\begin{equation} 
\begin{split} 
& - f^s(\vv, \alpha^{e+1,0}) \leq -f^s(\vv, \alpha^{e,0}) - \langle \partial_{\alpha}f^s(\vv, \alpha^{e,0}), \alpha^{e+1,0} - \alpha^{e,0} \rangle + \frac{\ell}{2}(\alpha^{e+1,0}- \alpha^{e,0})^2\\
&= -f^s(\vv, \alpha^{e,0}) 
- \langle \partial_{\alpha} f^s(\vv^{e,0}, \alpha^{e,0}), \alpha^{e+1,0} - \alpha^{e,0}\rangle 
+ \frac{\ell}{2}(\alpha^{e+1,0} - \alpha^{e,0})^2 \\
&~~~ - \langle \partial_{\alpha} f^s(\vv, \alpha^{e,0}) - \partial_{\alpha} f^s(\vv^{e,0}, \alpha^{e,0}), \alpha^{e+1,0} - \alpha^{e,0}\rangle  \\
&\overset{(a)}\leq -f^s(\vv, \alpha^{e,0}) - \langle \partial_{\alpha} f^s(\vv^{e,0}, \alpha^{e,0}), \alpha^{e+1,0} - \alpha^{e,0}\rangle + \frac{\ell}{2} (\alpha^{e+1,0} - \alpha^{e,0})^2 + \ell\| \vv - \vv^{e,0}\| |\alpha^{e+1,0} - \alpha^{e,0}|\\
&\leq -f^s(\vv, \alpha^{e,0}) - \langle \partial_{\alpha} f^s(\vv^{e,0}, \alpha^{e,0}), \alpha^{e+1,0} - \alpha^{e,0}\rangle 
+ \frac{\ell}{2} (\alpha^{e+1,0} - \alpha^{e,0})^2 + \frac{\ell}{6} \| \vv^{e,0} - \vv\|^2  +  \frac{3\ell}{2}(\alpha^{e+1,0} - \alpha^{e,0})^2,   
\end{split} 
\label{smooth_alpha}
\end{equation}
\end{small}
where (a) holds because that $\partial_{\alpha} f^s(\vv, \alpha)$ is $\ell$-Lipschitz in $\vv$. 

Adding (\ref{strong_alpha}), (\ref{smooth_alpha}) and arranging terms, we have
\begin{small}    
\begin{equation} 
\begin{split}
& -f^s(\vv^{e,0}, \alpha^{e,0}) 
- f^s(\vv, \alpha^{e+1,0}) 
\leq-f^s(\vv^{e,0}, \alpha) 
-f^s(\vv, \alpha^{e,0}) - \langle \partial_{\alpha} f^s(\vv^{e,0}, \alpha^{e,0}), \alpha^{e+1,0} - \alpha\rangle \\
&~~~~~~ + 2\ell (\alpha^{e+1,0}- \alpha^{e,0})^2 
+ \frac{\ell}{6} \| \vv^{e,0} - \vv\|^2 
- \frac{\mu_{2}}{2}(\alpha - \alpha^{e,0})^2.
\end{split}   
\label{sum_alpha} 
\end{equation}
\end{small}

Adding (\ref{sum_v}) and (\ref{sum_alpha}), we get
\begin{equation} 
\begin{split} 
 &f^s(\vv^{e+1,0}, \alpha) - f^s(\vv, \alpha^{e+1,0}) \leq   
\langle \partial_{\vv}f(\vv^{e,0}, \alpha^{e,0}), \vv^{e+1,0} - \vv\rangle - \langle \partial_{\alpha} f(\vv^{e,0}, \alpha^{e,0}), \alpha^{e+1,0} - \alpha\rangle \\  
&~~~ +\frac{3\ell + 3\ell^2/\mu_{2}}{2}   \|\vv^{e+1,0}-\vv^{e,0}\|^2 
 + 2\ell (\alpha^{e+1,0} - \alpha^{e-1,0})^2 -\frac{\ell}{3}\|\vv^{e,0}-\vv\|^2   - \frac{\mu_2}{3} (\alpha^{e,0}-\alpha)^2. 
\end{split} 
\end{equation}

Taking average over $e, k, t$, we get
\begin{small} 
\begin{equation*} 
\begin{split} 
&f^s(\bar{\vv}, \alpha) - f^s(\vv, \bar{\alpha}) \leq \frac{1}{E}\sum\limits_{e=0}^{E-1}
[f^s(\vv^{e+1,0}, \alpha) - f^s(\vv, \alpha^{e+1,0})] \\
& \leq \frac{1}{E}\sum\limits_{e=0}^{E-1} \bigg[ \underbrace{\langle 
\partial_{\vv}f^s(\vv^{e,0}, \alpha^{e,0}), \vv^{e+1,0} - \vv\rangle }_{A_1} 
+ \underbrace{\langle \partial_{\alpha}  f^s(\vv^{e,0}, \alpha^{e,0}),  \alpha - \alpha^{e+1,0} \rangle}_{A_2}  \\
&~~~~~~~~~~ 
+\underbrace{\frac{3\ell + 3\ell^2/\mu_{2}}{2}  \|\vv^{e+1,0}-\vv^{e,0}\|^2 
 + 2\ell (\alpha^{e+1,0} - \alpha^{e,0})^2 }_{A_3} 
- \frac{\ell}{3} \|\vv - \vv^{e,0}\|^2 
- \frac{\mu_2}{3}(\alpha^{e,0}-\alpha)^2  \bigg].
\end{split} 
\end{equation*}      
\end{small}
\end{proof}

In the following, we will bound the term $A_1$ by Lemma \ref{lem:A1_codaplus}, $A_2$ by Lemma \ref{lem:B2_alpha}.   

\paragraph{Proof of Lemma \ref{lem:A1_codaplus}}.
\begin{proof} 

\begin{equation}
\begin{split}
&\langle \nabla_{\vv}f^s(\vv^{e,0}, \alpha^{e,0}), \vv^{e+1,0} - \vv\rangle = \bigg\langle \frac{1}{K} \sum\limits_{k=1}^K \frac{1}{M}\sum\limits_{m\in\G^{e,k}} \nabla_{\vv}f^s_{k,m}(\vv^{e,0}, \alpha^{e,0}), \vv^{e+1,0} - \vv\bigg\rangle \\ 
&= \bigg\langle \frac{1}{K} \sum\limits_{k=1}^K \frac{1}{M}\sum\limits_{m\in\G^{e,k}}[\frac{1}{I} \sum_t\nabla_{\vv}f^s_{k,m}(\vv^{e,0},\alpha^{e,0})] - \frac{1}{K} \sum\limits_{k=1}^K \frac{1}{M}\sum\limits_{m\in\G^{e,k}} [\frac{1}{I} \sum_t\nabla_{\vv}F^s_{k,m}({\vv}^{e,k}_{m,t}, {\alpha}^{e,k}_{m,t}; \z^{e,k}_{m,t})], \vv^{e+1,0} - \vv\bigg\rangle~~~ \textcircled{\small{3}}  \\
&~~~ +\bigg\langle \frac{1}{K} \sum\limits_{k=1}^K \frac{1}{M}\sum\limits_{m\in\G^{e,k}} [\frac{1}{I} \sum_t \nabla_{\vv}F^s_{k,m}(\vv^{e,k}_{m,t}, \alpha^{e,k}_{m,t}; \z^{e,k}_{m,t})], \vv^{e+1,0} - \vv\bigg\rangle~~~~~~~~~~~\textcircled{\small{4}}.  
\end{split} 
\label{circledv}
\end{equation} 

Using $\hat{\vv}^{e+1,0} = \vv^{e,0} - \eta I \sum\limits_{k=1}^K \frac{1}{M}\sum\limits_{m\in\G^{e,k}} \nabla_{\vv} f^s(\vv^{e,0}, \alpha^{e,0})$, 
then we have 
\begin{equation}
\begin{split}
\vv^{e+1,0} - \hat{\vv}^{e+1,0} =\eta \bigg(\sum\limits_{k=1}^K \frac{1}{M}\sum\limits_{m\in\G^{e,k}} \sum\limits_{t=1}^{I} \nabla_{\vv}f^s_{k,m}(\vv^{e,0}, \alpha^{e,0}) - \sum\limits_{k=1}^K \frac{1}{M}\sum\limits_{m\in\G^{e,k}} \sum\limits_{t=1}^{I} \nabla_{\vv}f^s_{k,m}(\vv^{e,k}_{m,t}, \alpha^{e,k}_{m,t}; \z^{e,k}_{m,t}) \bigg).  
\end{split}
\end{equation}

Hence we get 
\begin{equation}
\begin{split}
&\textcircled{\small{3}} = \bigg\langle \frac{1}{K} \sum\limits_{k=1}^K \frac{1}{M}\sum\limits_{m\in\G^{e,k}}[\frac{1}{I} \sum_t\nabla_{\vv}f^s_{k,m}(\vv^{e,0},\alpha^{e,0})] - \frac{1}{K} \sum\limits_{k=1}^K \frac{1}{M}\sum\limits_{m\in\G^{e,k}} [\frac{1}{I} \sum_t\nabla_{\vv}F^s_{k,m}({\vv}^{e,k}_{m,t}, {\alpha}^{e,k}_{m,t}; \z^{e,k}_{m,t})], \vv^{e+1,0} - \hat{\vv}^{e+1,0} \bigg\rangle \\
&+ \bigg\langle \frac{1}{K} \sum\limits_{k=1}^K \frac{1}{M}\sum\limits_{m\in\G^{e,k}}[\frac{1}{I} \sum_t\nabla_{\vv}f^s_{k,m}(\vv^{e,0},\alpha^{e,0})] - \frac{1}{K} \sum\limits_{k=1}^K \frac{1}{M}\sum\limits_{m\in\G^{e,k}} [\frac{1}{I} \sum_t\nabla_{\vv}F^s_{k,m}({\vv}^{e,k}_{m,t}, {\alpha}^{e,k}_{m,t}; \z^{e,k}_{m,t})], \hat{\vv}^{e+1,0} - \vv\bigg\rangle
\\
& = \eta K I \left\|\frac{1}{K}\sum\limits_{k=1}^{K} \frac{1}{M}\sum\limits_{m\in\G^{e,k}} \frac{1}{I}\sum_t [\nabla_{\vv}f^s_{k,m}(\vv^{e,0}, \alpha^{e,0}) - \nabla_{\vv} F^s_{k,m}(\vv^{e,k}_{m,t}, \alpha^{e,k}_{m,t}; z^{e,k}_{m,t})] \right\|^2\\ 
& + \bigg\langle \frac{1}{K} \sum\limits_{k=1}^K \frac{1}{M}\sum\limits_{m\in\G^{e,k}}[\frac{1}{I} \sum_t\nabla_{\vv}f^s_{k,m}(\vv^{e,0},\alpha^{e,0})] - \frac{1}{K} \sum\limits_{k=1}^K \frac{1}{M}\sum\limits_{m\in\G^{e,k}} [\frac{1}{I} \sum_t\nabla_{\vv}F^s_{k,m}({\vv}^{e,0}, {\alpha}^{e,0}; \z^{e,k}_{m,t})], \hat{\vv}^{e+1,0} - \vv\bigg\rangle \\ 
& + \bigg\langle \frac{1}{K} \sum\limits_{k=1}^K \frac{1}{M}\sum\limits_{m\in\G^{e,k}}[\frac{1}{I} \sum_t\nabla_{\vv}F^s_{k,m}({\vv}^{e,0}, {\alpha}^{e,0}; \z^{e,k}_{m,t})] - \frac{1}{K} \sum\limits_{k=1}^K \frac{1}{M}\sum\limits_{m\in\G^{e,k}} [\frac{1}{I} \sum_t\nabla_{\vv}F^s_{k,m}({\vv}^{e,k}_{m,t}, {\alpha}^{e,k}_{m,t}; \z^{e,k}_{m,t})], \hat{\vv}^{e+1,0} - \vv\bigg\rangle,  
\end{split}
\label{pre_circled3}
\end{equation} 
where 
\begin{equation}
\begin{split}
&\bigg\langle \frac{1}{K} \sum\limits_{k=1}^K \frac{1}{M}\sum\limits_{m\in\G^{e,k}}[\frac{1}{I} \sum_t\nabla_{\vv}F^s_{k,m}({\vv}^{e,0}, {\alpha}^{e,0}; \z^{e,k}_{m,t})] - \frac{1}{K} \sum\limits_{k=1}^K \frac{1}{M}\sum\limits_{m\in\G^{e,k}} [\frac{1}{I} \sum_t\nabla_{\vv}F^s_{k,m}({\vv}^{e,k}_{m,t}, {\alpha}^{e,k}_{m,t}; \z^{e,k}_{m,t})], \hat{\vv}^{e+1,0} - \vv\bigg\rangle \\
&\leq 3\ell  \frac{1}{K} \sum\limits_{k=1}^K \frac{1}{M}\sum\limits_{m\in\G^{e,k}} \frac{1}{I} \sum_t \|\vv^{e,0} - \vv^{e,k}_{m,t}\|^2 
+ \frac{\ell}{6} \|\hat{\vv}^{e+1,0} - \vv\|^2.
\end{split}
\end{equation}

Define another auxiliary sequence as
\begin{equation}
\begin{split}
\Tilde{\vv}^{e+1,0} = \Tilde{\vv}^{e,0}- \frac{\eta}{M} \sum\limits_{k=1}^{K} \sum\limits_{m\in \G^{e,k}} \sum\limits_{t=1}^{I}  \left(\nabla_\vv F^s_{k,m}(\vv^{e,0}, \alpha^{e,0}; z^{e,k}_{m,t}) - \nabla_\vv f^s_{k,m}(\vv^{e,0}, \alpha^{e,0}) \right), \text{ for $t>0$; } \Tilde{\vv}_0 = \vv_0. 
\end{split}
\end{equation}

Denote 
\begin{equation}
\Theta_{e} (\vv) = \left(\frac{1} {M}\sum\limits_{k=1}^{K} \sum\limits_{m\in \G^{e,k}} \sum\limits_{t=1}^{I} \left(\nabla_\vv f^s_{k,m}(\vv^{e,0}, \alpha^{e,0}) - \nabla_\vv F^s_{k,m}(\vv^{e,0}, \alpha^{e,0}; z^{e,k}_{m,t}) \right) \right)^\top \vv + \frac{1}{2\eta} \|\vv - \Tilde{\vv}^{e,0}\|^2.   
\end{equation} 
Hence, for the auxiliary sequence $\Tilde{\alpha}_t$, we can verify that
\begin{equation}
\Tilde{\vv}^{e+1,0} = \arg\min\limits_\vv \Theta_{e}(\vv).
\end{equation}
Since $\Theta_{e}(\vv)$ is $\frac{1}{\eta}$-strongly convex, we have 
\begin{small}
\begin{align} 
\begin{split}
& \frac{1}{2 \eta} \|\vv - \tilde{\vv}^{e+1,0}\|^2 \leq \Theta_{e}(\vv) - \Theta_{e}(\tilde{\vv}^{e+1,0}) \\ 
& = \left(\frac{1}{M}\sum\limits_{k=1}^{K} \sum\limits_{m\in \G^{e,k}} \sum\limits_{t=1}^{I} \left(\nabla_\vv f^s_{k,m}(\vv^{e,0}, \alpha^{e,0}) - \nabla_\vv F^s_{k,m}(\vv^{e,0}, \alpha^{e,0}; z^{e,k}_{m,t}) \right) \right)^\top \vv + \frac{1}{2\eta}\|\vv - \tilde{\vv}^{e,0}\|^2 \\  
&~~~ -\left(\frac{1}{M}\sum\limits_{k=1}^{K} \sum\limits_{m\in \G^{e,k}} \sum\limits_{t=1}^{I} \left(\nabla_\vv f^s_{k,m}(\vv^{e,0}, \alpha^{e,0}) - \nabla_\vv F^s_{k,m}(\vv^{e,0}, \alpha^{e,0}; z^{e,k}_{m,t}) \right) \right)^\top\tilde{\vv}^{e+1,0} 
- \frac{1}{2\eta}\|\tilde{\vv}^{e+1,0}  - \tilde{\vv}^{e,0}\|^2 \\ 
& = \left(\frac{1}{M }\sum\limits_{k=1}^{K} \sum\limits_{m\in \G^{e,k}} \sum\limits_{t=1}^{I} \left(\nabla_\vv f^s_{k,m}(\vv^{e,0}, \alpha^{e,0}) - \nabla_\vv F^s_{k,m}(\vv^{e,0}, \alpha^{e,0}; z^{e,k}_{m,t}) \right) \right)^\top (\vv - \tilde{\vv}^{e,0}) 
+ \frac{1}{2\eta}\|\vv - \tilde{\vv}^{e,0}\|^2 \\ 
&~~~ -\left(\frac{1}{M }\sum\limits_{k=1}^{K} \sum\limits_{m\in \G^{e,k}} \sum\limits_{t=1}^{I} \left(\nabla_\vv f^s_{k,m}(\vv^{e,0}, \alpha^{e,0}) - \nabla_\vv F^s_{k,m}(\vv^{e,0}, \alpha^{e,0}; z^{e,k}_{m,t}) \right) \right)^\top 
(\tilde{\vv}^{e+1,0} - \tilde{\vv}^{e,0})  
- \frac{1}{2\eta}\|\tilde{\vv}^{e+1,0}  - \tilde{\vv}^{e,0}\|^2 \\
&\leq \left(\frac{1}{M }\sum\limits_{k=1}^{K} \sum\limits_{m\in \G^{e,k}} \sum\limits_{t=1}^{I} \left(\nabla_\vv f^s_{k,m}(\vv^{e,0}, \alpha^{e,0}) - \nabla_\vv F^s_{k,m}(\vv^{e,0}, \alpha^{e,0}; z^{e,k}_{m,t}) \right)  \right)^\top (\vv - \tilde{\vv}^{e,0})+ \frac{1}{2\eta}\|\vv - \tilde{\vv}^{e,0}\|^2 \\ 
&~~~+ \frac{\eta}{2} \bigg\|\frac{1}{M}\sum\limits_{k=1}^{K} \sum\limits_{m\in \G^{e,k}} \sum\limits_{t=1}^{I} \left(\nabla_\vv f^s_{k,m}(\vv^{e,0}, \alpha^{e,0}) - \nabla_\vv F^s_{k,m}(\vv^{e,0}, \alpha^{e,0}; z^{e,k}_{m,t}) \right) \bigg\|^2.  
\end{split} 
\end{align} 
\end{small} 

Multiplying $1/KI$ on both sides and adding this with (\ref{pre_circled3}), we get 
\begin{equation} 
\begin{split}
\textcircled{3} \leq 
& \eta KI  \bigg\|\frac{1}{M KI}\sum\limits_{k=1}^{K} \sum\limits_{m\in \G^{e,k}} \sum\limits_{t=1}^{I} \left(\nabla_\vv f^s_k(\vv^{e,0}, \alpha^{e,0})-\nabla_\vv F^s_k(\vv^{e,k}_{m,t}, \alpha^{e,k}_{m,t}; z^{e,k}_{m,t}) \right) \bigg\|^2 \\
&+\frac{\eta KI}{2}\bigg\|\frac{1}{M KI}\sum\limits_{k=1}^{K} \sum\limits_{m\in \G^{e,k}} \sum\limits_{t=1}^{I} \left(\nabla_\vv f^s_k(\vv^{e,0}, \alpha^{e,0}) - \nabla_\vv F^s_k(\vv^{e,0}, \alpha^{e,0}; z^{e,k}_{m,t}) \right) \bigg\|^2 \\
& + \frac{1}{2\eta KI}\|\vv - \tilde{\vv}^{e,0}\|^2
- \frac{1}{2\eta KI}\|\vv - \tilde{\vv}^{e+1,0}\|^2 \\
&+ \left\langle \frac{1}{M KI }\sum\limits_{k=1}^{K} \sum\limits_{m\in \G^{e,k}} \sum\limits_{t=1}^{I} \left(\nabla_\vv f^s_k(\vv^{e,0}, \alpha^{e,0}) - \nabla_\vv F^s_k(\vv^{e,0}, \alpha^{e,0}; z^{e,k}_{m,t})  \right), \hat{\vv}^{e+1,0} - \Tilde{\vv}^{e,0}   \right\rangle.  
\end{split}
\label{circled3} 
\end{equation}
where
\begin{equation}
\begin{split}
\E\left\langle \frac{1}{M KI }\sum\limits_{k=1}^{K} \sum\limits_{m\in \G^{e,k}} \sum\limits_{t=1}^{I} \left(\nabla_\vv f^s_k(\vv^{e,0}, \alpha^{e,0}) - \nabla_\vv F^s_k(\vv^{e,0}, \alpha^{e,0}; z^{e,k}_{m,t})  \right), \hat{\vv}^{e+1,0} - \Tilde{\vv}^{e,0}   \right\rangle = 0. 
\end{split}
\end{equation}

\textcircled{4} can be bounded as 
\begin{equation}
\textcircled{4} = -\frac{1}{\eta KI}\langle \vv^{e+1,0} - \vv^{e,0}, \vv^{e+1,0} - \vv \rangle 
= \frac{1}{2\eta KI}  (\|\vv^{e,0} - \vv\|^2 - \|\vv^{e,0} - \vv^{e+1,0}\|^2 - \|\vv^{e+1,0} - \vv\|^2).   
\label{circled4_1} 
\end{equation}

Plugging (\ref{circled3}) and (\ref{circled4_1}) into (\ref{circledv}) and noting $\eta K I \leq O(1)$, we get
\begin{equation*} 
\begin{split}
& \E_{e,0} \left\langle \nabla_{\vv} f(\vv^{e,0}, \alpha^{e,0}), \vv^{e+1,0} - \vv\right\rangle \\
& \leq 3\eta KI \E\left\|\frac{1}{M KI}\sum\limits_{k=1}^{K} \sum\limits_{m\in \G^{e,k}} \sum\limits_{t=1}^{I} \left(\nabla_\vv F^s_k(\vv^{e,0}, \alpha^{e,0}; z^{e,k}_{m,t}) - \nabla_\vv f^s_k(\vv^{e,0}, \alpha^{e,0}) \right) \right\|^2 \\ 
&~~~ + \frac{5\ell}{ KMI} \sum_{k}\sum_{m\in \G^{e,k}}\sum_{t} \E(\|\vv^{e,0} - \vv^{e,k}_{m,t}\|^2 + \|\alpha^{e,0} - \alpha^{e,k}_{m,t}\|^2) \\
&~~~ +\frac{1}{2\eta KI }  \E(\|\vv^{e,0}-\vv\|^2 - \|\vv^{e,0} - \vv^{e+1,0}\|^2 - \|\vv^{e+1,0} - \vv\|^2) \\
&~~~ + \frac{1}{2\eta KI} ( \|\vv - \tilde{\vv}^{e,0}\|^2 - \|\vv-\Tilde{\vv}^{e+1,0}\|^2 ) 
+ \frac{\ell}{3}\E\|\hat{\vv}^{e+1,0} - \vv\|^2\\
&\leq \frac{6\teta K \sigma^2}{MKI}
+ \frac{6\ell}{ KMI} \sum_{k}\sum_{m\in \G^{e,k}}\sum_{t} \E(\|\vv^{e,0} - \vv^{e,k}_{m,t}\|^2 + \|\alpha^{e,0} - \alpha^{e,k}_{m,t}\|^2) + \frac{\ell}{3} \|{\vv}^{e+1,0} - \vv\|^2 \\
&~~~ 
+\frac{1}{2\eta KI }  \E(\|\vv^{e,0}-\vv\|^2 - \|\vv^{e,0} - \vv^{e+1,0}\|^2 - \|\vv^{e+1,0} - \vv\|^2) 
+ \frac{1}{2\eta KI} ( \|\vv - \tilde{\vv}^{e,0}\|^2 - \|\vv-\Tilde{\vv}^{e+1,0}\|^2 ), 
\end{split} 
\label{grad_v}
\end{equation*}
where the last inequality uses
\begin{equation}
\begin{split}
\|\hat{\vv}^{e+1,0} - \vv\|^2 \leq 2\|\hat{\vv}^{e+1,0} - \vv^{e+1,0}\|^2 + 2\|{\vv}^{e+1,0} - \vv\|^2,  
\end{split}    
\end{equation} 
and 
$\|\hat{\vv}^{e+1,0} - \vv^{e+1,0}\|^2$ is addressed similarly as before in $\textcircled{3}$. 
\end{proof}

Similarly, $A_2$ can be bounded as
\begin{lemma}
Define $\hat{\alpha}^{e+1,0} = \alpha^{e,0}+\eta \sum\limits_{k=1}^K \frac{1}{M}\sum\limits_{m\in\G^{e,k}} \sum\limits_{t=1}^I \nabla_{\alpha} f^s_{k,m}(\vv^{e,0}, \alpha^{e,0})$.

\begin{equation}
\begin{split}
\Tilde{\alpha}^{e+1,0} = \Tilde{\alpha}^{e,0} +  \frac{\eta}{M} \sum\limits_{k=1}^{K} \sum\limits_{m\in \G^{e,k}} \sum\limits_{t=1}^{I}  \left(\nabla_\alpha F^s_k(\vv^{e,0}, \alpha^{e,0}; z^{e,k}_{m,t}) + \nabla_\alpha f^s_k(\vv^{e,0}, \alpha^{e,0}) \right), \text{ for $t>0$; } \Tilde{\alpha}_0 = \alpha_0.  
\end{split}
\end{equation} 

\begin{equation*}
\begin{split} 
& \E_{e,0} \left\langle \nabla_{\alpha} f(\vv^{e,0}, \alpha^{e,0}), \alpha^{e+1,0} - \alpha\right\rangle \\  
& \leq \frac{6\teta K \sigma^2}{MKI} 
+ \frac{6\ell}{KMI} \sum_{k}\sum_{m\in \G^{e,k}}\sum_{t} \E(\|\vv^{e,0} - \vv^{e,k}_{m,t}\|^2 + \|\alpha^{e,0} - \alpha^{e,k}_{m,t}\|^2) + \frac{\ell}{3}\E\|\alpha^{e+1,0} - \alpha\|^2 \\
&+\frac{1}{2\eta KI } \E (\|\alpha^{e,0}-\alpha\|^2 - \|\alpha^{e,0} - \alpha^{e+1,0}\|^2 - \|\alpha^{e+1,0} - \alpha\|^2) 
+ \frac{1}{2\eta KI} \E( \|\alpha - \tilde{\alpha}^{e,0}\|^2 - \|\alpha-\Tilde{\alpha}^{e+1,0}\|^2 ). 
\end{split}
\end{equation*}
\label{lem:B2_alpha}
\end{lemma}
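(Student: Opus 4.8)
The plan is to mirror the proof of Lemma~\ref{lem:A1_codaplus} almost verbatim, exploiting the symmetry between the descent update on $\vv$ and the ascent update on $\alpha$. The two structural differences are that $f^s$ is $\mu_2$-strongly concave (rather than $\ell$-strongly convex) in $\alpha$, so the concavity modulus $\mu_2$ takes over the bookkeeping role that $\ell$ played for the primal variable, and that every sign arising from the update direction is reversed. Throughout, the reference state is the cycle boundary $(\vv^{e,0},\alpha^{e,0})$, fixed before any epoch-$e$ sampling.

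First I would decompose
\[
\langle \nabla_\alpha f^s(\vv^{e,0},\alpha^{e,0}),\, \alpha^{e+1,0}-\alpha\rangle
\]
into two pieces exactly as in~(\ref{circledv}): a first piece comparing the true group-gradient at the anchor with the stochastic gradients $\nabla_\alpha F^s_{k,m}(\vv^{e,k}_{m,t},\alpha^{e,k}_{m,t};z^{e,k}_{m,t})$ actually used at the local iterates, and a second piece that is the inner product of the aggregate ascent step $\alpha^{e+1,0}-\alpha^{e,0}$ with $\alpha^{e+1,0}-\alpha$. Next I would bring in the two virtual sequences from the statement: the anchor $\hat{\alpha}^{e+1,0}$, the idealized ascent step driven by the \emph{true} group gradients at the cycle start, and $\Tilde{\alpha}^{e+1,0}$, which accumulates only the stochastic noise $\nabla_\alpha F^s_{k,m}(\vv^{e,0},\alpha^{e,0};z^{e,k}_{m,t})-\nabla_\alpha f^s_{k,m}(\vv^{e,0},\alpha^{e,0})$ evaluated at the anchor.

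To control the first piece I would introduce the auxiliary function
\[
\Theta_e(\alpha) = \Big(\tfrac{1}{M}\sum_{k}\sum_{m\in\G^{e,k}}\sum_{t}\big(\nabla_\alpha f^s_{k,m}(\vv^{e,0},\alpha^{e,0})-\nabla_\alpha F^s_{k,m}(\vv^{e,0},\alpha^{e,0};z^{e,k}_{m,t})\big)\Big)\alpha + \tfrac{1}{2\eta}(\alpha-\Tilde{\alpha}^{e,0})^2,
\]
defined (up to the sign dictated by ascent) so that $\Tilde{\alpha}^{e+1,0}=\arg\min_\alpha\Theta_e(\alpha)$. Its $1/\eta$-strong convexity, together with Young's inequality exactly as in the derivation of~(\ref{circled3}), splits the first piece into (i) a variance term of order $\teta K\sigma^2/(MKI)$ using Assumption~\ref{ass:minimax}(v), (ii) the telescoping distance $\tfrac{1}{2\eta KI}(\|\alpha-\Tilde{\alpha}^{e,0}\|^2-\|\alpha-\Tilde{\alpha}^{e+1,0}\|^2)$, and (iii) a cross term $\langle\tfrac{1}{MKI}\sum(\nabla_\alpha f^s_{k,m}-\nabla_\alpha F^s_{k,m}),\,\hat{\alpha}^{e+1,0}-\Tilde{\alpha}^{e,0}\rangle$. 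The residual gap between the anchor gradient and the local-iterate gradients would be absorbed into the drift terms $\|\vv^{e,0}-\vv^{e,k}_{m,t}\|^2+\|\alpha^{e,0}-\alpha^{e,k}_{m,t}\|^2$ via $\ell$-smoothness. The second piece is handled by the three-point identity, giving $\tfrac{1}{2\eta KI}(\|\alpha^{e,0}-\alpha\|^2-\|\alpha^{e,0}-\alpha^{e+1,0}\|^2-\|\alpha^{e+1,0}-\alpha\|^2)$. Taking the conditional expectation $\E_{e,0}$ kills the cross term (iii), and converting $\|\hat{\alpha}^{e+1,0}-\alpha\|^2\le 2\|\hat{\alpha}^{e+1,0}-\alpha^{e+1,0}\|^2+2\|\alpha^{e+1,0}-\alpha\|^2$ and collecting constants yields the stated bound.

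The step I expect to be the main obstacle is showing that the cross term (iii) has zero conditional expectation, since this is exactly where the deterministic cyclic schedule could inject bias: within epoch $e$ the groups are processed sequentially, so the local iterates and hence $\alpha^{e+1,0}$ depend on data from earlier groups in the same cycle. The device that rescues the argument is that the noise factor $\nabla_\alpha f^s_{k,m}(\vv^{e,0},\alpha^{e,0})-\nabla_\alpha F^s_{k,m}(\vv^{e,0},\alpha^{e,0};z^{e,k}_{m,t})$ is evaluated at the \emph{cycle-boundary anchor}, which is measurable with respect to all randomness realized before epoch $e$, as are $\hat{\alpha}^{e+1,0}$ (which depends on epoch-$e$ client selection but not on the epoch-$e$ minibatches) and $\Tilde{\alpha}^{e,0}$. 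Hence the only epoch-$e$ data randomness entering the inner product sits in the mean-zero factor, and conditioning on the client selection before averaging over $z^{e,k}_{m,t}$ makes the term vanish by unbiasedness. Verifying this measurability carefully for the ascent sequence—so that the within-cycle sequential dependence is quarantined into the drift terms rather than the stochastic term—is the crux of the proof.
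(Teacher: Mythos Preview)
Your proposal is correct and is exactly the approach the paper takes: the paper's own proof is the single word ``Similarly,'' i.e.\ rerun the argument of Lemma~\ref{lem:A1_codaplus} with $\alpha$ in place of $\vv$ and the sign of the update flipped. One small correction: the strong-concavity modulus $\mu_2$ plays no role here, since the proof of Lemma~\ref{lem:A1_codaplus} never invokes strong convexity---every $\ell$ in that bound comes from the smoothness (gradient-Lipschitz) constant of $f^s$, which is the same in $\alpha$ as in $\vv$, so the $\alpha$-analogue carries the identical coefficients $\tfrac{6\ell}{KMI}$ and $\tfrac{\ell}{3}$ as stated.
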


$\hfill \Box$

With the above lemmas, we are ready to give the convergence in one stage. 

\subsection{Convergence Analysis of a Single Stage in CyCp-Minimax}

We have the following lemma to bound the convergence for the subproblem in each $s$-th stage.

\begin{lemma}\label{lem:1}(One call of Algorithm \ref{alg:minimax_one_stage})  
Let $(\Bar{\vv}, \Bar{\alpha})$ be the output of Algorithm \ref{alg:minimax_one_stage}.
Suppose Assumption \ref{ass:minimax} hold.
By running Algorithm \ref{alg:minimax_one_stage} with given input $\vv_0, \alpha_0$ for $T$ iterations, $\gamma = 2\ell$, and $\eta\leq \min(\frac{1}{3\ell + 3\ell^2/\mu_2}, \frac{1}{4\ell})$, we have for any $\vv$ and $\alpha$\\  
\begin{align*}
&\E[f^s(\bar{\vv}, \alpha)  - f^s(\vv, \bar{\alpha})] 
\leq \frac{1}{\eta T} \|\vv_0 - \vv\|^2 + \frac{1}{\eta T} (\alpha_0 - \alpha)^2 
+\underbrace{\left(\frac{3\ell^2}{2\mu_2} + \frac{3\ell}{2}\right)36 \eta^2 I^2 D^2}_{A_1}  + \frac{3\eta\sigma^2}{K},
\end{align*} 
where $\mu_2=2p(1-p)$ is the strong concavity coefficient of $f(\vv, \alpha)$ in $\alpha$. 
\label{lem:one_stage_codaplus}  
\end{lemma}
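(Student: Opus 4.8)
The plan is to start from the one-stage gap decomposition in Lemma~\ref{lem:minimax_one_stage}, which already reduces $f^s(\bar\vv,\alpha)-f^s(\vv,\bar\alpha)$ to an epoch-average of the coupled inner-product terms $A_1,A_2$, the cycle-drift term $A_3$, and the two negative terms $-\tfrac{\ell}{3}\|\vv-\vv^{e,0}\|^2$ and $-\tfrac{\mu_2}{3}(\alpha^{e,0}-\alpha)^2$. Taking the expectation $\E_{e,0}$ and substituting the bounds on $A_1$ and $A_2$ from Lemmas~\ref{lem:B1_v} and~\ref{lem:B2_alpha} replaces these coupled inner products by four manageable families of terms: (i) telescoping proximal terms in the true iterates $\vv^{e,0}$ together with the virtual sequences $\tilde\vv^{e,0},\tilde\alpha^{e,0}$; (ii) intra-cycle model-drift terms $\|\vv^{e,0}-\vv^{e,k}_{m,t}\|^2$ and $\|\alpha^{e,0}-\alpha^{e,k}_{m,t}\|^2$; (iii) the positive residuals $\tfrac{\ell}{3}\|\vv^{e+1,0}-\vv\|^2$ and $\tfrac{\ell}{3}\|\alpha^{e+1,0}-\alpha\|^2$; and (iv) the variance term $\tfrac{6\teta K\sigma^2}{MKI}=\tfrac{6\eta\sigma^2}{M}$, using $\teta=\eta I$.

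Second, I would bound every quadratic drift quantity using the uniform gradient bound in Assumption~\ref{ass:minimax}(iv), $\|\nabla_\vv f^s_{k,m}\|^2\le G^2$. Since within a single cycle the server visits $K$ groups sequentially and each participating client performs $I$ stochastic gradient steps of size $\eta$, both the cycle-boundary displacement $\|\vv^{e+1,0}-\vv^{e,0}\|^2$ (and its dual analogue $(\alpha^{e+1,0}-\alpha^{e,0})^2$) and the local drift $\|\vv^{e,0}-\vv^{e,k}_{m,t}\|^2$ are controlled by a quantity of order $(\eta K I G)^2$. Feeding these into $A_3$ yields exactly the term $\left(\tfrac{3\ell^2}{2\mu_2}+\tfrac{3\ell}{2}\right)36\eta^2 I^2 K^2 G^2$, while the drift terms carrying the factor $6\ell$ in Lemmas~\ref{lem:B1_v}/\ref{lem:B2_alpha} become $O(\eta^2)$ and are dominated by the boundary terms once $\eta$ is small.

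Third, I would sum over $e=0,\dots,E-1$ and divide by $E$, with $T=EKI$. The virtual-sequence differences $\tfrac{1}{2\eta KI}\big(\|\vv-\tilde\vv^{e,0}\|^2-\|\vv-\tilde\vv^{e+1,0}\|^2\big)$ and the true-iterate proximal differences telescope, collapsing to $\tfrac{1}{2\eta EKI}\|\vv_0-\vv\|^2$ and the analogous dual term, using the initializations $\tilde\vv_0=\vv_0$ and $\tilde\alpha_0=\alpha_0$; this produces the leading $\tfrac{1}{\eta T}$ coefficients. The crucial algebraic step is the \emph{index-shifted cancellation}: the positive residual $\tfrac{\ell}{3}\|\vv^{e+1,0}-\vv\|^2$ emitted by the $A_1$ bound at epoch $e$ must be absorbed by the negative term $-\tfrac{\ell}{3}\|\vv-\vv^{e,0}\|^2$ appearing at epoch $e{+}1$ in Lemma~\ref{lem:minimax_one_stage}, and symmetrically for $\alpha$ with modulus $\mu_2$, leaving only an unmatched boundary contribution.

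The hard part will be precisely this bookkeeping of the index-shifted cancellation carried out jointly with the control of the intra-cycle drift: one must check that after the shift the surviving endpoint terms are non-positive or absorbable, and that the $-\|\vv^{e,0}-\vv^{e+1,0}\|^2$ pieces produced by the $A_1$/$A_2$ bounds are large enough to neutralize the positive cycle-displacement inside $A_3$. This is exactly where the step-size restriction $\eta\le\min\!\big(\tfrac{1}{3\ell+3\ell^2/\mu_2},\tfrac{1}{4\ell}\big)$ is used: it guarantees that every coefficient remaining after the cancellations has the correct sign, so that the accumulated expression reduces to the four terms in the claimed bound.
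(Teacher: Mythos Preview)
Your proposal is correct and follows essentially the same approach as the paper: plug Lemmas~\ref{lem:B1_v} and~\ref{lem:B2_alpha} into Lemma~\ref{lem:minimax_one_stage}, bound all intra-cycle drift quantities by $(\eta K I G)^2$ via Assumption~\ref{ass:minimax}(iv), telescope the proximal and virtual-sequence terms over $e$, and use the step-size restriction to make the remaining sign checks go through. One minor bookkeeping difference: in the paper the cycle-displacement part $A_3$ is \emph{fully} absorbed by the negative $-\tfrac{1}{2\eta KI}\|\vv^{e,0}-\vv^{e+1,0}\|^2$ pieces (this is the $C_1$ cancellation), so the surviving $\big(\tfrac{3\ell^2}{2\mu_2}+\tfrac{3\ell}{2}\big)36\eta^2 I^2 K^2 G^2$ term actually comes from the $6\ell$-weighted local-drift sums inside Lemmas~\ref{lem:B1_v}/\ref{lem:B2_alpha} (the paper's $C_6$), not from $A_3$ itself---but this only shifts constants and does not change your argument.
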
 

\begin{proof}
By the updates of $\w$, we obtain 
\begin{equation}  
\begin{split}     
&\E\|\vv^{e+1,0} - \vv^{e,0}\|^2 = \teta^2 \E\left\| \frac{1}{MI} \sum\limits_{k=1}^{K} \sum_{m\in \G^{e,k}}  \sum_t(\nabla_\vv f^s_{k,m}(\vv^{e,k}_{m,t}, \alpha^{e,k}_{m,t};\z^{e,k}_{m,t}) ) \right\|^2 \leq \teta^2 K^2 G^2, \\
&\E\|\alpha^{e+1,0} - \alpha^{e,0}\|^2 \leq \teta^2 K^2 G^2,\\
&\E\|\vv^{e,k}_{m,t} - \vv^{e,0}\|^2 \leq \teta^2 K^2 G^2, \\
&\E\|\alpha^{e,k}_{m,t} - \alpha^{e,0}\|^2 \leq \teta^2 K^2 G^2. 
\end{split}  
\label{eq:divergence}
\end{equation}  

Plugging Lemma \ref{lem:A1_codaplus}  and Lemma \ref{lem:B2_alpha}  into Lemma \ref{lem:codaplus_one_state_split}, and taking expectation, we get    
\begin{small}
\begin{equation}
\begin{split}  
&\E[f^s(\bar{\vv}, \alpha) - f^s(\vv, \bar{\alpha})] \\
& \leq \frac{1}{E}\sum\limits_{e=1}^{E} \E\Bigg[ \underbrace{ \left(\frac{3\ell+3\ell^2/\mu_2}{2} - \frac{1}{2\eta KI}  \right) \|\vv^{e,0} - \vv^{e+1,0}\|^2 +  \left(2\ell - \frac{1}{2\eta KI} \right)\|\alpha^{e+1,0} -  \alpha^{e,0}\|^2}_{C_1}\\ 
&~~~+\underbrace{\left(\frac{1}{2\eta KI} - \frac{\mu_2}{3} \right) \|\alpha^{e,0} - \alpha\|^2 - \left(\frac{1}{2\eta KI} - \frac{\mu_2}{3}\right)(\alpha^{e+1,0}-\alpha)^2}_{C_2} \\  
&~~~ + \underbrace{ \left( \frac{1}{2\eta KI}-\frac{\ell}{3} \right) \|\vv^{e,0} - \vv\|^2 -  \left(\frac{1}{2\eta KI} - \frac{\ell}{3} \right)\|\vv^{e+1,0} - \vv\|^2}_{C_3}\\ 
&~~~ +\underbrace{\frac{1}{2\eta KI}((\alpha - \Tilde{\alpha}^{e,0})^2 -  (\alpha-\Tilde{\alpha}^{e+1,0})^2)}_{C_4}
 +\underbrace{\frac{1}{2\eta KI}(\|\vv - \Tilde{\vv}^{e,0}\|^2 -  \|\vv-\Tilde{\vv}^{e+1,0}\|^2)}_{C_5} \\
&~~~ +\underbrace{\left(\frac{3\ell^2}{2\mu_2} + \frac{3\ell}{2}\right)\frac{1}{K}\sum\limits_{k=1}^{K}\frac{1}{M}\sum\limits_{m\in \G^{e,k}} \frac{1}{I}\sum\limits_{t=1}^{I} \|\vv^{e,0}-\vv^{e,k}_{m,t}\|^2 
+ \left(\frac{3 \ell}{2} + \frac{3 \ell^2}{2\mu_2}\right)\frac{1}{K}\sum\limits_{k=1}^{K}\frac{1}{M}\sum\limits_{m\in \G^{e,k}} \frac{1}{I}\sum\limits_{t=1}^{I} (\alpha^{e,0} - \alpha^{e,k}_{m,t})^2}_{C_6}  \\
&~~~+ \frac{3\teta K \sigma^2}{ MKI}
\end{split}
\label{equ:codaplus:before_summation}
\end{equation} 
\end{small} 

Since $\eta\leq \min(\frac{1}{3\ell + 3\ell^2/\mu_2}, \frac{1}{4\ell})$,
thus in the RHS of (\ref{equ:codaplus:before_summation}), $C_1$ can be canceled. 
$C_2$, $C_3$, $C_4$ and $C_5$ will be handled by telescoping sum. 
$C_6$ can be bounded by (\ref{eq:divergence}).

Taking telescoping sum, it yields
\begin{equation*}
\begin{split}
&\E[f^s(\bar{\vv}, \alpha)  - f^s(\vv, \bar{\alpha}) \\
&\leq \frac{1}{4 \eta EKI} \|\vv_0 - \vv\|^2 + \frac{1}{4\eta EKI} \|\alpha_0 - \alpha\|^2 + 
\left(\frac{3\ell^2}{2\mu_2} + \frac{3\ell}{2}\right) 36 \eta^2 I^2 K^2 G^2 + \frac{3\eta\sigma^2}{M}. 
\end{split} 
\end{equation*} 
\end{proof}

\subsection{Main Proof of Theorem \ref{thm:coda_plus}} 
\begin{proof}

Since $f(\vv, \alpha)$ is $\ell$-smooth (thus $\ell$-weakly convex) in $\vv$ for any $\alpha$, $\phi(\vv) = \max\limits_{\alpha'} f(\vv, \alpha')$ is also $\ell$-weakly convex. 
Taking $\gamma = 2\ell$, we have
\begin{align}  
\begin{split} 
\phi(\vv_{s-1}) &\geq \phi(\vv_s) + \langle \partial \phi(\vv_s), \vv_{s-1} - \vv_s\rangle - \frac{\ell}{2} \|\vv_{s-1} - \vv_s\|^2 \\ 
& = \phi(\vv_s) + \langle \partial \phi(\vv_s) + 2 \ell (\vv_s - \vv_{s-1}), \vv_{s-1} - \vv_s\rangle + \frac{3\ell}{2} \|\vv_{s-1} - \vv_s\|^2 \\ 
& \overset{(a)}{=} \phi(\vv_s) + \langle \partial \phi_s(\vv_s), \vv_{s-1} - \vv_s \rangle + \frac{3\ell}{2} \|\vv_{s-1} - \vv_s\|^2 \\ 
& \overset{(b)}{=}  \phi(\vv_s) - \frac{1}{2\ell} \langle \partial \phi_s(\vv_s), \partial \phi_s(\vv_s) - \partial \phi(\vv_s) \rangle + \frac{3}{8\ell} \|\partial \phi_s(\vv_s) - \partial \phi(\vv_s)\|^2 \\ 
& = \phi(\vv_s) - \frac{1}{8\ell} \|\partial \phi_s(\vv_s)\|^2
- \frac{1}{4\ell} \langle \partial \phi_s(\vv_s), \partial \phi(\vv_s)\rangle + \frac{3}{8\ell} \|\partial \phi(\vv_s)\|^2,
\end{split} 
\label{local:P_weakly} 
\end{align} 
where $(a)$ and $(b)$ hold by the definition of $\phi_s(\vv)$.

Rearranging the terms in (\ref{local:P_weakly}) yields
\begin{align}
\begin{split}
\phi(\vv_s) - \phi(\vv_{s-1}) &\leq \frac{1}{8\ell} \|\partial \phi_s(\vv_s)\|^2 + \frac{1}{4\ell}\langle \partial \phi_s(\vv_s), \partial \phi(\vv_s)\rangle - \frac{3}{8\ell} \|\partial \phi(\vv_s)\|^2 \\
&\overset{(a)}{\leq} \frac{1}{8\ell} \|\partial \phi_s(\vv_s)\|^2
+ \frac{1}{8\ell} (\|\partial \phi_s(\vv_s)\|^2 + \|\partial \phi(\vv_s)\|^2) - \frac{3}{8\ell} \|\phi(\vv_s)\|^2 \\
& = \frac{1}{4\ell}\|\partial \phi_s(\vv_s)\|^2 - \frac{1}{4\ell}\|\partial \phi(\vv_s)\|^2\\
& \overset{(b)}{\leq} \frac{1}{4\ell} \|\partial \phi_s(\vv_s)\|^2 - \frac{\mu}{2\ell}(\phi(\vv_s) - \phi(\vv_*)) 
\end{split} 
\end{align} 
where $(a)$ holds by using $\langle \mathbf{a}, \mathbf{b}\rangle \leq \frac{1}{2}(\|\mathbf{a}\|^2 +  \|\mathbf{b}\|^2)$, and $(b)$ holds by the $\mu$-PL property of $\phi(\vv)$.

Thus, we have 
\begin{align}
\left(4\ell+2\mu\right) (\phi(\vv_s) - \phi(\vv_*)) - 4\ell (\phi(\vv_{s-1}) - \phi(\vv_*)) \leq \|\partial \phi_s(\vv_s)\|^2. 
\label{local:nemi_thm_partial_P_s_2} 
\end{align} 

Since $\gamma = 2\ell$, $f^s(\vv, \alpha)$ is $\ell$-strongly convex in $\vv$ and $\mu_2=2p(1-p)$ strong concave in $\alpha$.
Apply Lemma \ref{lem:Yan1} to $f^s$, we know that 
\begin{align}  
\frac{\ell}{4} \|\hat{\vv}_s(\alpha_s) - \vv_0^s\|^2 + \frac{\mu_2}{4} \|\hat{\alpha}_s(\vv_s) - \alpha_0^s\|^2 \leq \text{Gap}_s(\vv_0^s, \alpha_0^s) + \text{Gap}_s(\vv_s, \alpha_s). 
\end{align}

By the setting of $\eta_s = \eta_0  \exp\left(-(s-1)\frac{2\mu}{c+2\mu}\right)$, and 
$T_s = E_s K I_s = \frac{212}{\eta_0 \min\{\ell, \mu_2\}} \exp  \left((s-1)\frac{2\mu}{c+2\mu}\right)$, we note that $\frac{1}{\eta_s T_s} \leq  \frac{\min\{\ell,\mu_2\}}{212}$. 
Set $I_s$ such that $\left(\frac{3\ell^2}{2\mu_2} + \frac{3\ell}{2}\right) 36 \eta_s^2 I_s^2 K^2 G^2 \leq \frac{\eta_s \sigma^2}{M}$, where the specific choice of $I_s$ will be made later.
Applying Lemma \ref{lem:one_stage_codaplus} with $\hat{\vv}_s(\alpha_s) = \arg\min\limits_{\vv'} f^s(\vv', \alpha_s)$ and $\hat{\alpha}_s(\vv_s) = \arg\max\limits_{\alpha'} f^s(\vv_s, \alpha')$, we have 
\begin{align}  
\begin{split} 
&\E[\text{Gap}_s(\vv_s, \alpha_s)] 
\leq \frac{4\eta_s\sigma^2}{M} 
+ \frac{1}{53} \E\left[\frac{\ell}{4}\|\hat{\vv}_s(\alpha_s) - \vv_0^s\|^2 + \frac{\mu_2}{4}\|\hat{\alpha}_s(\vv_s) - \alpha_0^s\|^2 \right]
\\  
& \leq \frac{4\eta_s\sigma^2}{M} + \frac{1}{53} \E\left[\text{Gap}_s(\vv_0^s, \alpha_0^s) + \text{Gap}_s(\vv_s, \alpha_s)\right].  
\end{split} 
\end{align} 

Since $\phi(\vv)$ is $L$-smooth and $\gamma = 2\ell$, then $\phi_s (\vv)$ is $\hat{L} = (L+2\ell)$-smooth. 
According to Theorem 2.1.5 of  \citep{DBLP:books/sp/Nesterov04}, we have 
\begin{align}
\begin{split} 
& \E[\|\partial \phi_s(\vv_s)\|^2] \leq 2\hat{L}\E(\phi_s(\vv_s) - \min\limits_{x\in \mathbb{R}^{d}} \phi_s(\vv)) \leq 2\hat{L}\E[\text{Gap}_s(\vv_s, \alpha_s)] \\
& = 2\hat{L}\E[4\text{Gap}_s(\vv_s, \alpha_s) - 3\text{Gap}_s(\vv_s, \alpha_s)] \\ 
&\leq 2\hat{L} \E \left[4\left(\frac{4\eta_s\sigma^2}{M} +  \frac{1}{53}\left(\text{Gap}_s(\vv_0^s, \alpha_0^s) + \text{Gap}_s(\vv_s, \alpha_s)\right)\right) - 3\text{Gap}_s(\vv_s,\alpha_s)\right] \\
& = 2\hat{L} \E \left[\frac{16 \eta_s \sigma^2}{M} +   \frac{4}{53}\text{Gap}_s(\vv_0^s, \alpha_0^s) - \frac{155}{53}\text{Gap}_s(\vv_s, \alpha_s)\right]. 
\end{split}  
\label{local:nemi_thm_P_smooth_1}
\end{align} 

Applying Lemma \ref{lem:Yan5} to  (\ref{local:nemi_thm_P_smooth_1}), we have
\begin{align} 
\begin{split} 
& \E[\|\partial \phi_s(\vv_s)\|^2] \leq 2\hat{L} \E \bigg[\frac{16\eta_s \sigma^2}{M}  +  \frac{4}{53}\text{Gap}_s(\vv_0^s, \alpha_0^s) \\  
&~~~~~~~~~~~~~~~~~~~~~~~~~~~~~~~~~~~~~~~~  
- \frac{155}{53} \left(\frac{3}{50} \text{Gap}_{s+1}(\vv_0^{s+1}, \alpha_0^{s+1}) + \frac{4}{5} (\phi(\vv_0^{s+1}) - \phi(\vv_0^s))\right) \bigg] \\
& = 2\hat{L}\E \bigg[\frac{16 \eta_s \sigma^2}{M} +  \frac{4}{53}\text{Gap}_s(\vv_0^s, \alpha_0^s) \!-\! \frac{93}{530}\text{Gap}_{s+1}(\vv_0^{s+1}, \alpha_0^{s+1}) \!-\! 
\frac{124}{53} (\phi(\vv_0^{s+1}) - \phi(\vv_0^s)) \bigg]. 
\end{split} 
\end{align}

Combining this with  (\ref{local:nemi_thm_partial_P_s_2}), rearranging the terms, and defining a constant $c = 4\ell + \frac{248}{53}\hat{L} \in O(L+\ell)$, we get
\begin{align} 
\begin{split}
&\left(c + 2\mu\right)\E [\phi(\vv_0^{s+1}) - \phi(\vv_*)] + \frac{93}{265}\hat{L} \E[\text{Gap}_{s+1}(\vv_0^{s+1}, \alpha_0^{s+1})] \\ 
&\leq \left(4\ell + \frac{248}{53} \hat{L}\right) \E[\phi(\vv_0^s) - \phi(\vv_*)] 
+ \frac{8\hat{L}}{53} \E[\text{Gap}_s(\vv_0^s, \alpha_0^s)] 
+ \frac{32 \eta_s \hat{L}\sigma^2}{M}  \\ 
& \leq c \E\left[\phi(\vv_0^s) - \phi(\vv_*) + \frac{8\hat{L}}{53c} \text{Gap}_s(\vv_0^s, \alpha_0^s)\right] + \frac{32 \eta_s \hat{L} \sigma^2}{M}.  
\end{split}  
\end{align} 

Using the fact that $\hat{L} \geq \mu$,
\begin{align}
\begin{split}
(c+2\mu) \frac{8\hat{L}}{53c} = \left(4\ell + \frac{248}{53}\hat{L} + 2\mu\right)\frac{8\hat{L}}{53(4\ell + \frac{248}{53}\hat{L})} \leq \frac{8\hat{L}}{53} + \frac{16\mu \hat{L}}{248\hat{L}} \leq \frac{93}{265} \hat{L}. 
\end{split}
\end{align}

Then, we have
\begin{align}
\begin{split} 
&(c+2\mu)\E \left[\phi(\vv_0^{s+1}) - \phi(\vv_*) + \frac{8\hat{L}}{53c}\text{Gap}_{s+1}(\vv_0^{s+1}, \alpha_0^{s+1})\right] \\ 
&\leq c \E \left[\phi(\vv_0^s) - \phi(\vv_*) 
+  \frac{8\hat{L}}{53c}\text{Gap}_{s}(\vv_0^{s},  \alpha_0^s)\right] 
+ \frac{32 \eta_s \hat{L}\sigma^2}{M}. 
\end{split}
\end{align}

Defining $\Delta_s = \phi(\vv_0^s) - \phi(\vv_*) +  \frac{8\hat{L}}{53c}\text{Gap}_s(\vv_0^s, \alpha_0^s)$, then
\begin{align} 
\begin{split}
&\E[\Delta_{s+1}] \leq \frac{c}{c+2\mu} \E[\Delta_s] +  \frac{32 \eta_s \hat{L}\sigma^2}{(c+2\mu) M} 
\end{split}
\label{equ:codaplus:recursiveDelta}
\end{align}

Using this inequality recursively, it yields
\begin{align} 
\begin{split}
& E[\Delta_{S+1}] \leq \left(\frac{c}{c+2\mu}\right)^S E[\Delta_1]
+ \frac{32 \hat{L} \sigma^2}{(c+2\mu)M}  \sum\limits_{s=1}^{S} \left(\eta_s \left(\frac{c}{c+2\mu}\right)^{S-s} \right). 
\end{split}
\end{align}

By definition,
\begin{align}
\begin{split}
\Delta_1 &= \phi(\vv_0^1) - \phi(\vv^*) + \frac{8\hat{L}}{53c}\widehat{Gap}_1(\vv_0^1, \alpha_0^1) \\
& = \phi(\vv_0) - \phi(\vv^*) +  \frac{8\hat{L}}{53c} \left( f(\vv_0, \hat{\alpha}_1(\vv_0)) +  \frac{\gamma}{2}\|\vv_0 - \vv_0\|^2 -  f(\hat{\vv}_1(\alpha_0), \alpha_0) - \frac{\gamma}{2}\|\hat{\vv}_1(\alpha_0) - \vv_0\|^2 \right) \\ 
& \leq \epsilon_0 +  \frac{8\hat{L}}{53c} \left(f(\vv_0, \hat{\alpha}_1(\vv_0)) - f(\hat{\vv}(\alpha_0), \alpha_0)\right) \leq 2\epsilon_0. 
\end{split} 
\end{align}
Using inequality $1-x \leq \exp(-x)$,
we have
\begin{align*}
\begin{split} 
&\E[\Delta_{S+1}] \leq \exp\left(\frac{-2\mu S}{c+2\mu}\right)\E[\Delta_1] + \frac{32 \eta_0 \hat{L} \sigma^2} {(c+2\mu)M} 
\sum\limits_{s=1}^{S}\exp\left(-\frac{2\mu S}{c+2\mu}\right) \\ 
&\leq 2\epsilon_0 \exp\left(\frac{-2\mu S}{c+2\mu}\right)
+ \frac{32 \eta_0 \hat{L} \sigma^2 }{(c+2\mu)M}  
S\exp\left(-\frac{2\mu S}{(c+2\mu)}\right).  
\end{split}  
\end{align*}  

To make this less than $\epsilon$, it suffices to make
\begin{align}
\begin{split} 
& 2\epsilon_0 \exp\left(\frac{-2\mu S}{c+2\mu}\right) \leq \frac{\epsilon}{2}, \\
& \frac{32 \eta_0 \hat{L} \sigma^2 }{(c+2\mu)M} 
S\exp\left(-\frac{2\mu S}{c+2\mu}\right) \leq \frac{\epsilon}{2}. 
\end{split} 
\end{align} 

Let $S$ be the smallest value such that $\exp\left(\frac{-2\mu S}{c+2\mu}\right) \leq \min \{ \frac{\epsilon}{4\epsilon_0}, 
\frac{(c+2\mu)M\epsilon }{64 \eta_0 \hat{L} S \sigma^2}\}$.  
We can set $S = \max\bigg\{\frac{c+2\mu}{2\mu}\log \frac{4\epsilon_0}{\epsilon}, 
\frac{c+2\mu}{2\mu}\log \frac{64 \eta_0 \hat{L} S \sigma^2} {(c+2\mu)M \epsilon} \bigg\}$. 

Then, the total iteration complexity is 
\begin{equation} 
\label{equ:codaplus:iter_comp}
\begin{split}
\sum\limits_{s=1}^{S}T_s &\leq O\left( \frac{424}{\eta_0 \min\{\ell,\mu_2\}}
\sum\limits_{s=1}^{S}\exp\left((s-1)\frac{2\mu}{c+2\mu}\right) \right)\\    
& \leq O\bigg(\frac{1}{\eta_0\min\{\ell,\mu_2\}}  \frac{\exp(S\frac{2\mu}{c+2\mu}) -  1}{\exp(\frac{2\mu}{c+2\mu})-1}  \bigg)\\ 
& \overset{(a)}{\leq} \widetilde{O}   \left(\frac{c}{\eta_0\mu\min\{\ell,\mu_2\}}  \max\left\{\frac{\epsilon_0}{\epsilon}, 
\frac{\eta_0 \hat{L} S \sigma^2} 
{(c+2\mu) M \epsilon} \right\}\right) \\
& \leq  \widetilde{O}\left(\max\left\{\frac{(L+\ell) \epsilon_0}{\eta_0 \mu\min\{\ell, \mu_2\} \epsilon},
\frac{(L + \ell)^2 \sigma^2} {\mu^2 \min\{\ell,\mu_2\} M \epsilon}\right\} \right) \\
&\leq  \widetilde{O}\left(\max\left\{\frac{1}{\mu_1 \mu_2^2\epsilon}, 
\frac{1} {\mu_1^2 \mu_2^3 M \epsilon}\right\} \right),
\end{split}
\end{equation} 
where $(a)$ uses the setting of $S$ and $\exp(x) - 1\geq x$, and $\widetilde{O}$ suppresses logarithmic factors. 

$\eta_s = \eta_0 \exp(-(s-1)\frac{2\mu}{c+2\mu}), T_s = \frac{212}{\eta_0 \mu_2} \exp\left((s-1)\frac{2\mu}{c+2\mu}\right)$.

Next, we will analyze the communication cost. 

To assure
$\left(\frac{3\ell^2}{2\mu_2} + \frac{3\ell}{2}\right) 36 \eta^2 I_s^2 K^2 G^2 \leq \frac{\eta_s \sigma^2}{M}$ which we used in above proof, we need to take $I_s = O(\frac{1}{K \sqrt{\eta_s M}})$. 

If $\frac{1}{K \sqrt{\eta_0 M}} \leq O(1)$, for $s\leq S_2 := O(\frac{c+2\mu}{2\mu}\log(M \eta_0))$, we take then $I_s=1$ and correspondingly $E_s = T_s/K I_s = T_s/K$. For $s>S_2$, $I_s = O(\frac{\exp((s-1)\frac{\mu}{c+2\mu})}{K (\eta_0 M)^{1/2}})$, and correspondingly $E_s=T_s/KI_s = O(KM^{1/2} \exp((s-1)\frac{\mu}{c+2\mu}))$. 

We have 
\begin{equation}
\begin{split}
&\sum\limits_{s=1}^{S_2} T_s = 
\sum\limits_{s=1}^{S_2} O\left(\frac{212}{\eta_0} \exp\left((s-1)\frac{2\mu}{c+2\mu}\right) \right) 
 = \widetilde{O} \left( \frac{M}{\mu} \right). 
\end{split}   
\end{equation}  

Thus, the communication complexity can be bounded by
\begin{equation} 
\begin{split}
&\sum\limits_{s=1}^{S_2} T_s + \sum\limits_{s=S_2+1}^{S} \frac{T_s}{I_s} = \widetilde{O}\left(  \frac{K}{\mu}
+ \sqrt{M}K \exp\left(\frac{(s-1)\frac{2\mu}{c+2\mu}}{2}\right) \right) \\
&\leq \widetilde{O}(\frac{M}{\mu} + \sqrt{M}K  \frac{\exp\left(\frac{S}{2}\frac{2\mu}{c+2\mu}\right) - 1}{\exp{\frac{\mu}{c+2\mu}} - 1})  
\leq O\left(\frac{M}{\mu} +  \frac{K}{\mu^{3/2}  \epsilon^{1/2}} \right) .
\end{split} 
\end{equation} 
\end{proof} 

\section{Analysis of CyCp-FedX} 
In this section, we show the analysis of Theorem \ref{thm:fed_pairwise}. 

\begin{proof} 
We denote
\begin{equation}
\begin{split}
&G_1(\w,  \z,\w',  \z') = \nabla_1 \psi(h(\w;\z), h(\w;\z'))^\top \nabla h(\w;\z) \\
&G_2(\w;\z,\w';\z') = \nabla_2 \psi(h(\w;\z), h(\w;\z'))^\top \nabla h(\w;\z')\\
&G^{e,k}_{m,t,1} = \nabla_1\psi(h(\w^{e,k}_{m,t},  \z^{e,k}_{m,t,1}), h_{2,\xi}) \nabla h(\w^{e,k}_{m,t},  \z^{e,k}_{m,t,1}) \\
&G^{e,k}_{m,t,2} = \nabla_2\psi(h_{1,\xi}, h(\w^{e,k}_{m,t},  \z^{e,k}_{m,t,2})) \nabla h(\w^{e,k}_{m,t},  \z^{e,k}_{m,t,2}) 
\end{split}
\end{equation} 
With $\teta = \eta I$,
\begin{equation} 
\begin{split}
&F(\w^{e+1, 0}) - F(\w^{e,0}) \leq \nabla F(\w^{e,0})^\top (\w^{e+1,0} - \w^{e,0}) + \frac{L}{2}\|\w^{e+1, 0} - \w^{e,0}\|^2 \\ 
& = -\teta \nabla F(\w^{e,0})^\top \frac{1}{MI} \sum\limits_{k=1}^K \sum\limits_{m \in \G^{e,k}} \sum\limits_{t=1}^I (G^{e,k}_{m,t,1} + G^{e,k}_{m,t,2}) +  \frac{L}{2}\|\w^{e+1,0} - \w^{e,0}\|^2 \\  
& = -\teta (\nabla F(\w^{e,0})- \nabla F(\w^{e-1,0}) + \nabla F(\w^{e-1,0}))^\top \frac{1}{MI} \sum\limits_{k=1}^K \sum\limits_{m\in \G^{e,k}} \sum\limits_{t=1}^I (G^{e,k}_{m,t,1} + G^{e,k}_{m,t,2}) +  \frac{L}{2}\|\w^{e+1,0} - \w^{e,0}\|^2 \\
& \leq \frac{1}{2L} \|\nabla F(\w^{e,0})- \nabla F(\w^{e-1,0})\|^2
+ 2\teta^2 L \|\frac{1}{MI} \sum\limits_{k=1}^K \sum\limits_{m\in\G^{e,k}} \sum\limits_{t=1}^I (G^{e,k}_{m,t,1} + G^{e,k}_{m,t,2})\|^2 \\
&~~~ -\teta \nabla F(\w^{e-1,0})^\top \bigg(\frac{1}{MI} \sum\limits_{k=1}^K \sum\limits_{m\in\G^{e,k}} \sum\limits_{t=1}^I (G^{e,k}_{m,t,1} + G^{e,k}_{m,t,2})\bigg) + \frac{L}{2}\|\w^{e+1,0} - \w^{e,0}\|^2.
\end{split}
\label{app:fedx1:eq:std_smooth}
\end{equation} 
Denoting $k',k'',m',m'',t',t''$ as random variables corresponding to be the indexes that used passive parts at $k,m,t$, 
\begin{equation}
\begin{split}
&-\E \left[ \teta \nabla F(\w^{e-1,0})^\top \bigg(\frac{1}{MI} \sum\limits_{k=1}^K \sum\limits_{m\in \G^{e,k}} \sum\limits_{t=1}^I (G^{e,k}_{m,t,1} + G^{e,k}_{m,t,2})\bigg) \right] \\ 
&= -\E\bigg[ \teta \nabla F(\w^{e-1,0})^\top \frac{1}{MI} \sum\limits_{k=1}^K \sum\limits_{m\in \G^{e,k}} \sum\limits_{t=1}^I \bigg( G_1(\w^{e,k}_{m,t},  \z^{e,k}_{m,t,1},\w^{e-1, m'}_{t'},  \z^{e-1, k'}_{m',t',2}) 
+ G_2(\w^{e-1, c''}_{t''},  \z^{e-1,k''}_{m'',t'',1},\w^{e,k}_{m,t},  \z^{e,k}_{m,t,2}) \\
&~~~~~~~~~~~~~~~~~~~~~~~~~~~ -G_1(\w^{e-1,0},  \z^{e,k}_{m,t,1},\w^{e-1, 0},  \z^{e-1, k'}_{m',t',2}) 
- G_2(\w^{e-1,0},  \z^{e-1, k''}_{m'',t'',1},\w^{e-1,0},  \z^{e,k}_{m,t,2}) \\
&~~~~~~~~~~~~~~~~~~~~~~~~~~~ +G_1(\w^{e-1,0},  \z^{e,k}_{m,t,1},\w^{e-1, 0},  \z^{e-1, k'}_{m',t',2}) 
+ G_2(\w^{e-1,0},  \z^{e-1, k''}_{m'',t'',1},\w^{e-1,0},  \z^{e,k}_{m,t,2})\bigg)\bigg]\\
&\overset{(a)}{\leq} 4\teta K L^2 \frac{1}{K} \sum\limits_{k=1}^K \frac{1}{MI} \sum\limits_{m\in \G^{e,k}} \sum\limits_{t=1}^I \E(2\|\w^{e,k}_{m,t} - \w^{e-1,0}\|^2 ) +  4\teta K L^2 \frac{1}{K} \frac{1}{MI} \sum\limits_{k'} \sum\limits_{m'\in \S^{e-1,k'}} \sum\limits_{t'} ( \|\w^{e-1,k'}_{m',t'} - \w^{e-1,0}\|^2) \\
&~~~ + \frac{\teta K}{4} \E\|\nabla F(\w^{e-1,0})\|^2 
- \E\left[ \teta K \nabla F(\w^{e-1,0})^\top \left( \frac{1}{K} \sum_{k\in [K]} \nabla F_k(\w^{e-1,0})\right) \right] \\
&\leq 16 \teta K L^2 \|\w^{e,0} - \w^{e-1,0}\|^2 + 16 \teta K L^2 \frac{1}{KMI}\sum_k \sum_{m\in\G^{e,k}}\sum_t \|\w^{e, k}_{m,t} - \w^{e,0}\|^2 \\
&~~~ + 8\teta K L^2 \frac{1}{K M I} \sum_{k'} \sum_{m'\in\S^{e-1,k'}}\sum_t\|\w^{e-1,k'}_{m',t'} - \w^{e-1,0}\|^2 
-\frac{\teta K}{2} \E\|\nabla F(\w^{e-1,0})\|^2, 
\end{split} 
\label{app:fedx1:eq:inner_product}
\end{equation} 
where the (a) holds because
\begin{equation}
\E\left[G_1(\w^{e-1,0},  \z^{e,k}_{m,t,1},\w^{e-1,0},  \z^{e-1, k'}_{m', t',2}) 
+ G_2(\w^{e-1,0},  \z^{e-1, k''}_{m'', t'',1},\w^{e-1,0},  \z^{e,k}_{m,t,2}) \bigg)\right] 
= \nabla F(\w^{e-1,0}). 
\end{equation} 

By the updates of $\w$, we obtain
\begin{equation}
\begin{split}
&\E\|\w^{e+1,0} - \w^{e,0}\|^2 = \teta^2 \left\| \frac{1}{MI} \sum\limits_{k=1}^K \sum_{m\in \G^{e,k}}\sum_t(G^{e,k}_{m,t,1} + G^{e,k}_{m,t,2}) \right\|^2 \\ 
&= \teta^2 \E\left\| \frac{1}{MI} \sum\limits_{k=1}^{K} \sum_{m\in \G^{e,k}}  \sum_t(G_1(\w^{e,k}_{m,t},  \z^{e,k}_{m,t,1},\w^{e-1, k'}_{m',t'},  \z^{e-1, k'}_{m',t',2}) 
+ G_2(\w^{e-1, k''}_{m'',t''},  \z^{e-1, k''}_{m'',t'',1},\w^{e,k}_{m,t},  \z^{e,k}_{m,t,2})  ) \right\|^2 \\
&\leq 3\teta^2 \E\bigg\| \frac{1}{MI} \sum\limits_{k=1}^{K} \sum_{m\in\G^{e,k}}\sum_t(G_1(\w^{e,k}_{m,t},  \z^{e,k}_{m,t,1},\w^{e-1, k'}_{m',t'},  \z^{e-1, k'}_{m',t',2}) 
+ G_2(\w^{e-1, k''}_{m'',t''},  \z^{e-1, k''}_{m'',t'',1},\w^{e,k}_{m,t},  \z^{e,k}_{m,t,2})  )  \\
&~~~~~~~~~~~~ - \frac{1}{MI} \sum\limits_{k=1}^{K} \sum_{m\in \G^{e,k}}\sum_t (G_1(\w^{e-1,0},  \z^{e,k}_{m,t,1},\w^{e-1,0},  \z^{e-1, k'}_{m',t',2}) 
+ G_2(\w^{e-1,0},  \z^{e-1, k''}_{m'',t'',1},\w^{e-1,0},  \z^{e,k}_{m,t,2})  ) \bigg\|^2 \\
& + 3\teta^2 \E\left\| \frac{1}{MI} \sum\limits_{k=1}^{K} \sum_{m\in\G^{e,k}}\sum_t (G_1(\w^{e-1,0},  \z^{e,k}_{m,t,1},\w^{e-1,0},  \z^{e-1, k'}_{t',2}) 
+ G_2(\w^{e-1,0},  \z^{e-1, k''}_{m'',t'',1},\w^{e-1,0},  \z^{e,k}_{m,t,2})  )
 - \nabla F(\w^{e-1,0})\right\|^2 \\
& + 3\teta^2 K^2 \E\|\nabla F(\w^{e-1,0})\|^2,
\end{split}
\end{equation}
which leads to 
\begin{equation} 
\begin{split}
&\E\|\w^{e+1,0} - \w^{e,0}\|^2 \\ 
&\leq 6\teta^2 K^2 \frac{\tiL^2}{KMI} \sum\limits_{k=1}^{K} \sum_{m\in \G^{e,k}}\sum_t\E\|\w^{e,k}_{m,t} - \w^{e,0}\|^2 
+ 6\teta^2 K^2 \frac{\tiL^2}{KMI} \sum\limits_{k=1}^{K} \sum_{m\in \G^{e,k}}\sum_t\E\|\w^{e-1,k'}_{m',t'} - \w^{e-1,0}\|^2
\\
&+ 6\teta^2 K^2 \frac{\tiL^2}{KMI} \sum\limits_{k=1}^{K}\sum_{m\in \G^{e,k}}\sum_t\E\|\w^{e,0} - \w^{e-1,0}\|^2 \\  
&+ 3\teta^2 K^2 \E\left\| \frac{1}{KMI} \sum\limits_{k=1}^{K}\sum_{m\in \G^{e,k}}\sum_t (G_1(\w^{e-1,0},  \z^{e,k}_{m,t,1},\w^{e-1,0},  \z^{e-1, k'}_{m',t',2}) 
+ G_2(\w^{e-1,0},  \z^{e-1, k''}_{m'',t'',1},\w^{e,0},  \z^{e,k}_{m,t,2})  )
 - \nabla F_k(\w^{e-1,0})\right\|^2 \\
& + 3\teta^2 K^2 \E\|\nabla F(\w^{e-1,0})\|^2.   
\end{split} 
\end{equation}
Therefore,
\begin{equation}
\begin{split}
&\frac{1}{E}\sum_{e=1}^E \E\|\w^{e+1,0} - \w^{e,0}\|^2 \\
&\leq \frac{1}{E}\sum_{e=1}^E\bigg[6\teta^2 K^2 \frac{\tiL^2}{KMI} \sum\limits_{k=1}^{K} \sum_{m\in\G^{e,k}}\sum_t\E\|\w^{e,k}_{m,t} - \w^{e,0}\|^2 
+ 6\teta^2 K^2 \frac{\tiL^2}{KMI} \sum\limits_{k=1}^{K} \sum_{m\in\G^{e,k}}\sum_t\E\|\w^{e-1,k}_{m,t} - \w^{e-1,0}\|^2 
\\
&~~~ +  6\teta^2 K^2 \frac{\tiL^2}{KMI} \sum\limits_{k=1}^{K} \sum_{m\in \G^{e,k}}\sum_t\E\|\w^{e-1,k'}_{m',t'} - \w^{e-1,0}\|^2
+ 6\teta^2 K^2 \frac{\sigma^2}{KMI} 
+ 3\teta^2 K^2 \E\|\nabla F(\w^{e-1,0})\|^2 \bigg]. 
\end{split}
\end{equation} 

\begin{equation}
\begin{split}
&\|\w^{e,k}_{m,t} - \w^{e,0}\|^2 = \teta^2 \left\| \frac{1}{I}\sum\limits_{\tau=1}^k \sum_{\nu=1}^{t_\tau} (G^{e,\tau}_{m,\nu,1} + G^{e,\tau}_{m,\nu,2}) \right\|^2 \\ 
&= \teta^2 \E\left\| \frac{1}{I} \sum\limits_{\tau=1}^{k}   \sum_{\nu=1}^{t_\tau} (G_1(\w^{e,\tau}_{m,t},  \z^{e,\tau}_{m,t,1},\w^{e-1, \tau'}_{m',t'},  \z^{e-1, \tau'}_{m',t',2}) 
+ G_2(\w^{e-1, \tau''}_{m'',t''},  \z^{e-1, \tau''}_{m'',t'',1},\w^{e,\tau}_{m,t},  \z^{e,\tau}_{m,t,2})  ) \right\|^2 \\
&\leq 4\teta^2 \E\bigg\| \frac{1}{I} \sum\limits_{\tau=1}^{k} \sum_{\nu=1}^{t_\tau}(G_1(\w^{e,\tau}_{m,t},  \z^{e,\tau}_{m,t,1},\w^{e-1, \tau'}_{m',t'},  \z^{e-1, \tau'}_{m',t',2}) 
+ G_2(\w^{e-1, \tau''}_{m'',t''},  \z^{e-1, \tau''}_{m'',t'',1},\w^{e,\tau}_{m,t},  \z^{e,\tau}_{m,t,2})  )  \\
&~~~~~~~~~~~~ - \frac{1}{I} \sum\limits_{\tau=1}^{k} \sum_{\nu=1}^{t_\tau} (G_1(\w^{e-1,0},  \z^{e,\tau}_{m,t,1},\w^{e-1,0},  \z^{e-1, \tau'}_{m',t',2}) 
+ G_2(\w^{e-1,0},  \z^{e-1, \tau''}_{m'',t'',1},\w^{e-1,0},  \z^{e,\tau}_{m,t,2})  ) \bigg\|^2 \\
& + 4\teta^2 \E\left\| \frac{1}{I} \bigg[\sum\limits_{\tau=1}^{k} \sum_{\nu=1}^{t_\tau} (G_1(\w^{e-1,0},  \z^{e,\tau}_{m,t,1},\w^{e-1,0},  \z^{e-1, \tau'}_{t',2}) 
+ G_2(\w^{e-1,0},  \z^{e-1, \tau''}_{m'',t'',1},\w^{e-1,0},  \z^{e,\tau}_{m,t,2})  )
 - \nabla F_\tau(\w^{e-1,0}) \bigg] \right\|^2 \\
& + 4\teta^2 \E\| \frac{1}{I} \sum\limits_{\tau=1}^{k} \sum_{\nu=1}^{t_\tau} (\nabla F_\tau(\w^{e-1,0}) - \nabla F(\w^{e-1,0}))\|^2 + 4\teta^2 \E\|\frac{1}{I} \sum\limits_{\tau=1}^{k} \sum_{\nu=1}^{t_\tau} \nabla F(\w^{e-1,0})\|^2 \\
&\leq 8\teta^2\tiL^2 \frac{K}{I} \sum\limits_{\tau=1}^{k} \sum_{\nu=1}^{t_\tau} \|\w^{e,\tau}_{m,t} - \w^{e,0}\|^2 +  8\teta^2\tiL^2 \frac{K}{I} \sum\limits_{\tau=1}^{k} \sum_{\nu=1}^{t_\tau} \|\w^{e,0} - \w^{e-1,0}\|^2 \\
& ~~~ + 4\teta^2 k^2 \frac{\sigma^2}{kI} + 4\teta^2 k^2 D^2 + 4\teta^2 k^2 \E\|\nabla F(\w^{e-1,0})\|^2.  
\end{split}
\end{equation}

\begin{equation}
\begin{split}
&\frac{1}{E K M I}\sum_e \sum_k \sum_{m\in \G^{e,k}} \sum_t \|\w^{e,k}_{m,t} - \w^{e,0}\|^2 \\
&\leq  8\teta^2 \tiL^2 \frac{K}{I} \frac{1}{EKMI} \sum\limits_{e=1}^{E} \sum\limits_{\tau=1}^{K} \sum_{m\in\S^{e,\tau}}\sum_t KI \|\w^{e,\tau}_{m,t} - \w^{e,0}\|^2 +  8\teta^2 \tiL^2 \frac{K}{I} \frac{1}{EKMI} \sum_{e=1}^E \sum\limits_{\tau=1}^{K} \sum_{m\in\S^{e,\tau}}\sum_t KI \|\w^{e,0} - \w^{e-1,0}\|^2 \\
&~~~ + 4\teta^2 K \frac{\sigma^2}{I} 
+ 4 \teta^2 K^2 D^2
+ 4\teta^2 K^2 \frac{1}{E} \sum\limits_{e=1}^E \E\|\nabla F(\w^{e-1,0})\|^2. 
\end{split}    
\end{equation}

\begin{equation}
\begin{split}
&\frac{1}{E K M I}\sum_e \sum_k \sum_{m\in \G^{e,k}} \sum_t \|\w^{e,k}_{m,t} - \w^{e,0}\|^2 \\
&\leq 16 \teta^2 \tiL^2 K^2 \frac{1}{E} \sum\limits_{e=1}^E  \|\w^{e,0} - \w^{e-1, 0}\|^2 + 8\teta^2 K \frac{\sigma^2}{I} + 8\teta^2 K^2 \alpha^2 + 8\teta^2 K^2  \E\|\nabla F(\w^{e-1,0})\|^2. 
\end{split}
\end{equation}

Using $\teta \tiL K \leq O(1)$, 
\begin{equation}
\begin{split}
&\frac{1}{E K M I}\sum_e \sum_k \sum_{m\in \G^{e,k}} \sum_t \|\w^{e,k}_{m,t} - \w^{e,0}\|^2 \leq 16\teta^2 K \frac{\sigma^2}{I} + 16\teta^2 K^2\alpha^2 + 16\teta^2 K^2 \|\nabla F(\w^{e-1,0})\|^2
\end{split} 
\end{equation} 
and 
\begin{equation}
\begin{split}
&\frac{1}{E}\sum_{e=1}^E \E\|\w^{e+1} - \w^e\|^2 \leq 16\teta^2 K \frac{\sigma^2}{MI}  + 16\teta^2 K^2 \|\nabla F(\w^{e-1,0})\|^2. 
\end{split}
\end{equation}



Plugging these bounds into (\ref{app:fedx1:eq:std_smooth}) and (\ref{app:fedx1:eq:inner_product}),
\begin{equation}
\begin{split}
&F(\w^{e+1,0}) - F(\w^{e,0}) \leq \tiL^2\|\w^{e,0} - \w^{e-1,0}\|^2 + \teta^2 L^2 K \|\w^{e,0} - \w^{e-1,0}\|^2 \\
&+ 16\teta^2 \tiL^2 \frac{1}{MI} \sum_{m\in \G^{e,k}} \sum_t \|\w^{e,k}_{m,t} - \w^{e,0}\|^2 - \frac{\teta K }{2} \E\|\nabla F(\w^{e-1,0})\|^2. 
\end{split}
\end{equation}

\begin{equation}
\begin{split}
&\frac{1}{E} \sum_{e=1}^E \E\|\nabla F(\w^{e-1})\|^2 \leq 
O\left(\frac{F(\w^{e,0}) - F(\w_*)}{\teta K E}  + \teta^2 K \frac{\sigma^2}{I} + \teta^2 K^2 D^2 + 24 \teta K \frac{\sigma^2}{KMI}  \right).  
\end{split}
\end{equation}
Since $\teta = \eta I$.
Set $\eta = O(\epsilon^2 M)$, $I = O(\frac{1}{M K \epsilon})$, $E = \frac{1}{\epsilon^3})$, iteration complexity is $EKI = O(\frac{1}{M\epsilon^4})$, and communication cost is $EK=O(\frac{K}{\epsilon^3})$.  
\end{proof}


\section{Analysis of CyCp-FedX with PL condition}

Below we show the proof of Theorem \ref{thm:pairwise_PL}.

\begin{proof}
Using the property of PL condition, we have 
\begin{equation}
F(\w^{s+1,0}) - F(\w_*) \leq \frac{1}{2\mu} \|\nabla F(\w^{s+1,0})\|^2
\end{equation}

To ensure $F(\w^{s+1,0}) - F(\w_*) \leq \epsilon_s$, $\|\nabla F(\w^{s+1,0})\|^2 \leq \mu \epsilon_s$. Let $\epsilon_0 = \max(F(\w^{0,0}) - F(\w_*), \|\nabla F(\w^{0,0})\|^2 / \mu)$. We need 
\begin{equation}
\begin{split}
\frac{F(\w^{s,0}) - F(\w_*)}{\teta K E} \leq \frac{\mu \epsilon_s}{3} \\
\teta^2 K^2 \alpha^2 \leq  \frac{\mu \epsilon_s}{3} \\
\teta K \frac{\sigma^2}{KMI} \leq  \frac{\mu \epsilon_s}{3},  
\end{split}
\end{equation}
where 
\begin{equation}
\begin{split}
~ 
\end{split}
\end{equation}

Let $\eta_s = O(\frac{\mu \epsilon_s M}{\sigma^2})$, $I_s=O(\frac{\sigma^2}{MK \sqrt{\mu \epsilon_s}})$, $E_s=(\frac{1}{\mu^{3/2} \epsilon_s^{1/2}})$.  
The number of iterations in each stage is $I_s E_s = \frac{\sigma^2}{MK\mu^2\epsilon_s}$.
To ensure $\epsilon_S \leq \epsilon$, total number of stage is $\log(\mu \epsilon)$.
Thus, the total iteration complexity is
\begin{equation}
\begin{split}
\sum\limits_{s=1}^{S} I_s E_s = O(\sum\limits_{s=1}^{S} \frac{1}{MK\mu^2\epsilon_s}) = O(\frac{1}{MK \mu^2 \epsilon}).   
\end{split}
\end{equation}
Total communication complexity is 
\begin{equation}
\begin{split}
\sum\limits_{s=1}^{S} E_s K = O(\sum\limits_{s=1}^{S} \frac{1}{\mu^{3/2} \epsilon_s^{1/2}} K) = O(\frac{K}{\mu^{3/2} \epsilon^{1/2}}). 
\end{split} 
\end{equation} 
\end{proof}

\section{Data Statistics}

\begin{table}[H]
\centering
\begin{tabular}{c|c|c|c|c}
\hline
\textbf{Dataset} & \textbf{Split} & \textbf{Positive Samples} & \textbf{Negative Samples} & \textbf{Positive \%} \\
\hline
\multirow{4}{*}{CIFAR-10} 
& Training   & 224  & 40{,}505  & 0.55\% \\
& Validation & 505     & 4{,}495   & 10.10\% \\
& Test       & 1{,}000  & 9{,}000   & 10.00\% \\
\hline
\multirow{4}{*}{CIFAR-100} 
& Training   & 227 & 44{,}546  & 0.51\% \\
& Validation & 46     & 4{,}954   & 0.92\% \\
& Test       & 100  & 9{,}900   & 10.00\% \\
\hline
\multirow{4}{*}{ChestMNIST} 
& Training   & 1,994   & 74,480  &  2.61\%\\
& Validation &  625   & 10,594  & 5.57\% \\
& Test       &  1,133  & 21,300  & 5.05\% \\
\hline 
\multirow{4}{*}{Insurance} 
& Training   & 2,863   & 4,028,889    & 0.07\% \\
& Validation & 265    & 1,059,078   & 0.03\% \\
& Test       & 132  & 1,085,053    & 0.01\% \\
\hline
\end{tabular}
\caption{Data statistics (without flipping).} 
\label{tab:cifar-distribution}
\end{table} 

\section{More Experimental Results}
\label{app:experiments}
In this section, we show ablation studies to test the algorithm performance under different settings.

In Table \ref{tab:exp:ablation_num_client}, we show experiments of varying number of clients. The advantages of our methods are preserved. 
\begin{table}[htbp]
\small 
\centering
\caption{ Ablation Number of Clients ($dir$=0.1, $flip$=0)} 
\begin{tabular}{lcccc}
\toprule
&\multicolumn{2}{c}{\textbf{Cifar-10}} & \multicolumn{2}{c}{\textbf{Cifar-100}} \\
& 50 client & 200 clients & 50 client & 200 clients    \\
\midrule
CyCp-FedAVG & $0.7227\pm0.0015$ & $0.7846\pm0.0011$ & $0.8817\pm0.0020$  &  $0.8920\pm0.0018$ \\
A-FedAVG & $0.7418\pm0.0026$ &  $0.7923\pm0.0016$&  $0.8756\pm0.0025$&  $0.8973\pm0.0012$\\
A-SCAFFOLD & $0.7450\pm0.0024$ &  $0.7940\pm0.0013$&  $0.8866\pm0.0017$&  $0.9012\pm0.0015$\\
RS-Minimax  &  $0.7572\pm0.0018$ &  $0.8012\pm0.0006$&  $0.8994\pm0.0013$&  $0.9183\pm0.0013$ \\ 
RS-Pairwise &  $0.7721\pm0.0014$&  $0.8225\pm0.0010$&  $0.9188\pm0.0012$&  $\mathbf{0.9422\pm0.0021}$\\
\hline
CyCp-Minimax & $\mathbf{0.8209\pm0.0012}$ & $\mathbf{0.8478\pm0.0016}$&  $\mathbf{0.9275\pm0.0022}$&  $0.9365\pm0.0016$ \\
CyCp-Pairwise & $\mathbf{0.8367\pm0.0013}$ & $\mathbf{0.8448\pm0.0017}$&  $\mathbf{0.9647\pm0.0015}$&  $\mathbf{0.9653\pm0.0019}$  \\ 
\bottomrule 
\end{tabular}
\label{tab:exp:ablation_num_client}
\end{table} 

Table \ref{tab:exp:repeated_remove} reports the results over three independent rounds of random data removal and, for each round, three runs with different random seeds. Our methods consistently outperform the baselines across all repetitions.
\begin{table}[htbp]
\small 
\centering
\caption{Repeated Experiments on Randomly Removing Data. ($dir$=0.5 for Cifar-10/100)} 
\begin{tabular}{lcccc}
\toprule
& Cifar-10 & Cifar-100     \\
\midrule
CyCp-FedAVG  &  $0.8083\pm0.0175$  & $0.8815\pm0.0073$  \\
A-FedAVG& $0.8123\pm0.0037$ & $0.9006\pm0.0052$ \\
A-SCAFFOLD & $0.8208\pm0.0041$ & $0.9016\pm0.0039$\\
RS-Minimax & $0.8225\pm0.0028$ & $0.9342\pm0.0044$\\ 
RS-Pairwise &$0.8303\pm0.0019$ & $\mathbf{0.9541\pm0.0037}$ \\
\hline
CyCp-Minimax & $\mathbf{0.8460\pm0.0060}$ & $0.9422\pm0.0029$ \\
CyCp-Pairwise &  $\mathbf{0.8503\pm0.0048}$ & $\mathbf{0.9730\pm0.0055}$ \\ 
\bottomrule 
\end{tabular}
\label{tab:exp:repeated_remove}
\end{table}

In Figure \ref{fig:ablation_M}, we divide the data into 10 client groups with 10 clients per group. By varying $M$, which is the number of simultaneously participating clients, we observe that larger values of $M$ lead to faster convergence, confirming the expected speed-up effect.

In Figure \ref{fig:ablation_K}, we randomly partition the clients ($N=100$) into different numbers of groups. In each round, a single client is sampled to participate. We observe that larger values of $K$ generally lead to faster convergence, while the case $K=1$ essentially reduces to the random-sampling baseline. In real-world deployments, naturally formed groups (e.g., by geography, device type, or availability pattern) may exhibit internally similar data distributions. Splitting such groups into smaller ones may cause the model to overfit to a narrow distribution before sufficiently exploring others. Therefore, the optimal choice of $K$ depends on the underlying real-world heterogeneity and grouping structure. A more systematic investigation of how $K$ interacts with real data distributions is an important direction for future work. 
\begin{figure}[htbp]
    \centering
    \includegraphics[width=0.3\linewidth]{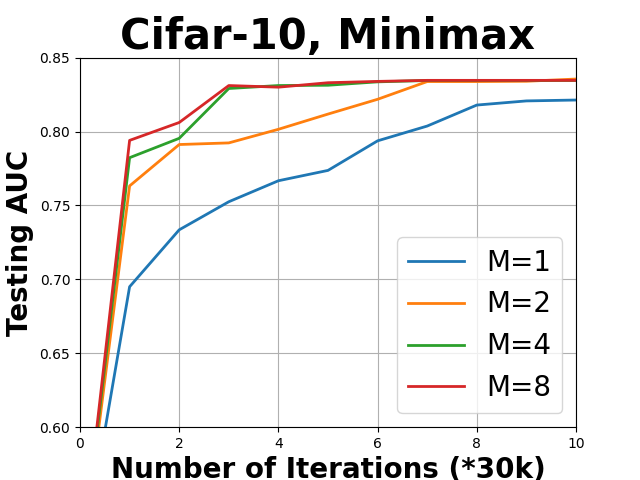}
    \includegraphics[width=0.3\linewidth]{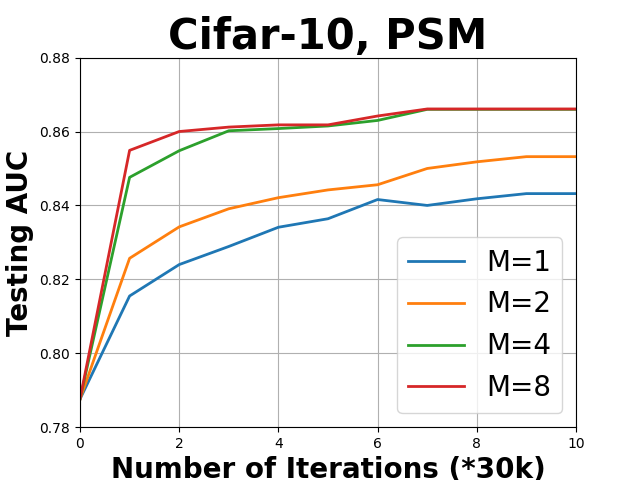} 
    \caption{Ablation Study: Effect of the Number of Simultaneously Participating Clients $M$ ($dir=0.5,flip=0$)}
    \label{fig:ablation_M}
\end{figure}

\begin{figure}[htbp]
    \centering
    \includegraphics[width=0.3\linewidth]{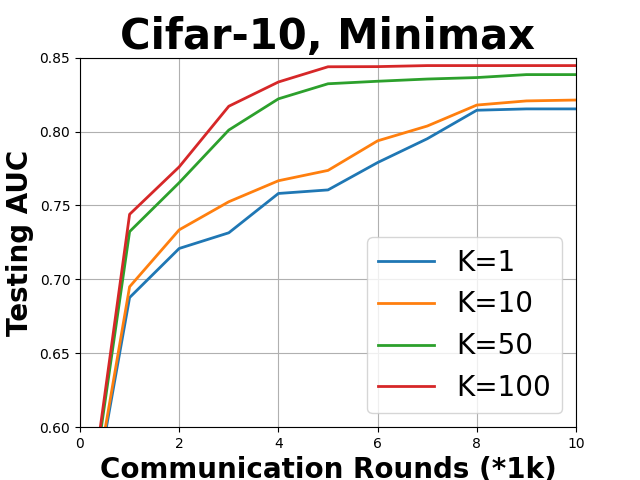}
    \includegraphics[width=0.3\linewidth]{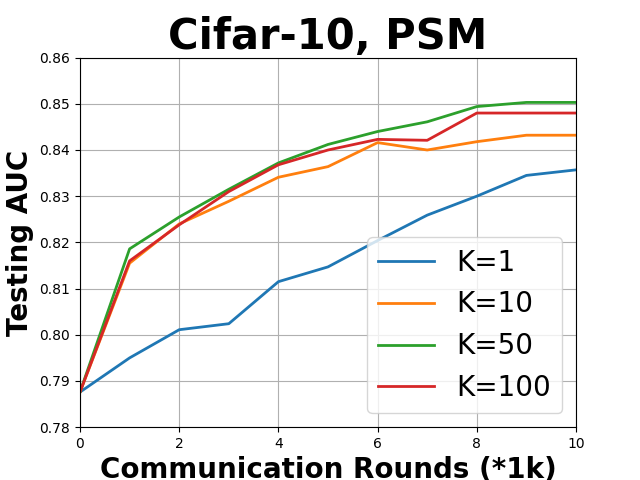} 
    \caption{Ablation Study: Effect of the Number of Client Groups $K$ ($dir=0.5,flip=0$)}
    \label{fig:ablation_K}
\end{figure}

\section{Applicability to a Broad Class of AUC Maximization Formulations}  
\label{app:different_loss}
Table \ref{tab:different_loss} lists multiple AUC-consistent losses and their compatibility with our algorithms. 

\begin{table}[htbp] 
\small 
\centering
\caption{Applicability to Different Formulations of AUC Maximization ($m$ denotes a margin constant as a hyper-parameter, and $\tau$ denotes a scaling hyper-parameter).}  
\begin{threeparttable}
\begin{tabular}{lcccc}
\toprule

Loss & Formulation & Applicable? \\
\hline 
Minimax Loss \citep{ying2016stochastic}  & (\ref{min-max}) & Yes (Algorithm \ref{alg:minimax_one_stage},\ref{alg:minimax_outer}) \\ 
Pairwise Square \citep{gao2013one} &(\ref{auc_general}) with $\psi(a,b)=(m-(a-b))^2$& Yes (Algorithm \ref{alg:FeDXL1},\ref{alg:pairwise_outer})    \\ 
Pairwise Squared Hinge \citep{zhao2011online} &(\ref{auc_general}) with $\psi(a,b)=(m-(a-b))_+^2$& Yes (Algorithm \ref{alg:FeDXL1},\ref{alg:pairwise_outer}) \\
Pairwise Logistic \citep{DBLP:conf/ijcai/GaoZ15}&(\ref{auc_general}) with $\psi(a,b)=\log(1+\exp(-s(a-b)))$& Yes (Algorithm \ref{alg:FeDXL1},\ref{alg:pairwise_outer}) \\  
Pairwise Sigmoid \citep{DBLP:conf/pkdd/CaldersJ07} &(\ref{auc_general}) with $\psi(a,b)=(1+\exp(s(a-b)))^{-1}$& Yes (Algorithm \ref{alg:FeDXL1},\ref{alg:pairwise_outer}) \\
Pairwise Barrier Hinge \citep{charoenphakdee2019symmetric}&\makecell[c]{
(\ref{auc_general}) with $\psi(a,b)=\max\!(m-\tau(m+t),\,$\\
$\max(\tau(t-m),\, m-t))$, where $t=a-b$} & Yes/No\tnote{1} (Algorithm \ref{alg:FeDXL1},\ref{alg:pairwise_outer})\\ 
q-norm hinge loss &(\ref{auc_general}) with $\psi(a,b)=(m-(a-b))^q (q>1)$ & Yes (Algorithm \ref{alg:FeDXL1},\ref{alg:pairwise_outer}) \\
\bottomrule 
\end{tabular}

\begin{tablenotes}
\footnotesize
\item[1]For the Pairwise Barrier Hinge loss, our algorithms remain directly applicable. However, because this surrogate does not satisfy the smoothness assumption, it does not enjoy the linear speed-up guarantee in theory, which is a common limitation of nonsmooth FL objectives \citep{yuan2021federated}.
\end{tablenotes}

\end{threeparttable} 

\label{tab:different_loss} 

\end{table}

\end{document}